\documentclass{article}

\PassOptionsToPackage{numbers, compress}{natbib}

\usepackage{titletoc}

 \usepackage[preprint]{neurips_2026}

\usepackage[utf8]{inputenc} 
\usepackage[T1]{fontenc}    
\usepackage{hyperref}       
\usepackage{url}            
\usepackage{booktabs}       
\usepackage{amsfonts}       
\usepackage{nicefrac}       
\usepackage{microtype}      
\usepackage[table]{xcolor}  
\usepackage{amsmath}
\usepackage{amssymb}
\usepackage{amsthm}
\usepackage{mathtools}
\usepackage{algorithm}
\usepackage{algorithmic}
\usepackage{bm}
\usepackage{graphicx}
\usepackage{enumitem}
\usepackage{array}
\usepackage{multirow}
\usepackage{subcaption}

\newtheorem{theorem}{Theorem}
\newtheorem{proposition}[theorem]{Proposition}

\newtheorem{observation}[theorem]{Observation}

\newcommand{\R}{\mathbb{R}}
\newcommand{\E}{\mathbb{E}}

\newcommand{\N}{\mathcal{N}}

\newcommand{\loss}{\mathcal{L}}
\newcommand{\pdata}{p_{\mathrm{data}}}
\newcommand{\dd}{\mathrm{d}}

\definecolor{lightpink}{RGB}{255, 230, 230}
\newcommand{\epsrow}{\rowcolor{lightpink}}
\newcommand{\epscell}[1]{\cellcolor{lightpink}#1}

\usepackage{tikz}
\usetikzlibrary{arrows.meta,positioning,calc,fit,backgrounds,shapes.geometric}
\usepackage{xcolor}

\title{Exact Posterior Score Estimation \\for Solving Linear Inverse Problems}

\author{
  Abbas Mammadov\thanks{Equal contribution.} \\
  University of Oxford \\
  \And
  Ozgur Kara\footnotemark[1] \\
  UIUC \\
  \And
  Kaan Oktay \\
  fal \\
  \And
  Iskander Azangulov \\
  University of Oxford \\
  \AND
  Adil Kaan Akan \\
  fal \\
  \And
  Hyungjin Chung \\
  EverEx \\
  \And
  James Matthew Rehg \\
  UIUC \\
  \And
  Yee Whye Teh \\
  University of Oxford \\
}

\begin{document}

\maketitle

\begin{abstract}
Diffusion and flow-based models learn powerful data priors by training a denoiser to reverse Gaussian corruption. To use this prior to solve a linear inverse problem, one needs to sample from the posterior, but the score that the prior provides is the unconditional score, not the posterior score. Existing methods either steer a fixed pretrained denoiser with approximate measurement-matching corrections, or train a conditional restoration model that abandons the denoising structure of the prior. We derive the exact posterior score in closed form for linear Gaussian inverse problems under general Gaussian interpolants, and show that posterior sampling reduces to a denoising problem at an operator-dependent shifted pivot under an anisotropic noise covariance. We turn this identity into \textbf{Exact Posterior Score (EPS)}, a denoising training objective that preserves the input/output structure of standard pretraining and can therefore be trained from scratch or fine-tuned from a pretrained denoiser. At inference, EPS uses the same sampler as the underlying backbone, with no likelihood gradients or projections. We evaluate EPS on five linear inverse problems across FFHQ and ImageNet, where it outperforms training-free and training-based baselines on fidelity, perceptual, and distributional metrics, while using roughly an order of magnitude fewer denoiser evaluations than gradient-based posterior samplers.
\end{abstract}

\section{Introduction}
\label{sec:intro}

Linear inverse problems, in which an unknown signal $x_0$ must be recovered from a noisy linear measurement $y = A x_0 + \eta$ with known forward operator $A$ and observation noise $\eta$, are pervasive across imaging and the sciences, including compressive sensing \cite{donoho2006compressed,candes2006robust}, accelerated medical imaging \cite{lustig2007sparse,knoll2020fastmri}, super-resolution \cite{dong2014srcnn,wang2021realesrgan}, deblurring \cite{nah2017deepdeblur,kupyn2018deblurgan}, and inpainting in computational photography \cite{bertalmio2000image,pathak2016context}. The forward operator $A$ is typically ill-conditioned or rank-deficient, so many candidate signals are consistent with the same observation, and the right object to recover is the posterior $p(x_0 | y)$ rather than any single point estimate. The posterior captures uncertainty over reconstructions, supports downstream decisions, and exposes the trade-off between data fidelity and prior plausibility.

Diffusion and flow-based generative models offer powerful, expressive data priors for this task, learning a denoising trajectory from noise back to clean samples \cite{sohldickstein2015deep,ho2020denoising,song2020score,lipman2022flow,albergo2023stochastic,liu2022flow,karras2022elucidating}. The central question is how to turn this trajectory into a sampler from $p(x_0 | y)$. The reverse-time sampler needs the \emph{posterior} score $\nabla_{x_t} \log p(x_t | y)$, not the unconditional prior score $\nabla_{x_t} \log p(x_t)$ that diffusion training provides. Replacing the former by an approximation introduces bias at every step, which compounds into oversmoothing, hallucinated structure, or poorly calibrated uncertainty. Existing methods fall into two broad camps.

\begin{figure}[t]
\vspace{-1mm}
\centering
\includegraphics[width=0.9\textwidth]{eps_main.jpeg}
\caption{
\textbf{EPS turns posterior sampling into denoising with the right query geometry.}
Instead of denoising an isotropic query at $x_t$, the measurement shifts the query to the posterior pivot $\mu_\star$ and reshapes the noise into an anisotropic covariance $\Sigma_\star$. Measured directions become more certain, while unobserved directions remain uncertain. EPS trains a denoiser for this anisotropic geometry and reuses the backbone's unconditional sampler unchanged. The first step of the resulting sampler corresponds to an estimate of the posterior mean $\mathbb{E}[x_0|y]$, which typically has higher PSNR but is over-smoothed, while the sample produced at the end (in this case, 100 steps) has more details.
}
\vspace{-6mm}
\label{fig:eps-main}
\end{figure}

\emph{Training-free methods} keep a pretrained denoising backbone fixed and add a measurement-matching update at each reverse step. The prototypical example is Diffusion Posterior Sampling (DPS)~\cite{chung2022diffusion}, which differentiates a measurement loss through the unconditional denoiser, with variants using projections, denoised estimates,  or task-specific correction rules~\cite{chung2022diffusion,wang2022zero,song2023pseudoinverseguided,kawar2022denoising,zhang2025improving,he2023manifoldpreservingguideddiffusion,chung2022improving,mardani2023variational}. This route is attractive because it is zero-shot and inherits the strong unconditional prior of a pretrained model. However, the added update is only an approximation to the true measurement-matching score, and even moment-matching variants that track anisotropic uncertainty in $p(x_0 | x_t)$ \cite{rozet2024learning,rout2024beyond,daras2024survey} only refine the unconditional denoising query. Asymptotically exact alternatives based on sequential Monte Carlo~\cite{wu2023practical, cardoso2023monte, dou2024diffusion} avoid this approximation, but at the cost of running many particle trajectories per observation.

\emph{Training-based methods} sidestep the approximation question by training a new model specifically for the inverse problem, with the measurement $y$ as input. This family includes conditional diffusion models that learn a measurement-conditional score~\cite{saharia2022palette,batzolis2021conditional,elata2025invfusion}, bridge-based methods that build a trajectory directly from $y$ to the data~\cite{delbracio2023inversion,liu20232}, and methods that distill a posterior sampler from a pretrained diffusion prior~\cite{mammadov2024amortized, feng2023score}. In all cases, the network is exposed to the raw measurement rather than to the geometry of the exact posterior denoising query, so it must learn the operator dependence end-to-end.

In contrast, we observe that for linear Gaussian inverse problems the exact posterior score has a closed form, with a simple structural meaning. As Figure~\ref{fig:eps-main} illustrates, posterior sampling is still a denoising problem, but with a measurement-aware input and an operator-dependent anisotropic noise covariance. We use this identity to define \textbf{Exact Posterior Score (EPS)}, a denoising training objective whose target and loss match those of standard pretraining, with the input replaced by a measurement-dependent pivot. EPS can therefore be trained from scratch or fine-tuned efficiently from a pretrained denoiser. At inference, it runs the underlying backbone's sampler unchanged, with no likelihood gradients, projections, or inner optimization.

Our contributions are as follows.
\begin{itemize}[leftmargin=*,itemsep=1pt,topsep=2pt]
    \item We derive the exact posterior score for linear Gaussian inverse problems under general Gaussian interpolants, and show that posterior sampling reduces to denoising at an operator-dependent shifted pivot under an anisotropic covariance. We also pinpoint where existing approximate-guidance methods deviate from this exact identity.
    \item We turn the identity into EPS, a denoising training objective and sampling algorithm that preserves the structure of standard pretraining while incorporating the exact posterior geometry. EPS can be trained from scratch or fine-tuned from a pretrained checkpoint, and it uses the underlying backbone's sampler at inference.
    \item We evaluate EPS on five linear inverse problems across FFHQ and ImageNet, reporting pointwise fidelity, perceptual quality, and distributional calibration metrics, and find consistent improvements over both training-free and training-based baselines at substantially smaller sampling budgets.
\end{itemize}

\section{Background}
\label{sec:background}

\subsection{Generative Models as Denoising Trajectories}
\label{sec:bg:interpolants}
Let $x_0\sim\pdata$ and $\epsilon\sim\N(0,I_d)$. A stochastic interpolant defines
\begin{equation}
    x_t = \alpha_t x_0 + \beta_t \epsilon,
    \qquad t\in[0,T],
    \label{eq:interpolant}
\end{equation}
with $(\alpha_0,\beta_0)=(1,0)$ and $(\alpha_T,\beta_T)=(0,1)$ in the data-to-noise convention. Different choices of $(\alpha_t,\beta_t)$ recover variance-preserving diffusion \cite{ho2020denoising,song2020score}, rectified flow \cite{liu2022flow}, and EDM \cite{karras2022elucidating}, with corresponding network parameterizations as a score, noise predictor, velocity, or EDM-preconditioned denoiser; the conversion between them is deterministic once $(\alpha_t,\beta_t)$ are fixed, so we write all backbones through a learned denoiser $D_\theta$. The forward kernel is
\begin{equation}
    p(x_t|x_0)=\N(x_t;\alpha_t x_0,\beta_t^2 I_d).
\end{equation}
The marginal score $s_t(x)=\nabla_x\log p(x)$ is equivalent to the optimal denoiser through Tweedie's identity,
\begin{equation}
    D_t(x) \coloneqq \E[x_0|x_t=x]
    = \frac{1}{\alpha_t}\big(x+\beta_t^2 s_t(x)\big),
    \label{eq:tweedie-uncond}
\end{equation}
and the reverse velocity follows from the same identity:
\begin{equation}
    v_t(x)=\E[\dot\alpha_t x_0+\dot\beta_t\epsilon\mid x_t=x]
    =\frac{\dot\beta_t}{\beta_t}\,x + \left(\dot\alpha_t-\frac{\alpha_t\dot\beta_t}{\beta_t}\right)D_t(x).
    \label{eq:velocity-uncond}
\end{equation}
This viewpoint sets up EPS: if the posterior denoiser is known, the posterior score and posterior velocity follow immediately from the same identities, and the base model's sampler can be reused.

\subsection{Inverse Problems and Approximate Posterior Sampling}
\label{sec:bg:inverse}
We observe
\begin{equation}
    y=A x_0+\eta,\qquad \eta\sim\N(0,\sigma_y^2 I_m),
    \label{eq:ip}
\end{equation}
for a known linear operator $A\in\R^{m\times d}$. This notation covers masks, downsampling, and convolutional blur operators, and includes rank-deficient settings where many signals are consistent with the same observation. The target is the posterior $p(x_0|y)\propto p(y|x_0)\pdata(x_0)$. A reverse sampler should therefore use
\begin{equation}
    \nabla_{x_t}\log p(x_t|y)=\nabla_{x_t}\log p(x_t)+\nabla_{x_t}\log p(y|x_t),
    \label{eq:bayes-score}
\end{equation}
where the second term is the measurement-matching score. Zero-shot solvers approximate it with the template $\nabla_{x_t}\log p(y|x_t)\approx -L_t M_t / G_t$ \cite{daras2024survey}, where $M_t$ is a measurement residual, $L_t$ lifts it back to sample space, and $G_t$ is the guidance strength.

\section{Exact Posterior Score (EPS)}
\label{sec:method}

We now derive the posterior score and convert it into a training objective. The derivation has two pieces. First, a Gaussian product identifies the correct pivot and covariance. Second, an anisotropic Tweedie identity turns the resulting smoothed density into a denoiser.

\subsection{Anisotropic Tweedie Identity}
\label{sec:method:tweedie}

For a positive definite covariance matrix $\Sigma$, define the Gaussian-smoothed data density
\begin{equation}
    p_{\mathrm{data}}^{\Sigma}(\mu)
    = \int \N(\mu;x_0,\Sigma)\,\pdata(x_0)\,\dd x_0,
\end{equation}
and the corresponding optimal denoiser
\begin{equation}
    D_{\Sigma}(\mu)
    = \E[x_0| x_0+\xi=\mu],\qquad \xi\sim\N(0,\Sigma).
\end{equation}
Then, by the anisotropic form of Tweedie's formula \cite{robbins1992empirical,efron2011tweedie},
\begin{equation}
    D_{\Sigma}(\mu)=\mu+\Sigma\nabla_\mu\log p_{\mathrm{data}}^{\Sigma}(\mu),
    \label{eq:anisotropic-tweedie}
\end{equation}
which is the usual Tweedie formula when $\Sigma$ is a scalar multiple of the identity. EPS uses this identity with a covariance that is not chosen by hand, but rather is derived from the inverse problem formulation.

\subsection{Closed-Form Posterior Score}
\label{sec:method:theory}

The posterior marginal can be written as
\begin{equation}
    p(x_t|y)\;\propto\;\int p(x_t|x_0)p(y|x_0)\pdata(x_0)\,\dd x_0.
    \label{eq:posterior-marginal-intro}
\end{equation}
Both factors inside the integral are Gaussian in $x_0$. Completing the square gives the following result.

\begin{theorem}[Exact posterior score]
\label{thm:main}
Under the linear Gaussian inverse problem \eqref{eq:ip} and the interpolant \eqref{eq:interpolant}, the posterior score at time $t$ is
\begin{equation}
    \nabla_{x_t}\log p(x_t|y)
    =\frac{1}{\beta_t^2}\Big(\alpha_t D_{\Sigma_\star(t)}(\mu_\star(x_t,y,t))-x_t\Big),
    \label{eq:posterior-score-main}
\end{equation}
where
\begin{equation}
    \Sigma_\star(t)
    =\left(\frac{\alpha_t^2}{\beta_t^2}I_d+\frac{1}{\sigma_y^2}A^\top A\right)^{-1},
    \qquad
    \mu_\star(x_t,y,t)
    =\Sigma_\star(t)\left(\frac{\alpha_t}{\beta_t^2}x_t+\frac{1}{\sigma_y^2}A^\top y\right).
    \label{eq:sigma-mu-star}
\end{equation}
Equivalently, $D_{\Sigma_\star(t)}(\mu_\star(x_t,y,t))=\E[x_0|x_t,y]$.
\end{theorem}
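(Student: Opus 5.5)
The strategy is to write the posterior marginal \eqref{eq:posterior-marginal-intro} as a Gaussian smoothing of $\pdata$ with covariance $\Sigma_\star(t)$. Once it has that form, the anisotropic Tweedie identity \eqref{eq:anisotropic-tweedie} together with the chain rule gives the score.

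\textbf{Combining the two Gaussian factors.} As functions of $x_0$, the factors $p(x_t|x_0)=\N(x_t;\alpha_t x_0,\beta_t^2 I_d)$ and $p(y|x_0)=\N(y;Ax_0,\sigma_y^2 I_m)$ are both Gaussian. Expanding the exponents, the quadratic part in $x_0$ is $-\tfrac12 x_0^\top\Sigma_\star^{-1}x_0$ with $\Sigma_\star^{-1}=\tfrac{\alpha_t^2}{\beta_t^2}I_d+\tfrac{1}{\sigma_y^2}A^\top A$. This matrix is positive definite whenever $\alpha_t,\beta_t>0$, because the identity term is strictly positive. The linear part is $x_0^\top\big(\tfrac{\alpha_t}{\beta_t^2}x_t+\tfrac1{\sigma_y^2}A^\top y\big)=x_0^\top\Sigma_\star^{-1}\mu_\star$. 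Completing the square then gives
\[
p(x_t|x_0)\,p(y|x_0)=C(x_t,y)\,\N(\mu_\star;x_0,\Sigma_\star),
\]
where the prefactor $C$ collects the remaining terms. Up to constants that depend on neither $x_t$ nor $x_0$,
\[
\log C(x_t,y)=-\frac{\|x_t\|^2}{2\beta_t^2}+\tfrac12\mu_\star^\top\Sigma_\star^{-1}\mu_\star+\text{const}.
\]
The $\|y\|^2$ term and the log-determinants are absorbed into the constant, which is legitimate because we only differentiate in $x_t$. Integrating against $\pdata$ yields
\[
p(x_t|y)\propto C(x_t,y)\,p_{\mathrm{data}}^{\Sigma_\star}(\mu_\star(x_t,y,t)).
\]

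\textbf{Differentiating in $x_t$.} Since $\mu_\star$ is affine in $x_t$, we have $\partial\mu_\star/\partial x_t=\tfrac{\alpha_t}{\beta_t^2}\Sigma_\star$, and $\Sigma_\star$ is symmetric. Hence
\[
\nabla_{x_t}\log C=-\frac{x_t}{\beta_t^2}+\frac{\alpha_t}{\beta_t^2}\Sigma_\star\Sigma_\star^{-1}\mu_\star=\frac{-x_t+\alpha_t\mu_\star}{\beta_t^2}.
\]
The smoothed-density term contributes $\tfrac{\alpha_t}{\beta_t^2}\Sigma_\star\nabla_\mu\log p_{\mathrm{data}}^{\Sigma_\star}(\mu_\star)$. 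Adding the two contributions and applying \eqref{eq:anisotropic-tweedie} gives
\[
\nabla_{x_t}\log p(x_t|y)=\frac{1}{\beta_t^2}\Big(\alpha_t\big(\mu_\star+\Sigma_\star\nabla\log p_{\mathrm{data}}^{\Sigma_\star}(\mu_\star)\big)-x_t\Big)=\frac{1}{\beta_t^2}\big(\alpha_t D_{\Sigma_\star}(\mu_\star)-x_t\big),
\]
which is \eqref{eq:posterior-score-main}.

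\textbf{The equivalent form $D_{\Sigma_\star}(\mu_\star)=\E[x_0|x_t,y]$.} The same factorization shows that $p(x_0|x_t,y)\propto\N(\mu_\star;x_0,\Sigma_\star)\pdata(x_0)$, because $C$ does not depend on $x_0$. This is exactly the posterior of $x_0$ given the observation $x_0+\xi=\mu_\star$ with $\xi\sim\N(0,\Sigma_\star)$. Its mean is therefore $D_{\Sigma_\star}(\mu_\star)$ by the definition of $D_\Sigma$.

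\textbf{Main obstacle.} The delicate step is the bookkeeping for $C(x_t,y)$: the $\mu_\star^\top\Sigma_\star^{-1}\mu_\star$ term must be kept, since it depends on $x_t$, and its gradient must be combined correctly with the chain-rule factor $\tfrac{\alpha_t}{\beta_t^2}\Sigma_\star$. The analytic side conditions are standard and need only be noted:
\begin{itemize}[leftmargin=*,itemsep=1pt,topsep=2pt]
    \item differentiation under the integral is justified by Gaussian tails;
    \item positivity $\alpha_t,\beta_t>0$ holds on the open interval $t\in(0,T)$.
\end{itemize}
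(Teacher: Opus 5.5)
Your proposal is correct and follows essentially the same route as the paper. Both arguments complete the square to factor $p(x_t|x_0)p(y|x_0)=C(x_t,y)\,\N(x_0;\mu_\star,\Sigma_\star)$, write $p(x_t|y)\propto C\cdot p_{\mathrm{data}}^{\Sigma_\star}(\mu_\star)$, differentiate the normalizer (your $\tfrac12\mu_\star^\top\Sigma_\star^{-1}\mu_\star$ is the paper's $\tfrac12 b_t^\top\Sigma_\star b_t$) and the smoothed density by the chain rule, and apply anisotropic Tweedie so the pivot terms cancel. The identification $D_{\Sigma_\star}(\mu_\star)=\E[x_0|x_t,y]$ is also obtained the same way, and your remark that positive definiteness of $\Sigma_\star^{-1}$ needs $\alpha_t\neq 0$ (not only $\beta_t>0$, as the paper says) is a fair sharpening at the endpoint $t=T$.
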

The proof is given in Appendix~\ref{app:derivation:main-proof}.
The theorem says that posterior sampling is still a denoising problem, but not the isotropic one seen in unconditional pretraining. The denoiser must be queried at a measurement-aware input $\mu_\star$ under a measurement-aware anisotropic noise covariance $\Sigma_\star$.

\begin{proposition}[Posterior velocity]
\label{prop:posterior-velocity}
The posterior velocity associated with the interpolant \eqref{eq:interpolant} is
\begin{equation}
    v_t^y(x_t)
    =\E[\dot\alpha_t x_0+\dot\beta_t\epsilon| x_t,y]
    =\frac{\dot\beta_t}{\beta_t}x_t+\left(\dot\alpha_t-\frac{\alpha_t\dot\beta_t}{\beta_t}\right)D_{\Sigma_\star(t)}(\mu_\star(x_t,y,t)).
    \label{eq:posterior-velocity}
\end{equation}
Thus estimating the exact posterior denoiser is equivalent to estimating the exact posterior flow.
\end{proposition}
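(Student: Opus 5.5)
The plan is to condition the interpolant identity on $(x_t,y)$ and then substitute the posterior mean supplied by Theorem~\ref{thm:main}. First I need the conditional independence structure. Since $x_t=\alpha_t x_0+\beta_t\epsilon$ and $y=Ax_0+\eta$, with $\epsilon$ and $\eta$ independent of each other and of $x_0$, we get $y\perp x_t\mid x_0$. This is exactly the factorization used in \eqref{eq:posterior-marginal-intro}. Linearity of conditional expectation then gives
\begin{equation*}
v_t^y(x_t)=\dot\alpha_t\,\E[x_0|x_t,y]+\dot\beta_t\,\E[\epsilon|x_t,y].
\end{equation*}

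Next I eliminate $\epsilon$. Because $\beta_t>0$ for $t\in(0,T]$, the interpolant can be solved for the noise: $\epsilon=(x_t-\alpha_t x_0)/\beta_t$ holds pathwise. Here $x_t$ is measurable with respect to the conditioning, so
\begin{equation*}
\E[\epsilon|x_t,y]=\frac{1}{\beta_t}\big(x_t-\alpha_t\E[x_0|x_t,y]\big).
\end{equation*}
Substituting this into the expression above and collecting terms gives $\frac{\dot\beta_t}{\beta_t}x_t+\big(\dot\alpha_t-\frac{\alpha_t\dot\beta_t}{\beta_t}\big)\E[x_0|x_t,y]$. This is the same algebra that produces \eqref{eq:velocity-uncond}, only with the extra conditioning on $y$.

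Finally I invoke the last assertion of Theorem~\ref{thm:main}, namely $\E[x_0|x_t,y]=D_{\Sigma_\star(t)}(\mu_\star(x_t,y,t))$, which yields \eqref{eq:posterior-velocity}. For the "equivalently" clause, note that the map sending the denoiser to the velocity is affine with coefficient $\dot\alpha_t-\alpha_t\dot\beta_t/\beta_t=(\dot\alpha_t\beta_t-\alpha_t\dot\beta_t)/\beta_t$. This coefficient is nonzero whenever $\alpha_t/\beta_t$ is strictly monotone, which holds for all the standard schedules; under that condition the map is invertible pointwise, so an estimate of either quantity determines the other.

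The only step needing care is justifying the pathwise elimination of $\epsilon$ and the use of conditional expectations given the continuous pair $(x_t,y)$. The endpoint $t=0$, where $\beta_0=0$, has to be excluded or handled as a limit. I would state the result for $t\in(0,T]$ and otherwise treat the argument as routine.
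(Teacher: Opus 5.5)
Your proposal is correct and matches the paper's proof essentially line by line. Both take the conditional expectation of $\dot\alpha_t x_0+\dot\beta_t\epsilon$, eliminate $\E[\epsilon|x_t,y]$ via $\epsilon=(x_t-\alpha_t x_0)/\beta_t$, and substitute $\E[x_0|x_t,y]=D_{\Sigma_\star(t)}(\mu_\star(x_t,y,t))$ from Theorem~\ref{thm:main}. Your extra remarks on excluding $t=0$ and on invertibility of the affine map go beyond what the paper states; note that invertibility needs $\dot\alpha_t\beta_t-\alpha_t\dot\beta_t\neq 0$ at each $t$, which is slightly stronger than strict monotonicity of $\alpha_t/\beta_t$.
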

The proof is given in Appendix~\ref{app:derivation:velocity-proof}.

\paragraph{Posterior pivot.} We call $\mu_\star$ the \emph{posterior pivot} because the proof of Theorem~\ref{thm:main} (Appendix~\ref{app:derivation:main-proof}) shows that the joint quadratic form in $(x_t,y,x_0)$ pivots about $\mu_\star(x_t,y,t)$: Completing the square sends the entire dependence on $x_0$ into a single Gaussian centered at $\mu_\star$ with covariance $\Sigma_\star(t)$, while $x_t$ and $y$ enter only through this pivot and a multiplicative normalizer. Equivalently, $\mu_\star$ is the precision-weighted Bayesian fusion of the current state and the measurement under the two Gaussian likelihoods, before the data prior $\pdata$ is folded in by the denoiser $D_{\Sigma_\star(t)}$. The pivot is therefore the only summary statistic of $(x_t,y)$ that the posterior denoiser needs to see.

\paragraph{Computing $\mu_\star$.} Although \eqref{eq:sigma-mu-star} involves inverting a $d \times d$ matrix in general, every operator we consider admits a fast structured solve. For binary inpainting masks $A^\top A$ is diagonal, for downsampling it is block-diagonal, and for circular convolutions used in deblurring it is diagonalized by the FFT. The per-step cost of computing $\mu_\star$ is therefore negligible relative to a denoiser forward pass. For more general operators, $\mu_\star$ can still be obtained efficiently via conjugate gradient applied to the symmetric positive-definite system $(\alpha_t^2/\beta_t^2\, I + \sigma_y^{-2} A^\top A)\,\mu_\star = (\alpha_t/\beta_t^2)\, x_t + \sigma_y^{-2} A^\top y$, which only requires matrix-vector products with $A$ and $A^\top$. We measure these costs directly in Appendix~\ref{app:additional-experiments:runtime}, where the structured $\mu_\star$ solve adds only sub-millisecond overhead per sampling step.

\subsection{What was missing in Training-Free Methods}
\label{sec:method:exactness}

Theorem~\ref{thm:main} also pinpoints what existing training-free methods miss. Combining \eqref{eq:posterior-score-main} with the unconditional Tweedie identity \eqref{eq:tweedie-uncond}, the measurement-matching score can be written as a difference of two denoisers,
\begin{equation}
\small
    \nabla_{x_t}\log p(y | x_t)
    \;=\; \nabla_{x_t}\log p(x_t | y) - \nabla_{x_t}\log p(x_t)
    \;=\; \frac{\alpha_t}{\beta_t^2}\Big(\, D_{\Sigma_\star(t)}(\mu_\star(x_t,y,t)) \;-\; D_t(x_t) \,\Big).
    \label{eq:exact-guidance}
\end{equation}
The exact guidance is the gap between the \emph{posterior} denoiser evaluated at the pivot $\mu_\star$ under the anisotropic covariance $\Sigma_\star(t)$, and the unconditional denoiser evaluated at $x_t$. Methods that follow the template from~\cite{daras2024survey}, including DPS, DDNM, $\Pi$GDM, and moment-matching variants \cite{chung2022diffusion,wang2022zero,song2023pseudoinverseguided,rozet2024learning,rout2024beyond}, all approximate the first denoiser using only the second, by differentiating a measurement loss through $D_t(x_t)$, projecting $D_t(x_t)$ onto an affine subspace, or fitting a Gaussian to $p(x_0 | x_t)$ and comparing it to $y$. They thus evaluate the network at a \emph{different input} than the exact identity, querying $D_\theta$ at $x_t$ rather than at the pivot $\mu_\star$, which itself depends on $y$. Even moment-matching methods that use anisotropic information approximate $p(x_0 | x_t)$, the denoising distribution \emph{before} the measurement is incorporated, whereas the exact object is $p(x_0 | x_t, y)$, the denoising distribution \emph{after} the measurement is fused into the kernel. The two coincide only in degenerate cases such as isotropic $A^\top A$ or high-noise limits, but not generically.

\subsection{The EPS Training Objective}
\label{sec:method:objective}

Theorem~\ref{thm:main} reduces posterior sampling to a single object, the anisotropic posterior denoiser $D_{\Sigma_\star(t)}(\mu_\star(x_t,y,t))$. Two of the three quantities involved are analytically tractable. Given $(x_t, y, t)$ and the operator parameters $(A, \sigma_y)$, the pivot $\mu_\star(x_t,y,t)$ and covariance $\Sigma_\star(t)$ are deterministic, closed-form functions defined by \eqref{eq:sigma-mu-star}, and for the structured operators we consider, both can be computed in essentially the cost of an FFT or an element-wise solve (Section~\ref{sec:method:theory}). What we cannot compute analytically is the denoiser itself. The expression $D_{\Sigma_\star(t)}(\mu) = \E[x_0 | x_0 + \xi = \mu]$ with $\xi \sim \N(0, \Sigma_\star(t))$ requires the data distribution $\pdata$, which is accessible only through data samples, while the pretrained unconditional denoiser was trained with isotropic noise so is not directly applicable (but can be used to initialize the EPS training). 

Following standard approaches for training diffusion models, we therefore learn it by regression. 
To enable efficient noising of $x_0$ using the anistropic noise covariance, we note the following result: 

\begin{proposition}[Isotropic simulation of the anisotropic pivot]
\label{prop:isotropic-pivot}
Let $x_t=\alpha_t x_0+\beta_t\epsilon$ with $\epsilon\sim\N(0,I_d)$ and
$y=Ax_0+\sigma_y\eta$ with $\eta\sim\N(0,I_m)$, independently. Define
$\mu_\star(x_t,y,t)$ and $\Sigma_\star(t)$ as in \eqref{eq:sigma-mu-star}. Then,
conditional on $x_0$,
\begin{equation}
    \mu_\star(x_t,y,t)|x_0
    \sim \N(x_0,\Sigma_\star(t)).
    \label{eq:isotropic-pivot-law}
\end{equation}
Thus the anisotropic corruption required by the exact posterior denoiser is induced
by the closed-form pivot construction itself; it is not necessary to sample
anisotropic noise directly.
\end{proposition}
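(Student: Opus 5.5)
Conditional on $x_0$, the pivot $\mu_\star(x_t,y,t)$ is an affine function of the independent Gaussian vectors $\epsilon$ and $\eta$, so it is Gaussian. It therefore suffices to compute its conditional mean and covariance. Write $P_t \coloneqq \Sigma_\star(t)^{-1} = \frac{\alpha_t^2}{\beta_t^2}I_d + \frac{1}{\sigma_y^2}A^\top A$; this is symmetric positive definite for $t<T$ (where $\alpha_t>0$), so $\Sigma_\star(t)$ is well defined. Substituting $x_t=\alpha_t x_0+\beta_t\epsilon$ and $y=Ax_0+\sigma_y\eta$ into \eqref{eq:sigma-mu-star} gives
\begin{equation*}
    \mu_\star = \Sigma_\star(t)\Big(\tfrac{\alpha_t^2}{\beta_t^2}x_0 + \tfrac{1}{\sigma_y^2}A^\top A x_0\Big) + \Sigma_\star(t)\Big(\tfrac{\alpha_t}{\beta_t}\epsilon + \tfrac{1}{\sigma_y}A^\top \eta\Big)
    = x_0 + \Sigma_\star(t)\,\zeta,
\end{equation*}
where $\zeta \coloneqq \frac{\alpha_t}{\beta_t}\epsilon + \frac{1}{\sigma_y}A^\top\eta$. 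The first bracket is exactly $P_t x_0$, so it collapses to $x_0$; this step is the whole point of the precision-weighted construction.

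Since $\E[\zeta]=0$, the conditional mean is $x_0$. For the covariance, independence of $\epsilon$ and $\eta$ gives
\begin{equation*}
    \mathrm{Cov}(\zeta) = \tfrac{\alpha_t^2}{\beta_t^2}I_d + \tfrac{1}{\sigma_y^2}A^\top A = P_t .
\end{equation*}
Hence $\mathrm{Cov}(\mu_\star\mid x_0)=\Sigma_\star P_t \Sigma_\star^\top = \Sigma_\star(t)$, using the symmetry of $\Sigma_\star$. Together with Gaussianity, this yields \eqref{eq:isotropic-pivot-law}.

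The only point needing care is recognizing that the noise covariance equals the precision $P_t$ itself, so the sandwich collapses. This is the usual fact that an inverse-variance-weighted average of unbiased Gaussian observations has covariance equal to the inverse total precision. Everything else is routine algebra. The concluding remark follows because one can sample $\epsilon$ and $\eta$ isotropically, form $x_t$ and $y$, and compute $\mu_\star$, which produces a draw from $\N(x_0,\Sigma_\star(t))$.
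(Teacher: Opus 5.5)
Your proposal is correct and matches the paper's proof essentially line for line. Both substitute $x_t$ and $y$ into the pivot to get $\mu_\star = x_0 + \Sigma_\star(t)\bigl(\tfrac{\alpha_t}{\beta_t}\epsilon + \tfrac{1}{\sigma_y}A^\top\eta\bigr)$, and then note that the noise covariance $\Sigma_\star P_t \Sigma_\star$ (the paper writes $\Lambda_t$ for your $P_t$) collapses to $\Sigma_\star$. You are also slightly more careful than the paper on two points: you say explicitly that the conditional law is Gaussian because $\mu_\star$ is affine in $(\epsilon,\eta)$, and you note that invertibility of the precision needs $\alpha_t\neq 0$, in addition to $\beta_t>0$ (i.e.\ $t>0$) for it to be defined.
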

The proof is given in Appendix~\ref{app:derivation:isotropic-pivot}. This result is related to existing GP and linear regression literature (Appendix~\ref{app:derivation:gp}), and shows that the required anistropic noising can be computed efficiently.

\paragraph{The objective.} EPS regresses a denoising network $D_\theta$ onto clean targets given the pivot input,
\begin{equation}
    \loss_{\mathrm{EPS}}(\theta)
    =\E_{x_0,y,t,\epsilon}\!\left[w(t)\,\left\|D_\theta(\mu_\star(x_t,y,t),\,y,\,t)-x_0\right\|^2\right],
    \label{eq:eps-loss}
\end{equation}
where $x_t=\alpha_t x_0+\beta_t\epsilon$ and $y \sim \N(A x_0, \sigma_y^2 I_m)$. Standard arguments show that the squared-loss minimizer of \eqref{eq:eps-loss} is $\E[x_0 | \mu_\star, y, t]$, which by Theorem~\ref{thm:main} equals $D_{\Sigma_\star(t)}(\mu_\star) = \E[x_0 | x_t, y]$. Once trained, the posterior score and posterior velocity follow without further effort from \eqref{eq:posterior-score-main} and \eqref{eq:posterior-velocity}.
While strictly unnecessary, note that EPS also passes $y$ to the learned denoiser so that the network is explicitly conditioned on the observed measurement while learning the posterior denoising map; for a fixed operator, the closed-form pivot is a sufficient statistic, but conditioning on $y$ makes the dependence on the particular inverse-problem instance explicit in the learned model, and slightly improves results (Appendix~\ref{app:additional-experiments:inputs}).

\paragraph{The upshot.} The minimizer of \eqref{eq:eps-loss} has the same target type (a clean image $x_0 \in \R^d$) and the same squared-loss regression structure as the standard pretrained denoiser $D_{\theta_0}(x_t, t)$. The structural changes are (i) the input is the pivot $\mu_\star$ rather than $x_t$, and (ii) implicitly, through the pivot construction, the input is corrupted by anisotropic noise of covariance $\Sigma_\star(t)$ rather than isotropic noise of scale $\beta_t$. Why does this  matter? At the same input $\mu_\star$, the unconditional denoiser $D_{\theta_0}$ would return a biased estimate, because it implicitly assumes its input is corrupted by isotropic noise, whereas the true noise on $\mu_\star$ is operator-dependent and anisotropic. The network must therefore learn how to denoise under this measurement-induced anisotropic geometry.

Empirically, when warm-started from the pretrained unconditional denoiser, EPS converges in a small fraction of the iterations needed by other training-based posterior solvers (Appendix~\ref{app:additional-experiments:warmstart}). We attribute this to its proximity to the pretraining task. Conditional  methods such as Palette~\citep{saharia2022palette} and InvFusion~\citep{elata2025invfusion} preserve the noise schedule and noisy intermediates of pretraining but condition the score on the raw measurement, so the network must spend capacity learning the operator dependence on top of the pretrained denoising prior. Bridge methods such as InDI~\citep{delbracio2023inversion} and I2SB~\citep{liu20232} go further by replacing the noise-to-data forward process with a measurement-to-data one, so the network must learn a different conditional mapping from scratch. EPS preserves the forward process and the per-time denoising query, and only adapts to operator-induced anisotropic geometry $\Sigma_\star(t)$.

\begin{algorithm}[t]
\caption{EPS training}
\label{alg:eps-train}
\begin{algorithmic}[1]
\REQUIRE Pretrained denoiser $D_{\theta_0}$ (or random init), data distribution $\pdata$, operator distribution $p(A)$, observation noise $\sigma_y$, noise schedule $(\alpha_t,\beta_t)$.
\STATE Initialize $\theta\leftarrow\theta_0$ (or randomly).
\WHILE{not converged}
    \STATE Sample $x_0\sim\pdata$, $A\sim p(A)$, $t\sim p(t)$, $\epsilon\sim\N(0,I_d)$, and $\eta\sim\N(0,I_m)$.
    \STATE Form $y\leftarrow A x_0+\sigma_y\eta$ and $x_t\leftarrow \alpha_t x_0+\beta_t\epsilon$.
    \STATE Compute $\Sigma_\star(t)$ and $\mu_\star(x_t,y,t)$ from \eqref{eq:sigma-mu-star} via the structured solve for $A$.
    \STATE Evaluate the posterior denoising loss $\loss=w(t)\|D_\theta(\mu_\star,y,t)-x_0\|^2$.
    \STATE Update $\theta$ by gradient descent on $\loss$, and update EMA weights if used by the base sampler.
\ENDWHILE
\RETURN Trained denoiser $D_\theta$.
\end{algorithmic}
\end{algorithm}

\subsection{Sampling}
\label{sec:method:one-step}

At inference, EPS uses the same deterministic or stochastic sampler as the underlying diffusion backbone, replacing every denoiser call by $D_\theta(\mu_\star(x_t,y,t),y,t)$. No likelihood gradient, projection, or inner optimization is required during sampling. Since $\mu_\star$ is obtained by a structured linear solve, the per-step overhead is negligible relative to a denoiser forward pass (see Section~\ref{sec:method:theory}).

\paragraph{The High-Noise Limit.} Theorem~\ref{thm:main} also characterizes the sampler at the start of the reverse trajectory, where the noise scale $\sigma_t$ is largest. In this regime both the pivot and its anisotropic denoiser take a particularly simple form:

\begin{observation}[High-noise posterior-mean limit]
\label{obs:one-step-limit}
With EDM parameterization $\alpha_t=1$, $\beta_t=\sigma_t$, and let
$P_{\mathcal N(A)}=I-A^\dagger A$ be the orthogonal projector onto the nullspace
of $A$. Then, as $\sigma_t\to\infty$,
\begin{equation}
    \mu_\star(x_t,y,t)
    \longrightarrow
    A^\dagger y + P_{\mathcal N(A)}x_t .
    \label{eq:high-noise-pivot}
\end{equation}
Moreover, the corresponding anisotropic denoiser satisfies
\begin{equation}
    D_{\Sigma_\star(t)}(\mu_\star(x_t,y,t))
    \longrightarrow
    \E[x_0|y].
    \label{eq:high-noise-denoiser}
\end{equation}
Thus a single high-noise EPS denoiser evaluation is a posterior-mean estimator.
\end{observation}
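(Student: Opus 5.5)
Both limits follow from one spectral computation. I would diagonalize $A^\top A$ via the SVD $A = U S V^\top$, so that $A^\top A = V\,\mathrm{diag}(s_i^2)\,V^\top$ with $s_i>0$ on the row space of $A$ and $s_i=0$ on $\mathcal N(A)$. With $\alpha_t=1$, $\beta_t=\sigma_t$, Theorem~\ref{thm:main} gives
\[
\Sigma_\star(t)=V\,\mathrm{diag}\!\left(\frac{1}{\sigma_t^{-2}+s_i^2/\sigma_y^2}\right)V^\top .
\]
On the row space the eigenvalues tend to $\sigma_y^2/s_i^2$, while on the nullspace they equal $\sigma_t^2\to\infty$.

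For the pivot, I would write $\mu_\star=\Sigma_\star(\sigma_t^{-2}x_t+\sigma_y^{-2}A^\top y)$ and split along the two subspaces.

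\textbf{Nullspace component.} Here $A^\top y$ has no component, since $\mathrm{range}(A^\top)\perp\mathcal N(A)$. The eigenvalue $\sigma_t^2$ multiplies $\sigma_t^{-2}x_t$, so this component is exactly $P_{\mathcal N(A)}x_t$ for every $\sigma_t$.

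\textbf{Row-space component.} The term $\Sigma_\star\sigma_t^{-2}x_t$ has eigenvalues $\sigma_t^{-2}/(\sigma_t^{-2}+s_i^2/\sigma_y^2)\to 0$. The term $\Sigma_\star\sigma_y^{-2}A^\top y$ tends to $(A^\top A)^{+}A^\top y=A^\dagger y$. This gives \eqref{eq:high-noise-pivot}. I would note that it is convergence for each fixed $x_t$; in the sampler $x_t$ grows like $\sigma_t$, so the row-space error term is really $O(\sigma_t^{-1})$ in distribution.

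For \eqref{eq:high-noise-denoiser}, I would not pass to the limit inside $D_\Sigma$ directly. Instead I would use the identity from Theorem~\ref{thm:main}, $D_{\Sigma_\star}(\mu_\star)=\E[x_0|x_t,y]$, equivalently $\E[x_0|\mu_\star]$, using Proposition~\ref{prop:isotropic-pivot}: $\mu_\star|x_0\sim\N(x_0,\Sigma_\star)$. Decomposing $\mu_\star=(\mu_R,\mu_N)$ along the row space and nullspace, the two components are conditionally independent given $x_0$ because $\Sigma_\star$ is block-diagonal in this basis. The row-space part $\mu_R$ is an invertible reparametrization of the information in $y$ about $x_0$, modulo the vanishing $x_t$ contribution. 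The nullspace part is $P_{\mathcal N(A)}x_0+\sigma_t\xi_N$, carrying likelihood $\N(\mu_N;P_N x_0,\sigma_t^2 I)$. As $\sigma_t\to\infty$ this likelihood becomes flat in $x_0$ on compact sets, so the posterior $p(x_0|x_t,y)\propto \pdata(x_0)\,p(y|x_0)\,\N(x_t;x_0,\sigma_t^2 I)$ converges to $p(x_0|y)$. Hence its mean converges to $\E[x_0|y]$.

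\textbf{Main obstacle.} The hard step is making that last convergence rigorous, since we need convergence of the first moment, not just of the density. I would argue by dominated convergence. Write $\E[x_0|x_t,y]$ as a ratio of integrals weighted by $\exp(-\|x_t-x_0\|^2/2\sigma_t^2)$. For fixed $x_t$ this weight tends to $1$ pointwise and is bounded by $1$. Assuming $\pdata$ has finite first moment (so $\int\|x_0\|p(y|x_0)\pdata<\infty$), both the numerator and the denominator converge to their $y$-only counterparts.

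If $x_t$ is instead taken along the sampler with $\|x_t\|\sim\sigma_t$, then $x_t/\sigma_t$ is $O(1)$ and the weight is not close to $1$. I would handle this by expanding $\|x_t-x_0\|^2/\sigma_t^2=\|x_t\|^2/\sigma_t^2-2\langle x_t,x_0\rangle/\sigma_t^2+\|x_0\|^2/\sigma_t^2$. The first term is constant in $x_0$ and cancels in the ratio. The second is $O(\|x_0\|/\sigma_t)$ and the third $O(\|x_0\|^2/\sigma_t^2)$, both vanishing for fixed $x_0$. Domination of the factor $\exp(\langle x_t,x_0\rangle/\sigma_t^2)$ would then require a mild exponential-moment or bounded-support assumption on $\pdata$, which holds for images; I would state that as a standing assumption.
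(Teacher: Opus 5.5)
Your argument is correct. The pivot half is the same SVD computation as the paper's: the paper writes it with $\lambda=\sigma_y^2/\sigma_t^2\to0$, and the nullspace block equals $V_0V_0^\top x_t$ for every $\sigma_t$.

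For the denoiser limit you take a genuinely different route. The paper stays in the $(\mu_\star,\Sigma_\star)$ picture. It shows that the row-space block of $\Sigma_\star(t)$ tends to $\sigma_y^2(A^\top A)^\dagger$ while the nullspace variance diverges. It then identifies the limiting query with the observation $z=A^\dagger y=P_{\mathcal R(A^\top)}x_0+\zeta$ and uses sufficiency of $z$ for $y$ to get $\E[x_0|z]=\E[x_0|y]$. Finally it asserts that the limiting denoiser is the Bayes estimator for this query. This explains structurally why $A^\dagger y$ is all the network needs at $\sigma_{\max}$, and it ties the denoiser limit to the pivot limit. But the last step interchanges a limit with a conditional expectation under a degenerating covariance: it presumes continuity of $(\Sigma,\mu)\mapsto D_\Sigma(\mu)$ as one block of $\Sigma$ blows up, which is not shown.

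Your middle paragraph (block-diagonal $\Sigma_\star$, flat nullspace likelihood) is essentially that heuristic. What you add is the return, via Theorem~\ref{thm:main}, to $\E[x_0|x_t,y]$. There the only $\sigma_t$-dependence is the scalar weight $\exp(-\|x_t-x_0\|^2/2\sigma_t^2)$, so dominated convergence gives convergence of the first moment directly to $\E[x_0|y]$ under a finite-first-moment assumption. This bypasses the sufficiency step entirely. It also covers the sampler regime $\|x_t\|=O(\sigma_t)$, which the paper does not discuss. The paper's route is more interpretable; yours is the one that actually proves the moment convergence.

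One refinement: the exponential-moment or bounded-support assumption in your last step is unnecessary. Rather than dominating $\exp(\langle x_t,x_0\rangle/\sigma_t^2)$ alone, keep the full quadratic. After cancelling the $x_0$-independent factor, the weight satisfies
\[
\exp\Bigl(\frac{\|x_t\|^2-\|x_t-x_0\|^2}{2\sigma_t^2}\Bigr)\le\exp\Bigl(\frac{\|x_t\|^2}{2\sigma_t^2}\Bigr)\le e^{C^2/2}
\qquad\text{whenever }\|x_t\|\le C\sigma_t .
\]
The weight still tends to $1$ pointwise, so the same finite-first-moment domination as in the fixed-$x_t$ case suffices.
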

The proof is given in Appendix~\ref{app:derivation:one-step}. The pivot limit \eqref{eq:high-noise-pivot} is the EPS-specific part of the statement: at the start of sampling, the network is queried at the pseudo-inverse reconstruction $A^\dagger y$ plus pure noise in the nullspace of $A$. The posterior-mean limit \eqref{eq:high-noise-denoiser}, by contrast, is generic rather than a contribution of EPS: by Theorem~\ref{thm:main} it is equivalent to $\mathbb{E}[x_0|x_t,y]\to\mathbb{E}[x_0|y]$, which holds simply because $x_t$ carries vanishing information about $x_0$ as $\sigma_t\to\infty$. Under perfect learning, the same is therefore true of any training-based method whose denoiser targets $\mathbb{E}[x_0|x_t,y]$ along the same path, e.g., Palette~\cite{saharia2022palette}; Appendix~\ref{app:additional-experiments:palette-1step} confirms this empirically. We nevertheless find it a useful way to read the sampling path of all such methods: it starts at the posterior mean $\mathbb{E}[x_0|y]$ and ends at a sample from $p(x_0|y)$.

\begin{figure}[t]
\centering
\includegraphics[width=\linewidth]{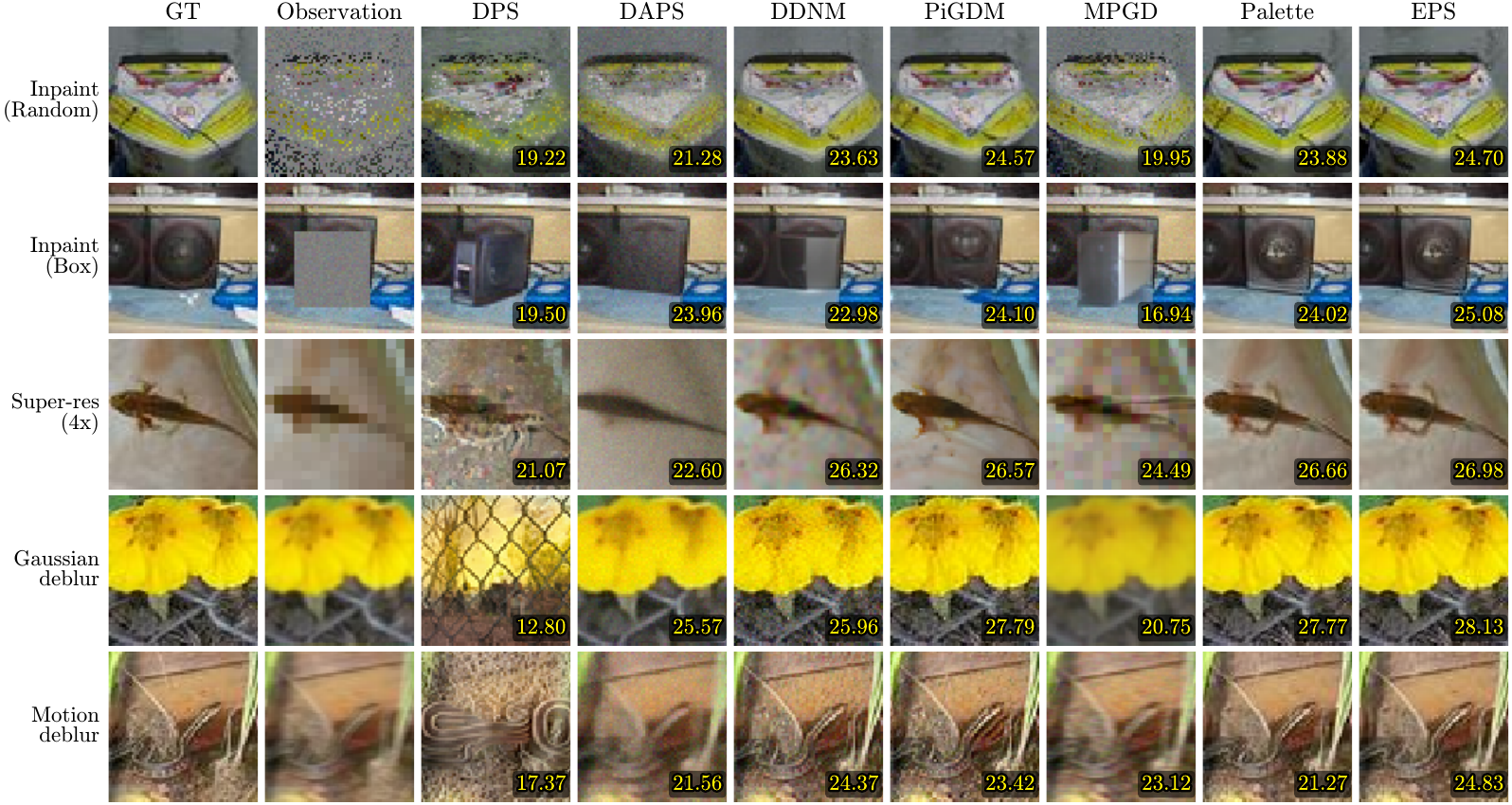}
\caption{\textbf{Qualitative reconstructions across the five inverse problems.} Numbers indicate PSNR values.}
\label{fig:qualitative}
\end{figure}

\begin{figure}[t]
    \centering
    \includegraphics[width=\linewidth]{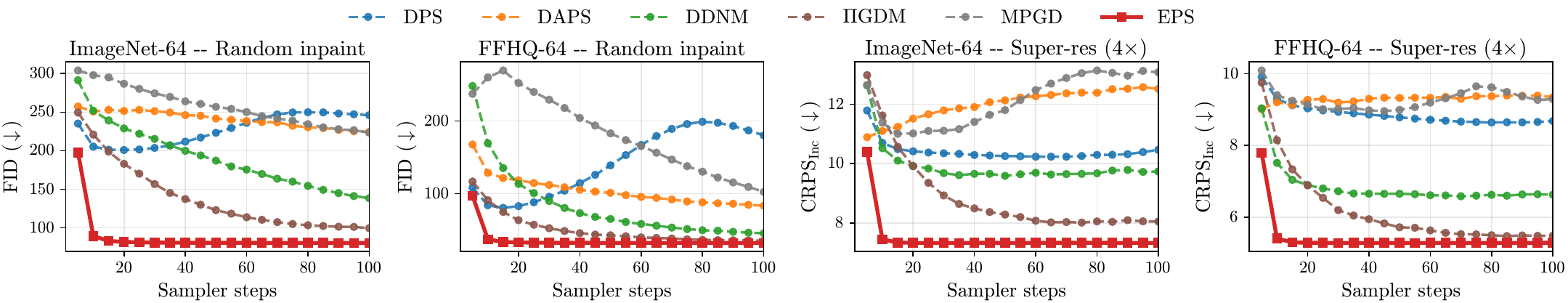}
    \caption{\textbf{EPS converges in $\sim$20 NFE; baselines never catch up.} Sampling-step sensitivity for FID and CRPS-Inception on random inpainting and $4{\times}$ super-resolution, across both ImageNet-64 and FFHQ-64. EPS plateaus within $\sim$20 NFE on every panel, while DPS, DAPS, DDNM, $\Pi$GDM, and MPGD continue to improve out to 100 NFE without reaching the EPS asymptote.}
    \label{fig:main-steps}
\end{figure}

\section{Experiments}
\label{sec:experiments}
We evaluate EPS on five linear inverse problems across two datasets. The main comparison is at $64{\times}64$, where every baseline is run under the same backbone, task, and evaluation protocol and where distributional metrics can be computed fairly. Additional $256{\times}256$ results are in Appendix~\ref{app:additional-experiments:256}.

\subsection{Experimental Setup}
\label{sec:experiments:setup}
\paragraph{Datasets, backbones, and tasks.} We use FFHQ and ImageNet. For ImageNet we use the publicly available class-conditional EDM \cite{karras2022elucidating} checkpoint released with the original codebase. For FFHQ we train an EDM checkpoint from scratch because the released FFHQ model does not reserve images for validation. All methods that require training or fine-tuning use the same backbone as EPS, and all zero-shot solvers use the same pretrained denoiser. We consider five linear inverse problems: random inpainting with $70\%$ missing pixels, centered box inpainting, $4\times$ super-resolution, Gaussian deblurring, and motion deblurring. In all cases we add Gaussian observation noise with standard deviation $\sigma_y=0.05$. Operator details and randomization protocols are in Appendix~\ref{app:implementation}.

\paragraph{Baselines and metrics.} The sampling-based family comprises DPS \cite{chung2022diffusion}, DAPS \cite{zhang2025improving}, DDNM \cite{wang2022zero}, $\Pi$GDM \cite{song2023pseudoinverseguided}, and MPGD \cite{he2023manifoldpreservingguideddiffusion}. The training-based family is represented by Palette \cite{saharia2022palette}, implemented under the same EDM backbone and compute budget; Palette can be viewed as the EPS pipeline with $x_t$ replacing $\mu_\star$, isolating the contribution of the shifted pivot. We report PSNR and SSIM \cite{wang2004image}, LPIPS \cite{zhang2018unreasonable}, and FID \cite{heusel2017gans} for pointwise and perceptual quality, and CRPS \cite{gneiting2014} and MMD \cite{gretton2012kernel} in pixel and Inception feature space \cite{mammadov2026variational,debortoli2025distributionaldiffusionmodelsscoring} for distributional calibration. We use raw Inception features rather than the L2-normalized features of \citet{mammadov2026variational}, so absolute values differ but relative comparisons across methods are preserved. Definitions are in Appendix~\ref{app:metrics}.

\subsection{Main Results}
\label{sec:experiments:main}
Tables~\ref{tab:imagenet} and \ref{tab:ffhq} report all five tasks, all baselines, and all metrics in a single view per dataset. We include EPS at $100$, $20$, and $1$ NFE: the first two correspond to the posterior-sampling regime at different budgets, and the $1$-NFE row tests the posterior-mean prediction from Section~\ref{sec:method:one-step}.
\begin{table*}[t]
  \centering
  \scriptsize
  \setlength{\tabcolsep}{4pt}
  \caption{\textbf{Quantitative comparison on ImageNet-64.} Five linear inverse problems, 100 images $\times$ 10 seeds. Baselines follow the sampler and hyperparameters from their respective papers; Palette and EPS use the EDM Euler sampler at 1 NFE per step. Best in \textbf{bold}, second-best \underline{underlined}; EPS rows highlighted in light pink. \dag\ The NFE$=$1 row applies a single Tweedie evaluation $D_\theta(\mu_\star,\sigma_{\max})$, returning the conditional posterior mean $\mathbb{E}[x_0\mid y]$ in one shot rather than a posterior sample.}
  \label{tab:imagenet}
  \resizebox{\textwidth}{!}{%
  \begin{tabular}{l | l | c | ccc ccc cc}
    \toprule
    \textbf{Task} & \textbf{Method} & \textbf{NFE} & \textbf{PSNR ($\uparrow$)} & \textbf{SSIM ($\uparrow$)} & \textbf{LPIPS ($\downarrow$)} & \textbf{FID ($\downarrow$)} & \textbf{MMD\textsubscript{pix} ($\downarrow$)} & \textbf{MMD\textsubscript{Inc} ($\downarrow$)} & \textbf{CRPS\textsubscript{pix} ($\downarrow$)} & \textbf{CRPS\textsubscript{Inc} ($\downarrow$)} \\
    \midrule
    \multirow{9}{*}{\shortstack[l]{Inpaint\\(random)}}
        & DPS      & 250 & 21.56 & 0.6573 & 0.2158 & 183.94 & -4.88e-03 &  2.51e-02 & 5.88 &  9.89 \\
        & DAPS     & 100 & 20.89 & 0.5555 & 0.3168 & 224.39 & -2.22e-03 &  4.94e-02 & 6.91 & 12.09 \\
        & DDNM     & 100 & 22.63 & 0.7127 & 0.1758 & 138.53 & -5.78e-03 &  7.29e-03 & 5.23 &  7.99 \\
        & $\Pi$GDM & 100 & 23.95 & 0.7780 & 0.1198 &  99.60 & -6.40e-03 & -2.22e-03 & 4.54 &  6.32 \\
        & MPGD     & 100 & 19.62 & 0.5447 & 0.3151 & 223.43 &  2.56e-03 &  4.92e-02 & 7.81 & 11.94 \\
        & Palette  & 100 & 24.09 & 0.7869 & 0.1011 & 81.88  & -6.50e-03 & -4.41e-03 & 4.16 & 5.52 \\
    \cmidrule{2-11}
    \rowcolor{lightpink} \cellcolor{white} & EPS (ours) & 100 & 24.34 & 0.7948 & 0.0979 & \underline{79.60} & \textbf{-6.52e-03} & \textbf{-4.51e-03} & \underline{4.04} & \textbf{5.41} \\
    \rowcolor{lightpink} \cellcolor{white} & EPS (ours) &  20 & \underline{24.87} & \underline{0.8122} & \textbf{0.0910} & \textbf{77.06} & \textbf{-6.52e-03} & \textbf{-4.51e-03} & \textbf{4.02} & \underline{5.44} \\
    \rowcolor{lightpink} \cellcolor{white} & EPS (ours)\dag & 1 & \textbf{26.60} & \textbf{0.8580} & \underline{0.0933} & 88.59 & -6.15e-03 & -4.84e-04 & 4.98 & 8.32 \\
    \midrule
    \multirow{9}{*}{\shortstack[l]{Inpaint\\(box)}}
        & DPS      & 250 & 19.44 & 0.6587 & 0.1891 & 142.05 & -5.63e-03 &  1.01e-02 & 7.25 & 8.19 \\
        & DAPS     & 100 & 20.62 & 0.6600 & 0.2100 & 161.62 & -5.00e-03 &  2.38e-02 & 7.36 & 9.57 \\
        & DDNM     & 100 & 20.38 & 0.7049 & 0.1824 & 125.82 & -5.37e-03 &  4.18e-03 & 7.00 & 7.54 \\
        & $\Pi$GDM & 100 & 20.53 & 0.7292 & 0.1490 & 104.50 & -5.87e-03 & -2.69e-03 & 6.59 & 6.50 \\
        & MPGD     & 100 & 19.35 & 0.6467 & 0.2463 & 167.19 & -3.44e-03 &  1.93e-02 & 8.67 & 9.68 \\
        & Palette  & 100 & 21.12 & 0.7541 & 0.1218 & 92.73  & \underline{-6.10e-03} & -4.12e-03 & 5.92 & 5.93 \\
    \cmidrule{2-11}
    \rowcolor{lightpink} \cellcolor{white} & EPS (ours) & 100 & 21.24 & 0.7569 & \underline{0.1196} & \underline{91.07} & \textbf{-6.11e-03} & \textbf{-4.23e-03} & \underline{5.87} & \textbf{5.84} \\
    \rowcolor{lightpink} \cellcolor{white} & EPS (ours) &  20 & \underline{21.72} & \underline{0.7667} & \textbf{0.1166} & \textbf{90.12} & -6.08e-03 & \underline{-4.14e-03} & \textbf{5.86} & \underline{5.90} \\
    \rowcolor{lightpink} \cellcolor{white} & EPS (ours)\dag & 1 & \textbf{23.60} & \textbf{0.7908} & 0.1514 & 129.11 & -5.60e-03 &  7.34e-03 & 7.16 & 10.38 \\
    \midrule
    \multirow{9}{*}{\shortstack[l]{Super-res.\\($4{\times}$)}}
        & DPS      & 250 & 19.59 & 0.4511 & 0.3367 & 200.98 & -5.06e-03 &  2.67e-02 & 7.77 & 10.36 \\
        & DAPS     & 100 & 18.46 & 0.3324 & 0.4772 & 258.25 &  6.09e-03 &  8.54e-02 & 10.01 & 12.52 \\
        & DDNM     & 100 & \underline{21.10} & 0.5523 & 0.3055 & 169.94 & -5.10e-03 &  1.41e-02 & 7.32 & 9.74 \\
        & $\Pi$GDM & 100 & 20.25 & 0.5318 & 0.2499 & 144.36 & -5.78e-03 &  4.61e-03 & 6.76 & 8.05 \\
        & MPGD     & 100 & 20.57 & 0.5103 & 0.3426 & 221.50 & -3.20e-03 &  6.97e-02 & 9.02 & 13.08 \\
        & Palette  & 100 & 20.24 & 0.5364 & 0.2220 & \textbf{128.76} & \textbf{-5.86e-03} & \underline{-2.79e-03} & \textbf{6.50} & \textbf{7.33} \\
    \cmidrule{2-11}
    \rowcolor{lightpink} \cellcolor{white} & EPS (ours) & 100 & 20.25 & 0.5369 & \underline{0.2207} & \underline{128.80} & \textbf{-5.86e-03} & \textbf{-2.84e-03} & 6.52 & \underline{7.35} \\
    \rowcolor{lightpink} \cellcolor{white} & EPS (ours) &  20 & 20.90 & \underline{0.5692} & \textbf{0.2127} & 129.13 & -5.72e-03 & -1.98e-03 & \underline{6.51} & 7.54 \\
    \rowcolor{lightpink} \cellcolor{white} & EPS (ours)\dag & 1 & \textbf{22.78} & \textbf{0.6530} & 0.2455 & 182.92 & -4.97e-03 &  1.98e-02 & 8.06 & 13.47 \\
    \midrule
    \multirow{9}{*}{\shortstack[l]{Gaussian\\deblur}}
        & DPS      & 250 & 25.05 & 0.7804 & 0.1511 & 158.46 & -5.24e-03 &  1.57e-02 & 4.48 & 8.96 \\
        & DAPS     & 100 & 25.97 & 0.7680 & 0.1466 & 152.47 & -6.50e-03 &  2.78e-02 & 3.69 & 9.58 \\
        & DDNM     & 100 & 26.28 & 0.7903 & 0.1227 & 101.61 & -6.71e-03 &  9.01e-03 & 2.92 & 7.18 \\
        & $\Pi$GDM & 100 & 28.13 & 0.8785 & 0.0742 & 66.10  & -6.70e-03 & -2.49e-03 & 2.91 & 5.68 \\
        & MPGD     & 100 & 20.74 & 0.6375 & 0.2396 & 175.42 &  2.24e-02 &  2.13e-02 & 8.01 & 10.20 \\
        & Palette  & 100 & 29.15 & 0.9010 & 0.0491 & 46.62  & \textbf{-6.83e-03} & \textbf{-5.56e-03} & 2.26 & \textbf{4.11} \\
    \cmidrule{2-11}
    \rowcolor{lightpink} \cellcolor{white} & EPS (ours) & 100 & \underline{29.18} & 0.9015 & \underline{0.0486} & \underline{46.55} & \textbf{-6.83e-03} & \underline{-5.55e-03} & \underline{2.25} & \textbf{4.11} \\
    \rowcolor{lightpink} \cellcolor{white} & EPS (ours) &  20 & \textbf{29.68} & \underline{0.9108} & \textbf{0.0449} & \textbf{44.25} & \textbf{-6.83e-03} & \underline{-5.55e-03} & \textbf{2.24} & \underline{4.13} \\
    \rowcolor{lightpink} \cellcolor{white} & EPS (ours)\dag & 1 & 28.82 & \textbf{0.9194} & 0.0606 & 56.53 & -5.14e-03 & -3.46e-03 & 4.09 & 7.73 \\
    \midrule
    \multirow{9}{*}{\shortstack[l]{Motion\\deblur}}
        & DPS      & 250 & 25.63 & 0.7981 & 0.1398 & 162.66 & -6.41e-03 &  2.58e-02 & 4.42 & 9.82 \\
        & DAPS     & 100 & 24.07 & 0.6859 & 0.1863 & 190.39 & -6.06e-03 &  3.99e-02 & 4.68 & 11.19 \\
        & DDNM     & 100 & 25.73 & 0.7758 & 0.1299 & 109.80 & -6.68e-03 &  7.60e-03 & 3.16 & 7.21 \\
        & $\Pi$GDM & 100 & 25.89 & 0.8205 & 0.1031 & 93.75  & -6.46e-03 & -6.33e-04 & 3.72 & 6.63 \\
        & MPGD     & 100 & 25.90 & 0.8114 & 0.1525 & 183.11 & -6.09e-03 &  3.34e-02 & 5.28 & 12.49 \\
        & Palette  & 100 & 27.02 & 0.8582 & 0.0680 & 62.86  & \underline{-6.73e-03} & -5.02e-03 & \underline{2.93} & 4.81 \\
    \cmidrule{2-11}
    \rowcolor{lightpink} \cellcolor{white} & EPS (ours) & 100 & \underline{27.62} & \underline{0.8661} & \underline{0.0647} & \underline{61.29} & \textbf{-6.77e-03} & \textbf{-5.11e-03} & \textbf{2.69} & \textbf{4.70} \\
    \rowcolor{lightpink} \cellcolor{white} & EPS (ours) &  20 & \textbf{28.17} & \textbf{0.8791} & \textbf{0.0606} & \textbf{59.11} & \textbf{-6.77e-03} & \underline{-5.05e-03} & \textbf{2.69} & \underline{4.76} \\
    \rowcolor{lightpink} \cellcolor{white} & EPS (ours)\dag & 1 & 26.56 & 0.8613 & 0.1079 & 97.59 & -6.03e-03 & -7.30e-04 & 5.25 & 9.91 \\
    \bottomrule
  \end{tabular}%
  }
\end{table*}

\paragraph{Quantitative comparison and sampling efficiency.} EPS attains the best or second-best score on every metric and task in Tables~\ref{tab:imagenet} and \ref{tab:ffhq} , across pointwise fidelity (PSNR, SSIM), perceptual quality (LPIPS, FID), and distributional calibration (CRPS, MMD). The closest competitor is consistently Palette, which shares the EPS pipeline except that the network input is $x_t$ rather than $\mu_\star$; holding the backbone, compute budget, and training data fixed, EPS still outperforms Palette on every task, isolating the contribution of the shifted pivot. Sampling-based baselines that approximate the measurement-matching score ($\Pi$GDM, DPS, DDNM, DAPS, MPGD) lag further behind, even at substantially larger NFE budgets ($\Pi$GDM, DDNM, DAPS, MPGD at $100$ NFE; DPS at $250$), and the gap widens where the operator is most ill-conditioned, namely random inpainting with $70\%$ missing pixels and $4{\times}$ super-resolution. Figure~\ref{fig:main-steps} confirms the same trend across budgets: EPS plateaus within $\sim\!20$ NFE on every panel, while every baseline keeps improving out to $100$ NFE without reaching the EPS asymptote. The $1$-NFE row is the strongest pointwise estimator on PSNR and SSIM (e.g., $26.60$ PSNR on ImageNet random inpainting against $24.34$ at $100$ NFE), consistent with Observation~\ref{obs:one-step-limit}, and trades distributional calibration for that pointwise sharpness in line with the perception-distortion trade-off~\cite{blau2018perception}.

\paragraph{Qualitative comparison and ablations.} Figure~\ref{fig:qualitative} shows reconstructions on the five tasks. The largest visual gaps appear under aggressive inpainting and deblurring, where DPS, DAPS, and MPGD oversmooth or introduce texture inconsistent with the measurement, and DDNM and $\Pi$GDM match measured directions but leave the unmeasured subspace blurry. EPS preserves sharp prior structure while matching the observation, since the pivot $\mu_\star$ explicitly separates measured and unmeasured directions and $\Sigma_\star(t)$ specifies how much to denoise along each. Appendix~\ref{app:additional-experiments} studies the input pivot, zero-shot behavior, warm-start convergence, sampling-step sensitivity, amortization across tasks, and $256{\times}256$ scaling, and confirms the two central mechanisms: the shifted pivot is the right input, and preserving pretrained denoising marginals explains the fast convergence.

\section{Related Work}
\label{sec:related}
\paragraph{Posterior sampling with pretrained generative priors.} A large literature uses pretrained diffusion or flow models as priors for inverse problems. Explicit guidance methods, including Score-SDE/ALD, RePaint, DDNM, DDRM, DPS, $\Pi$GDM, DAPS, MPGD, and FlowDPS \cite{song2019generative,lugmayr2022repaint,wang2022zero,kawar2022denoising,chung2022diffusion,song2023pseudoinverseguided,zhang2025improving,he2023manifoldpreservingguideddiffusion,kim2025flowdps,daras2024survey}, approximate the measurement-matching score $\nabla_{x_t}\log p(y|x_t) \approx -L_t M_t / G_t$ \cite{daras2024survey}, where $M_t$ is a measurement residual, $L_t$ lifts it back to the sample space, and $G_t$ controls the guidance strength. They differ in the form of $M_t$, $L_t$, and $G_t$, instantiating the template via projections, denoised estimates, Jacobian corrections, or moment approximations of $p(x_0|x_t)$. Moment-matching variants \cite{rozet2024learning,rout2024beyond} go beyond first-order Tweedie by approximating $p(x_0|x_t)$ with an anisotropic Gaussian. Section~\ref{sec:method:exactness} makes precise the gap between all of these and the exact posterior score, namely that each method substitutes the unconditional denoising query for the posterior denoising query, querying the network at $x_t$ rather than at the pivot $\mu_\star$. GLASS \cite{holderrieth2025glassflowstransitionsampling} is the closest training-free relative, and its equivalent-time formula coincides with EPS in the special case where $A^\top A$ is a scalar multiple of the identity, equivalently when $\Sigma_\star(t)$ is isotropic (Appendix~\ref{app:derivation:equivalent}). EPS handles the general operator-dependent anisotropic case at the cost of a training step.

\paragraph{Conditional training and restoration bridges.} Palette and conditional image-to-image diffusion models train a network directly on $(x_t, y)$ pairs \cite{saharia2022palette,batzolis2021conditional,mammadov2024amortized,elata2025invfusion}, and bridge-based restoration methods such as InDI \cite{delbracio2023inversion} and image-to-image Schr\"odinger bridges \cite{liu20232} construct trajectories from the measurement distribution to the data distribution. These methods are expressive and avoid hand-designed corrections, but they expose the network to a conditional path whose intermediate marginals do not match the unconditional denoising marginals of the prior, so they cannot leverage a pretrained denoiser as a warm start in a structurally aligned way. EPS instead derives the conditional path induced by the exact linear-Gaussian posterior kernel, which preserves the input/output type of standard denoising pretraining (Section~\ref{sec:method:objective}). This makes both random initialization and warm-starting from a pretrained checkpoint natural training options. We view bridge-based methods as complementary, since they define useful restoration dynamics for general degradations, while EPS identifies the specific bridge that solves the linear-Gaussian posterior exactly.

\section{Conclusion}
\label{sec:conclusion}

We derived the exact posterior score for linear Gaussian inverse problems and showed that posterior sampling reduces to a denoising problem at a measurement-induced pivot $\mu_\star$ under an operator-dependent anisotropic covariance $\Sigma_\star$. We turned this identity into EPS, a denoising training objective whose input/output structure matches standard pretraining, and which can therefore be either trained from scratch or fine-tuned efficiently from a pretrained checkpoint. EPS samples with the underlying backbone's sampler, requires no measurement gradients or projections, and admits a one-evaluation posterior-mean estimator in the high-noise limit. Empirically, EPS improves both reconstruction fidelity and distributional calibration over sampling-based and conditional-training baselines on five linear inverse problems on FFHQ and ImageNet, while exposing an explicit sampling-budget trade-off through the same sampler used by the backbone.

\paragraph{Limitations.} The exact derivation assumes a linear forward operator and Gaussian observation noise. Nonlinear operators can be approached by local linearization or by training against the true likelihood, but the closed form of Theorem~\ref{thm:main} no longer applies directly. Pixel-space inverse problems with latent diffusion backbones also require care because the decoder makes a linear pixel-space operator nonlinear in latent space.

\begin{ack}
AM is supported by the Clarendon Fund Scholarship, University of Oxford.
We thank fal for the compute grants that supported this research. This work was supported in part by the Engineering and Physical Sciences Research Council (EPSRC) through the AI Hub in Generative Models [grant number EP/Y028805/1].
The authors acknowledge the use of resources provided by the Isambard-AI National AI Research Resource (AIRR)~\cite{mcintoshsmith2024isambardaileadershipclasssupercomputer}. Isambard-AI is operated by the University of Bristol and is funded by the UK Government’s Department for Science, Innovation and Technology (DSIT) via UK Research and Innovation; and the Science and Technology Facilities Council [ST/AIRR/I-A-I/1023].
\end{ack}

\newpage
\bibliography{references}

@article{chung2022diffusion,
  title={Diffusion posterior sampling for general noisy inverse problems},
  author={Chung, Hyungjin and Kim, Jeongsol and Mccann, Michael T and Klasky, Marc L and Ye, Jong Chul},
  journal={arXiv preprint arXiv:2209.14687},
  year={2022}
}

@misc{debortoli2025distributionaldiffusionmodelsscoring,
      title={Distributional Diffusion Models with Scoring Rules}, 
      author={Valentin De Bortoli and Alexandre Galashov and J. Swaroop Guntupalli and Guangyao Zhou and Kevin Murphy and Arthur Gretton and Arnaud Doucet},
      year={2025},
      eprint={2502.02483},
      archivePrefix={arXiv},
      primaryClass={cs.LG},
      url={https://arxiv.org/abs/2502.02483}, 
}

@article{kim2025flowdps,
  title={Flow{DPS}: Flow-driven posterior sampling for inverse problems},
  author={Kim, Jeongsol and Kim, Bryan Sangwoo and Ye, Jong Chul},
  journal={arXiv preprint arXiv:2503.08136},
  year={2025}
}

@article{albergo2023stochastic,
  title={Stochastic interpolants: A unifying framework for flows and diffusions},
  author={Albergo, Michael and Boffi, Nicholas M and Vanden-Eijnden, Eric},
  journal={Journal of Machine Learning Research},
  volume={26},
  number={209},
  pages={1--80},
  year={2025}
}

@inproceedings{zhang2018unreasonable,
  title={The unreasonable effectiveness of deep features as a perceptual metric},
  author={Zhang, Richard and Isola, Phillip and Efros, Alexei A and Shechtman, Eli and Wang, Oliver},
  booktitle={Proceedings of the IEEE conference on computer vision and pattern recognition},
  pages={586--595},
  year={2018}
}

@inproceedings{blau2018perception,
  title={The perception-distortion tradeoff},
  author={Blau, Yochai and Michaeli, Tomer},
  booktitle={Proceedings of the IEEE conference on computer vision and pattern recognition},
  pages={6228--6237},
  year={2018}
}

@inproceedings{zhang2025improving,
  title={Improving diffusion inverse problem solving with decoupled noise annealing},
  author={Zhang, Bingliang and Chu, Wenda and Berner, Julius and Meng, Chenlin and Anandkumar, Anima and Song, Yang},
  booktitle={Proceedings of the Computer Vision and Pattern Recognition Conference},
  pages={20895--20905},
  year={2025}
}

@misc{he2023manifoldpreservingguideddiffusion,
      title={Manifold Preserving Guided Diffusion}, 
      author={Yutong He and Naoki Murata and Chieh-Hsin Lai and Yuhta Takida and Toshimitsu Uesaka and Dongjun Kim and Wei-Hsiang Liao and Yuki Mitsufuji and J. Zico Kolter and Ruslan Salakhutdinov and Stefano Ermon},
      year={2023},
      eprint={2311.16424},
      archivePrefix={arXiv},
      primaryClass={cs.LG},
      url={https://arxiv.org/abs/2311.16424}, 
}

@article{song2019generative,
  title={Generative modeling by estimating gradients of the data distribution},
  author={Song, Yang and Ermon, Stefano},
  journal={Advances in neural information processing systems},
  volume={32},
  year={2019}
}

@article{liu2022flow,
  title={Flow straight and fast: Learning to generate and transfer data with rectified flow},
  author={Liu, Xingchao and Gong, Chengyue and Liu, Qiang},
  journal={arXiv preprint arXiv:2209.03003},
  year={2022}
}

@inproceedings{heusel2017gans,
  title={Gans trained by a two time-scale update rule converge to a local nash equilibrium},
  author={Heusel, Martin and Ramsauer, Hubert and Unterthiner, Thomas and Nessler, Bernhard and Hochreiter, Sepp},
  booktitle={Advances in neural information processing systems},
  volume={30},
  year={2017}
}

@article{lipman2022flow,
  title={Flow matching for generative modeling},
  author={Lipman, Yaron and Chen, Ricky TQ and Ben-Hamu, Heli and Nickel, Maximilian and Le, Matt},
  journal={arXiv preprint arXiv:2210.02747},
  year={2022}
}

@inproceedings{ho2020denoising,
  title={Denoising diffusion probabilistic models},
  author={Ho, Jonathan and Jain, Ajay and Abbeel, Pieter},
  booktitle={Advances in neural information processing systems},
  volume={33},
  pages={6840--6851},
  year={2020}
}

@article{song2020score,
  title={Score-based generative modeling through stochastic differential equations},
  author={Song, Yang and Sohl-Dickstein, Jascha and Kingma, Diederik P and Kumar, Abhishek and Ermon, Stefano and Poole, Ben},
  journal={arXiv:2011.13456},
  year={2020}
}

@article{gneiting2014,
  title={Probabilistic forecasting},
  author={Gneiting, Tilmann and Katzfuss, Matthias},
  journal={Annual Review of Statistics and Its Application},
  volume={1},
  pages={125--151},
  year={2014},
  publisher={Annual Reviews}
}

@article{liu20232,
  title={{$I^{2}SB$}: Image-to-Image {S}chr{\"o}dinger Bridge},
  author={Liu, Guan-Horng and Vahdat, Arash and Huang, De-An and Theodorou, Evangelos A and Nie, Weili and Anandkumar, Anima},
  journal={arXiv preprint arXiv:2302.05872},
  year={2023}
}

@article{gretton2012kernel,
  title={A kernel two-sample test},
  author={Gretton, Arthur and Borgwardt, Karsten M and Rasch, Malte J and Sch{\"o}lkopf, Bernhard and Smola, Alexander},
  journal={The journal of machine learning research},
  volume={13},
  number={1},
  pages={723--773},
  year={2012},
  publisher={JMLR. org}
}

@article{sohldickstein2015deep,
  title={Deep Unsupervised Learning using Nonequilibrium Thermodynamics}, 
  author={Jascha Sohl-Dickstein and Eric A. Weiss and Niru Maheswaranathan and Surya Ganguli},
  year={2015},
  journal={arXiv:1503.03585},
}

@article{karras2022elucidating,
      title={Elucidating the Design Space of Diffusion-Based Generative Models}, 
      author={Tero Karras and Miika Aittala and Timo Aila and Samuli Laine},
      year={2022},
      journal={arXiv:2206.00364},
}

@misc{holderrieth2025glassflowstransitionsampling,
      title={GLASS Flows: Transition Sampling for Alignment of Flow and Diffusion Models}, 
      author={Peter Holderrieth and Uriel Singer and Tommi Jaakkola and Ricky T. Q. Chen and Yaron Lipman and Brian Karrer},
      year={2025},
      eprint={2509.25170},
      archivePrefix={arXiv},
      primaryClass={cs.LG},
      url={https://arxiv.org/abs/2509.25170}, 
}

@article{wang2022zero,
  title={Zero-shot image restoration using denoising diffusion null-space model},
  author={Wang, Yinhuai and Yu, Jiwen and Zhang, Jian},
  journal={arXiv preprint arXiv:2212.00490},
  year={2022}
}

@inproceedings{
song2023pseudoinverseguided,
title={Pseudoinverse-Guided Diffusion Models for Inverse Problems},
author={Jiaming Song and Arash Vahdat and Morteza Mardani and Jan Kautz},
booktitle={International Conference on Learning Representations},
year={2023},
url={https://openreview.net/forum?id=9_gsMA8MRKQ}
}

@article{kawar2022denoising,
  title={Denoising diffusion restoration models},
  author={Kawar, Bahjat and Elad, Michael and Ermon, Stefano and Song, Jiaming},
  journal={Advances in neural information processing systems},
  volume={35},
  pages={23593--23606},
  year={2022}
}

@article{chung2022improving,
  title={Improving diffusion models for inverse problems using manifold constraints},
  author={Chung, Hyungjin and Sim, Byeongsu and Ryu, Dohoon and Ye, Jong Chul},
  journal={Advances in Neural Information Processing Systems},
  volume={35},
  pages={25683--25696},
  year={2022}
}

@article{mardani2023variational,
  title={A variational perspective on solving inverse problems with diffusion models},
  author={Mardani, Morteza and Song, Jiaming and Kautz, Jan and Vahdat, Arash},
  journal={arXiv preprint arXiv:2305.04391},
  year={2023}
}

@article{mammadov2024amortized,
  title={Amortized posterior sampling with diffusion prior distillation},
  author={Mammadov, Abbas and Chung, Hyungjin and Ye, Jong Chul},
  journal={arXiv preprint arXiv:2407.17907},
  year={2024}
}

@misc{mcintoshsmith2024isambardaileadershipclasssupercomputer,
      title={Isambard-AI: a leadership class supercomputer optimised specifically for Artificial Intelligence}, 
      author={Simon McIntosh-Smith and Sadaf R Alam and Christopher Woods},
      year={2024},
      eprint={2410.11199},
      archivePrefix={arXiv},
      primaryClass={cs.DC},
      url={https://arxiv.org/abs/2410.11199}, 
}

@article{rozet2024learning,
  title={Learning diffusion priors from observations by expectation maximization},
  author={Rozet, Fran{\c{c}}ois and Andry, G{\'e}r{\^o}me and Lanusse, Fran{\c{c}}ois and Louppe, Gilles},
  journal={Advances in Neural Information Processing Systems},
  volume={37},
  pages={87647--87682},
  year={2024}
}

@inproceedings{rout2024beyond,
  title={Beyond first-order tweedie: Solving inverse problems using latent diffusion},
  author={Rout, Litu and Chen, Yujia and Kumar, Abhishek and Caramanis, Constantine and Shakkottai, Sanjay and Chu, Wen-Sheng},
  booktitle={Proceedings of the IEEE/CVF Conference on Computer Vision and Pattern Recognition},
  pages={9472--9481},
  year={2024}
}

@article{daras2024survey,
  title={A survey on diffusion models for inverse problems},
  author={Daras, Giannis and Chung, Hyungjin and Lai, Chieh-Hsin and Mitsufuji, Yuki and Ye, Jong Chul and Milanfar, Peyman and Dimakis, Alexandros G and Delbracio, Mauricio},
  journal={arXiv preprint arXiv:2410.00083},
  year={2024}
}

@article{delbracio2023inversion,
  title={Inversion by direct iteration: An alternative to denoising diffusion for image restoration},
  author={Delbracio, Mauricio and Milanfar, Peyman},
  journal={arXiv preprint arXiv:2303.11435},
  year={2023}
}

@article{elata2025invfusion,
  title={InvFusion: Bridging Supervised and Zero-shot Diffusion for Inverse Problems},
  author={Elata, Noam and Chung, Hyungjin and Ye, Jong Chul and Michaeli, Tomer and Elad, Michael},
  journal={arXiv preprint arXiv:2504.01689},
  year={2025}
}

@article{mammadov2026variational,
  title={Variational Flow Maps: Make Some Noise for One-Step Conditional Generation},
  author={Mammadov, Abbas and Takao, So and Chen, Bohan and Baptista, Ricardo and Mardani, Morteza and Teh, Yee Whye and Berner, Julius},
  journal={arXiv preprint arXiv:2603.07276},
  year={2026}
}

@article{batzolis2021conditional,
  title={Conditional image generation with score-based diffusion models},
  author={Batzolis, Georgios and Stanczuk, Jan and Sch{\"o}nlieb, Carola-Bibiane and Etmann, Christian},
  journal={arXiv preprint arXiv:2111.13606},
  year={2021}
}

@inproceedings{saharia2022palette,
  title={Palette: Image-to-image diffusion models},
  author={Saharia, Chitwan and Chan, William and Chang, Huiwen and Lee, Chris and Ho, Jonathan and Salimans, Tim and Fleet, David and Norouzi, Mohammad},
  booktitle={ACM SIGGRAPH 2022 conference proceedings},
  pages={1--10},
  year={2022}
}

@article{wang2004image,
  title={Image quality assessment: from error visibility to structural similarity},
  author={Wang, Zhou and Bovik, Alan C and Sheikh, Hamid R and Simoncelli, Eero P},
  journal={IEEE transactions on image processing},
  volume={13},
  number={4},
  pages={600--612},
  year={2004},
  publisher={IEEE}
}

@article{efron2011tweedie,
  title={Tweedie’s formula and selection bias},
  author={Efron, Bradley},
  journal={Journal of the American Statistical Association},
  volume={106},
  number={496},
  pages={1602--1614},
  year={2011},
  publisher={Taylor \& Francis}
}

@incollection{robbins1992empirical,
  title={An empirical Bayes approach to statistics},
  author={Robbins, Herbert E},
  booktitle={Breakthroughs in Statistics: Foundations and basic theory},
  pages={388--394},
  year={1992},
  publisher={Springer}
}

@article{donoho2006compressed,
  title={Compressed sensing},
  author={Donoho, David L},
  journal={IEEE Transactions on information theory},
  volume={52},
  number={4},
  pages={1289--1306},
  year={2006},
  publisher={IEEE}
}

@article{candes2006robust,
  title={Robust uncertainty principles: Exact signal reconstruction from highly incomplete frequency information},
  author={Cand{\`e}s, Emmanuel J and Romberg, Justin and Tao, Terence},
  journal={IEEE Transactions on information theory},
  volume={52},
  number={2},
  pages={489--509},
  year={2006},
  publisher={IEEE}
}

@article{lustig2007sparse,
  title={Sparse MRI: The application of compressed sensing for rapid MR imaging},
  author={Lustig, Michael and Donoho, David and Pauly, John M},
  journal={Magnetic Resonance in Medicine: An Official Journal of the International Society for Magnetic Resonance in Medicine},
  volume={58},
  number={6},
  pages={1182--1195},
  year={2007},
  publisher={Wiley Online Library}
}

@article{knoll2020fastmri,
  title={fastMRI: A publicly available raw k-space and DICOM dataset of knee images for accelerated MR image reconstruction using machine learning},
  author={Knoll, Florian and Zbontar, Jure and Sriram, Anuroop and Muckley, Matthew J and Bruno, Mary and Defazio, Aaron and Parente, Marc and Geras, Krzysztof J and Katsnelson, Joe and Chandarana, Hersh and others},
  journal={Radiology: Artificial Intelligence},
  volume={2},
  number={1},
  pages={e190007},
  year={2020},
  publisher={Radiological Society of North America}
}

@inproceedings{dong2014srcnn,
  title={Learning a deep convolutional network for image super-resolution},
  author={Dong, Chao and Loy, Chen Change and He, Kaiming and Tang, Xiaoou},
  booktitle={European conference on computer vision},
  pages={184--199},
  year={2014},
  organization={Springer}
}

@inproceedings{wang2021realesrgan,
  title={Real-esrgan: Training real-world blind super-resolution with pure synthetic data},
  author={Wang, Xintao and Xie, Liangbin and Dong, Chao and Shan, Ying},
  booktitle={Proceedings of the IEEE/CVF international conference on computer vision},
  pages={1905--1914},
  year={2021}
}

@inproceedings{nah2017deepdeblur,
  title={Deep multi-scale convolutional neural network for dynamic scene deblurring},
  author={Nah, Seungjun and Hyun Kim, Tae and Mu Lee, Kyoung},
  booktitle={Proceedings of the IEEE conference on computer vision and pattern recognition},
  pages={3883--3891},
  year={2017}
}

@inproceedings{kupyn2018deblurgan,
  title={Deblurgan: Blind motion deblurring using conditional adversarial networks},
  author={Kupyn, Orest and Budzan, Volodymyr and Mykhailych, Mykola and Mishkin, Dmytro and Matas, Ji{\v{r}}{\'\i}},
  booktitle={Proceedings of the IEEE conference on computer vision and pattern recognition},
  pages={8183--8192},
  year={2018}
}

@inproceedings{bertalmio2000image,
  title={Image inpainting},
  author={Bertalmio, Marcelo and Sapiro, Guillermo and Caselles, Vincent and Ballester, Coloma},
  booktitle={Proceedings of the 27th annual conference on Computer graphics and interactive techniques},
  pages={417--424},
  year={2000}
}

@inproceedings{pathak2016context,
  title={Context encoders: Feature learning by inpainting},
  author={Pathak, Deepak and Krahenbuhl, Philipp and Donahue, Jeff and Darrell, Trevor and Efros, Alexei A},
  booktitle={Proceedings of the IEEE conference on computer vision and pattern recognition},
  pages={2536--2544},
  year={2016}
}

@book{williams2006gaussian,
  title={Gaussian processes for machine learning},
  author={Williams, Christopher KI and Rasmussen, Carl Edward},
  volume={2},
  pages={68},
  year={2006},
  publisher={MIT Press Cambridge, MA}
}

@article{wu2023practical,
  title={Practical and asymptotically exact conditional sampling in diffusion models},
  author={Wu, Luhuan and Trippe, Brian and Naesseth, Christian and Blei, David and Cunningham, John P},
  journal={Advances in Neural Information Processing Systems},
  volume={36},
  pages={31372--31403},
  year={2023}
}

@article{cardoso2023monte,
  title={Monte carlo guided diffusion for bayesian linear inverse problems},
  author={Cardoso, Gabriel and Idrissi, Yazid Janati El and Corff, Sylvain Le and Moulines, Eric},
  journal={arXiv preprint arXiv:2308.07983},
  year={2023}
}

@inproceedings{dou2024diffusion,
  title={Diffusion posterior sampling for linear inverse problem solving: A filtering perspective},
  author={Dou, Zehao and Song, Yang},
  booktitle={The Twelfth International Conference on Learning Representations},
  year={2024}
}

@inproceedings{feng2023score,
  title={Score-based diffusion models as principled priors for inverse imaging},
  author={Feng, Berthy T and Smith, Jamie and Rubinstein, Michael and Chang, Huiwen and Bouman, Katherine L and Freeman, William T},
  booktitle={Proceedings of the IEEE/CVF international conference on computer vision},
  pages={10520--10531},
  year={2023}
}

@inproceedings{lugmayr2022repaint,
  title={Repaint: Inpainting using denoising diffusion probabilistic models},
  author={Lugmayr, Andreas and Danelljan, Martin and Romero, Andres and Yu, Fisher and Timofte, Radu and Van Gool, Luc},
  booktitle={Proceedings of the IEEE/CVF conference on computer vision and pattern recognition},
  pages={11461--11471},
  year={2022}
}
\bibliographystyle{unsrtnat}

\newpage
\appendix

\startcontents[appendices]
\noindent\rule{\linewidth}{0.4pt}
\begin{center}
{\large\bfseries Appendix Contents}
\end{center}
\noindent\rule{\linewidth}{0.4pt}
\printcontents[appendices]{l}{1}{\setcounter{tocdepth}{2}}
\noindent\rule{\linewidth}{0.4pt}
\vspace{1em}

\clearpage
\section{Theory and Derivations}
\label{app:derivation}

This appendix gives the full derivation of the EPS score identity, the posterior velocity, the equivalent-time special case, and the one-step posterior-mean limit. We keep the assumptions of the main text: $x_t=\alpha_t x_0+\beta_t\epsilon$ with $\epsilon\sim\N(0,I_d)$ and $y=Ax_0+\eta$ with $\eta\sim\N(0,\sigma_y^2I_m)$.

\subsection{Anisotropic Tweedie Formula}
\label{app:derivation:tweedie}

For any positive definite $\Sigma$, define
\begin{equation}
    p_{\mathrm{data}}^\Sigma(\mu)=\int \N(\mu;x_0,\Sigma)\pdata(x_0)\,\dd x_0,
    \qquad
    D_\Sigma(\mu)=\E[x_0| x_0+\xi=\mu],\quad \xi\sim\N(0,\Sigma).
\end{equation}
Then
\begin{equation}
    D_\Sigma(\mu)=\mu+\Sigma\nabla_\mu\log p_{\mathrm{data}}^\Sigma(\mu).
    \label{eq:app:tweedie}
\end{equation}
Indeed, differentiating under the integral gives
\begin{align}
    \nabla_\mu p_{\mathrm{data}}^\Sigma(\mu)
    &=\int \nabla_\mu \N(\mu;x_0,\Sigma)\pdata(x_0)\,\dd x_0 \\
    &=\int \N(\mu;x_0,\Sigma)\Sigma^{-1}(x_0-\mu)\pdata(x_0)\,\dd x_0.
\end{align}
Dividing by $p_{\mathrm{data}}^\Sigma(\mu)$ yields
\begin{equation}
    \nabla_\mu\log p_{\mathrm{data}}^\Sigma(\mu)
    =\Sigma^{-1}\left(\E[x_0| x_0+\xi=\mu]-\mu\right),
\end{equation}
which proves \eqref{eq:app:tweedie}. The key point for EPS is that $\Sigma$ is a full covariance matrix fixed by the inverse problem, not a scalar noise level chosen to match an unconditional diffusion time.

\subsection{Proof of Theorem~\ref{thm:main}}
\label{app:derivation:main-proof}

\paragraph{Theorem~\ref{thm:main} restated.}
\emph{Under the linear Gaussian inverse problem \eqref{eq:ip} and the interpolant \eqref{eq:interpolant}, the posterior score at time $t$ is
\begin{equation}
    \nabla_{x_t}\log p(x_t|y)
    =\frac{1}{\beta_t^2}\Big(\alpha_tD_{\Sigma_\star(t)}(\mu_\star(x_t,y,t))-x_t\Big),
\end{equation}
where
\begin{equation}
    \Sigma_\star(t)=\left(\frac{\alpha_t^2}{\beta_t^2}I_d+\frac{1}{\sigma_y^2}A^\top A\right)^{-1},
    \qquad
    \mu_\star(x_t,y,t)=\Sigma_\star(t)\left(\frac{\alpha_t}{\beta_t^2}x_t+\frac{1}{\sigma_y^2}A^\top y\right).
\end{equation}
Equivalently, $D_{\Sigma_\star(t)}(\mu_\star(x_t,y,t))=\E[x_0|x_t,y]$.}

\begin{proof}
By conditional independence of $x_t$ and $y$ given $x_0$,
\begin{equation}
    p(x_t|y)=\frac{1}{p(y)}\int p(x_t|x_0)p(y|x_0)\pdata(x_0)\,\dd x_0.
    \label{eq:app:posterior-integral-full}
\end{equation}
The two Gaussian factors are
\begin{align}
    p(x_t|x_0)&=(2\pi\beta_t^2)^{-d/2}\exp\left(-\frac{\|x_t-\alpha_t x_0\|^2}{2\beta_t^2}\right),\\
    p(y|x_0)&=(2\pi\sigma_y^2)^{-m/2}\exp\left(-\frac{\|y-Ax_0\|^2}{2\sigma_y^2}\right).
\end{align}
Collect the exponent terms that depend on $x_0$:
\begin{align}
    &-\frac{\|x_t-\alpha_t x_0\|^2}{2\beta_t^2}
    -\frac{\|y-Ax_0\|^2}{2\sigma_y^2} \\
    &\quad= -\frac{1}{2}x_0^\top
    \underbrace{\left(\frac{\alpha_t^2}{\beta_t^2}I_d+\frac{1}{\sigma_y^2}A^\top A\right)}_{\Lambda_t}
    x_0
    +
    \underbrace{\left(\frac{\alpha_t}{\beta_t^2}x_t+\frac{1}{\sigma_y^2}A^\top y\right)^\top}_{b_t^\top}
    x_0
    -\frac{\|x_t\|^2}{2\beta_t^2}-\frac{\|y\|^2}{2\sigma_y^2}.
    \label{eq:app:quadratic-collect}
\end{align}
Because $\beta_t>0$, the matrix $\Lambda_t$ is positive definite even when $A$ is rank deficient. Let $\Sigma_\star=\Lambda_t^{-1}$ and $\mu_\star=\Sigma_\star b_t$. Completing the square in \eqref{eq:app:quadratic-collect},
\begin{equation}
    -\frac{1}{2}x_0^\top\Lambda_t x_0+b_t^\top x_0
    =-\frac{1}{2}(x_0-\mu_\star)^\top\Sigma_\star^{-1}(x_0-\mu_\star)
    +\frac{1}{2}b_t^\top\Sigma_\star b_t.
\end{equation}
Therefore the product of the two likelihoods can be factored as
\begin{equation}
    p(x_t|x_0)p(y|x_0)=C_t(x_t,y)\,\N(x_0;\mu_\star,\Sigma_\star),
    \label{eq:app:kernel-factor}
\end{equation}
where all dependence on $x_0$ is contained in the displayed Gaussian and
\begin{equation}
    \log C_t(x_t,y)
    =\mathrm{const}(t)-\frac{\|x_t\|^2}{2\beta_t^2}-\frac{\|y\|^2}{2\sigma_y^2}
    +\frac{1}{2}b_t^\top\Sigma_\star b_t.
    \label{eq:app:normalizer}
\end{equation}
Substituting \eqref{eq:app:kernel-factor} into \eqref{eq:app:posterior-integral-full} gives
\begin{equation}
    p(x_t|y)=\frac{C_t(x_t,y)}{p(y)}\,
    p_{\mathrm{data}}^{\Sigma_\star}(\mu_\star),
    \qquad
    p_{\mathrm{data}}^{\Sigma_\star}(\mu_\star)=\int \N(x_0;\mu_\star,\Sigma_\star)\pdata(x_0)\,\dd x_0.
    \label{eq:app:posterior-smoothed-full}
\end{equation}
Now differentiate \eqref{eq:app:posterior-smoothed-full} with respect to $x_t$. Since $\Sigma_\star$ does not depend on $x_t$ and $\partial b_t/\partial x_t=(\alpha_t/\beta_t^2)I_d$,
\begin{equation}
    \frac{\partial \mu_\star}{\partial x_t}=\frac{\alpha_t}{\beta_t^2}\Sigma_\star.
    \label{eq:app:mu-derivative}
\end{equation}
The normalizer derivative is
\begin{align}
    \nabla_{x_t}\log C_t(x_t,y)
    &=-\frac{x_t}{\beta_t^2}
    +\frac{1}{2}\nabla_{x_t}\left(b_t^\top\Sigma_\star b_t\right) \\
    &=-\frac{x_t}{\beta_t^2}+\frac{\alpha_t}{\beta_t^2}\Sigma_\star b_t
    =-\frac{x_t}{\beta_t^2}+\frac{\alpha_t}{\beta_t^2}\mu_\star.
    \label{eq:app:normalizer-derivative}
\end{align}
The smoothed-density derivative is, by the chain rule and \eqref{eq:app:mu-derivative},
\begin{equation}
    \nabla_{x_t}\log p_{\mathrm{data}}^{\Sigma_\star}(\mu_\star)
    =\frac{\alpha_t}{\beta_t^2}\Sigma_\star
    \nabla_\mu\log p_{\mathrm{data}}^{\Sigma_\star}(\mu)\big|_{\mu=\mu_\star}.
    \label{eq:app:smoothed-derivative}
\end{equation}
Using the anisotropic Tweedie identity \eqref{eq:app:tweedie},
\begin{equation}
    \Sigma_\star\nabla_\mu\log p_{\mathrm{data}}^{\Sigma_\star}(\mu_\star)
    =D_{\Sigma_\star}(\mu_\star)-\mu_\star.
    \label{eq:app:tweedie-substitute}
\end{equation}
Combining \eqref{eq:app:normalizer-derivative}, \eqref{eq:app:smoothed-derivative}, and \eqref{eq:app:tweedie-substitute}, the pivot terms cancel:
\begin{align}
    \nabla_{x_t}\log p(x_t|y)
    &=-\frac{x_t}{\beta_t^2}+\frac{\alpha_t}{\beta_t^2}\mu_\star
    +\frac{\alpha_t}{\beta_t^2}\left(D_{\Sigma_\star}(\mu_\star)-\mu_\star\right)\\
    &=\frac{1}{\beta_t^2}\left(\alpha_tD_{\Sigma_\star}(\mu_\star)-x_t\right).
\end{align}
Finally, under the joint density proportional to $\N(x_0;\mu_\star,\Sigma_\star)\pdata(x_0)$, the posterior mean of $x_0$ is exactly $D_{\Sigma_\star}(\mu_\star)$, so $D_{\Sigma_\star(t)}(\mu_\star(x_t,y,t))=\E[x_0|x_t,y]$.
\end{proof}

\subsection{Proof of Proposition~\ref{prop:posterior-velocity}}
\label{app:derivation:velocity-proof}

For brevity in this proof we write $D^\star(x_t,y,t) \coloneqq D_{\Sigma_\star(t)}(\mu_\star(x_t,y,t))$, which by Theorem~\ref{thm:main} equals $\E[x_0 | x_t, y]$.
The interpolant satisfies $x_t=\alpha_t x_0+\beta_t\epsilon$, hence conditional on $(x_t,y)$,
\begin{equation}
    \E[\epsilon|x_t,y]=\frac{x_t-\alpha_t\E[x_0|x_t,y]}{\beta_t}
    =\frac{x_t-\alpha_tD^\star(x_t,y,t)}{\beta_t}.
\end{equation}
The posterior velocity is the conditional expectation of the path derivative:
\begin{align}
    v_t^y(x_t)
    &=\E[\dot\alpha_t x_0+\dot\beta_t\epsilon| x_t,y]\\
    &=\dot\alpha_t D^\star(x_t,y,t)
    +\dot\beta_t\frac{x_t-\alpha_t D^\star(x_t,y,t)}{\beta_t}\\
    &=\frac{\dot\beta_t}{\beta_t}x_t+
    \left(\dot\alpha_t-\frac{\alpha_t\dot\beta_t}{\beta_t}\right)D^\star(x_t,y,t).
\end{align}
Thus estimating the exact posterior denoiser is equivalent to estimating the exact posterior velocity for the interpolant.

\subsection{Proof of Proposition~\ref{prop:isotropic-pivot}}
\label{app:derivation:isotropic-pivot}

Let
\begin{equation}
    \Lambda_t
    =
    \frac{\alpha_t^2}{\beta_t^2}I_d
    +
    \frac{1}{\sigma_y^2}A^\top A,
    \qquad
    \Sigma_\star(t)=\Lambda_t^{-1}.
\end{equation}
Using $x_t=\alpha_t x_0+\beta_t\epsilon$ and
$y=Ax_0+\sigma_y\eta$, the pivot is
\begin{align}
    \mu_\star
    &=
    \Sigma_\star
    \left(
        \frac{\alpha_t}{\beta_t^2}x_t
        +
        \frac{1}{\sigma_y^2}A^\top y
    \right)\\
    &=
    \Sigma_\star
    \left[
        \left(
            \frac{\alpha_t^2}{\beta_t^2}I_d
            +
            \frac{1}{\sigma_y^2}A^\top A
        \right)x_0
        +
        \frac{\alpha_t}{\beta_t}\epsilon
        +
        \frac{1}{\sigma_y}A^\top\eta
    \right]\\
    &=
    x_0
    +
    \Sigma_\star
    \left(
        \frac{\alpha_t}{\beta_t}\epsilon
        +
        \frac{1}{\sigma_y}A^\top\eta
    \right).
\end{align}
The noise term is Gaussian with mean zero and covariance
\begin{align}
    \Sigma_\star
    \left(
        \frac{\alpha_t^2}{\beta_t^2}I_d
        +
        \frac{1}{\sigma_y^2}A^\top A
    \right)
    \Sigma_\star
    =
    \Sigma_\star\Lambda_t\Sigma_\star
    =
    \Sigma_\star.
\end{align}
Therefore $\mu_\star|x_0,t\sim\N(x_0,\Sigma_\star(t))$, as claimed.

\subsection{Proof of Observation~\ref{obs:one-step-limit}}
\label{app:derivation:one-step}

For EDM, $\alpha_t=1$ and $\beta_t=\sigma_t$. Let
\begin{equation}
    A = U_r S V_r^\top
\end{equation}
be the compact SVD of $A$, where
$S=\mathrm{diag}(s_1,\ldots,s_r)$ contains the positive singular values.
Let $V_0$ be an orthonormal basis for $\mathcal N(A)$. Then
\begin{equation}
    P_{\mathcal N(A)} = V_0V_0^\top,
    \qquad
    A^\dagger = V_rS^{-1}U_r^\top .
\end{equation}
With $\lambda=\sigma_y^2/\sigma_t^2$, the pivot can be written as
\begin{align}
    \mu_\star
    &=
    V_r(S^2+\lambda I)^{-1}
    \left(SU_r^\top y+\lambda V_r^\top x_t\right)
    +
    V_0V_0^\top x_t .
    \label{eq:app:svd-pivot}
\end{align}
Taking $\lambda\to0$ gives
\begin{equation}
    \mu_\star
    \longrightarrow
    V_rS^{-1}U_r^\top y + V_0V_0^\top x_t
    =
    A^\dagger y + P_{\mathcal N(A)}x_t .
\end{equation}

It remains to show that the corresponding anisotropic denoiser converges to the
posterior mean $\E[x_0|y]$. In the same SVD coordinates, the covariance is
\begin{align}
    \Sigma_\star(t)
    &=
    \left(
        \frac{1}{\sigma_t^2}I
        +
        \frac{1}{\sigma_y^2}A^\top A
    \right)^{-1}  \\
    &=
    \sigma_y^2
    V_r(S^2+\lambda I)^{-1}V_r^\top
    +
    \sigma_t^2 V_0V_0^\top .
    \label{eq:app:sigma-svd}
\end{align}
Thus, as $\sigma_t\to\infty$, the row-space covariance converges to
\begin{equation}
    \sigma_y^2 V_rS^{-2}V_r^\top
    =
    \sigma_y^2(A^\top A)^\dagger,
\end{equation}
while the nullspace variance diverges. Therefore the limiting denoising query
contains finite information only in the row space of $A$ and no information in
the nullspace.

More explicitly, the limiting finite-noise observation is
\begin{equation}
    z
    =
    A^\dagger y
    =
    A^\dagger A x_0 + A^\dagger\eta
    =
    P_{\mathcal R(A^\top)}x_0 + \zeta,
    \qquad
    \zeta\sim
    \N\!\left(0,\sigma_y^2(A^\top A)^\dagger\right).
\end{equation}
Since $A^\dagger$ is a deterministic function of $y$, conditioning on $y$
implies conditioning on $z$. Conversely, for Gaussian measurement noise with
known covariance, $z=A^\dagger y$ is a sufficient statistic for $y$ with respect
to $x_0$: the remaining component of $y$ orthogonal to $\mathcal R(A)$ carries no
information about $x_0$. Hence
\begin{equation}
    \E[x_0|z]=\E[x_0|y].
\end{equation}
The limiting anisotropic denoiser is exactly the Bayes estimator associated with
this row-space observation and infinite nullspace uncertainty. Therefore
\begin{equation}
    \lim_{\sigma_t\to\infty}
    D_{\Sigma_\star(t)}(\mu_\star(x_t,y,t))
    =
    \E[x_0|y].
\end{equation}
This proves Observation~\ref{obs:one-step-limit}.

\subsection{Equivalent-Time and GLASS Limit}
\label{app:derivation:equivalent}

When $A^\top A=\gamma^2 I_d$, the covariance in \eqref{eq:sigma-mu-star} is isotropic:
\begin{equation}
    \Sigma_\star(t)=\sigma_\star^2(t)I_d,
    \qquad
    \sigma_\star^2(t)=\frac{\beta_t^2\sigma_y^2}{\alpha_t^2\sigma_y^2+\beta_t^2\gamma^2}.
\end{equation}
In this special case the posterior denoising query can be represented by an equivalent scalar noise level. If the base denoiser is trained for isotropic corruptions with effective noise ratio $\beta_s^2/\alpha_s^2$, we choose $s=t^\star$ such that
\begin{equation}
    \frac{\beta_{t^\star}^2}{\alpha_{t^\star}^2}=\sigma_\star^2(t).
\end{equation}
Then $D_{\Sigma_\star(t)}(\mu_\star)$ can be approximated by the pretrained isotropic denoiser at time $t^\star$. This is the setting in which an equivalent-time, training-free reduction such as GLASS \cite{holderrieth2025glassflowstransitionsampling} is available. For a general inverse problem, however, $A^\top A$ has different eigenvalues and often a nontrivial nullspace; $\Sigma_\star$ is then anisotropic and no single scalar time can represent the posterior denoising kernel. EPS fine-tuning is introduced precisely to learn this missing anisotropic denoising geometry.

\subsection{Connection to ridge regression and Gaussian processes.} 
\label{app:derivation:gp}

Equation~\eqref{eq:sigma-mu-star} is exactly the linear-Gaussian Bayesian update familiar from ridge regression and Gaussian process regression. Here $\sigma_y^{-2} A^\top A$ plays the role of the data precision, $(\alpha_t^2/\beta_t^2) I_d$ plays the role of the prior precision (set by the current diffusion noise level), and $\mu_\star$ is their precision-weighted mean. As $\sigma_t$ grows the prior precision vanishes and $\mu_\star$ converges to the data-only ridge solution, while as $\sigma_t$ shrinks $\mu_\star$ collapses onto $x_t$. The covariance $\Sigma_\star(t)$ is the corresponding posterior covariance, which shrinks along measured directions (large eigenvalues of $A^\top A$) and remains diffuse along weakly observed directions, mirroring the heteroscedastic uncertainty of GP posteriors \cite{williams2006gaussian}.

\clearpage
\section{Implementation Details}
\label{app:implementation}

\paragraph{Architecture.} We use the EDM-ADM
checkpoint~\citep{karras2022elucidating} for ImageNet-$64{\times}64$
(\texttt{edm-imagenet-64x64-cond-adm.pkl}, $\sim$$296$M parameters,
class-conditional with 1000-way one-hot embedding), and an EDM-DDPM++
checkpoint we trained from scratch for FFHQ-$64{\times}64$. EPS extends
the first conv from $3$ to $6$ input channels: the mean pivot
$\mu_\star$ ($3$ channels) concatenated with a task-specific
observation tensor  ($3$ channels). The added input channels
are zero-initialised so the network reproduces the unconditional
pretrained mapping at step zero of fine-tuning. The observation
tensor is the masked observation $y$ for inpainting, the
nearest-neighbour upsampling of $y$ for super-resolution, and the
blurred observation $y$ for deblurring; the total input width is
therefore $6$ channels for every task.

\paragraph{Pivot solver.} For binary masks $\Sigma_\star$ is diagonal
and the pivot solve is element-wise. For $4{\times}$ super-resolution
the structured solve uses average-pool / nearest-upsample primitives
in $O(d)$. For circular blur kernels we diagonalize $A^{\top}A$ by
the 2D FFT; the per-step solve is then a complex element-wise divide
plus an inverse FFT, $O(d \log d)$. Empirically the pivot solve
contributes ${<}1\,\text{ms}$ per step compared to a U-Net forward of
${\sim}19\,\text{ms}$ at batch~$1$.

\paragraph{Optimization.} We use the EDM optimizer stack unchanged:
Adam ($\beta_1{=}0.9$, $\beta_2{=}0.999$, $\varepsilon{=}10^{-8}$);
log-normal noise sampling with $P_{\text{mean}}{=}-1.2$,
$P_{\text{std}}{=}1.2$, $\sigma_{\text{data}}{=}0.5$;
schedule extrema $\sigma_{\min}{=}0.002$, $\sigma_{\max}{=}80$,
$\rho{=}7$. Learning rate is $10^{-4}$ with a $2{\times}10^{6}$-image
linear warm-up. EMA uses a half-life of $500$ kimg with a $5\%$
ramp-up ratio. We weight the loss by the standard EDM weighting
$\lambda(\sigma)={(\sigma^2{+}\sigma_{\text{data}}^2)/(\sigma\,\sigma_{\text{data}})^2}$.
Per-task fine-tuning runs for $10$ epochs ($\sim$$25$k iterations) on
ImageNet-$64$ at batch~$128$ across $4$ NVIDIA B200 GPUs (gradient
accumulation${=}1$); FFHQ-$64$ trains for the same iteration budget
at batch~$192$. End-to-end fine-tuning takes ${\sim}24$\,h on
ImageNet and ${\sim}10$\,h on FFHQ per task.

\paragraph{Operator randomization during training.} We re-sample the
operator at every minibatch step. Random inpainting samples a per-pixel
Bernoulli mask with mask-density $\sim\mathcal{U}(50\%, 70\%)$
(default training density $70\%$). Box inpainting samples a uniformly
random rectangle whose side lengths are drawn from
$\mathcal{U}([H/4, H/2])\times \mathcal{U}([W/4, W/2])$ with margins
$H/16$ at every edge. Motion-deblur kernels are generated with random
length ${\in}\{7, 9, 11, 13, 15\}$ and random angle ${\in}[0,180^{\circ}]$;
Gaussian-deblur kernels use a fixed bandwidth
$\sigma_{\text{blur}}{=}0.75$ at length $11$. Super-resolution applies
a fixed $4{\times}$ average-pool. Observation noise is fixed at
$\sigma_y{=}0.05$ throughout both training and evaluation.

\paragraph{Sampling at inference.} We use the deterministic EDM
Euler ODE sampler for the $20$- and $100$-NFE variants
(\texttt{second\_order=False}). The $1$-NFE variant evaluates the
denoiser once at the highest noise level $\sigma_{\max}$ via the
high-noise pivot of Observation~\ref{obs:one-step-limit}; the resulting
single forward pass returns the conditional MMSE estimator
$\mathbb{E}[x_0|y]$ directly. Class labels at inference time use the
ground-truth ImageNet-1k class for ImageNet-$64$ and are not used for
FFHQ-$64$.

\paragraph{Reproducibility.} All random seeds are fixed; the same
$100$ evaluation images and $10$ posterior seeds per image are used
for every method. The structured-solve, training-loop, and sampler
implementations will be released as open-source upon acceptance, along
with all per-task fine-tuned checkpoints.

\clearpage
\section{Broader Impact}
\label{app:broader-impact}

EPS provides a principled, calibrated approach to posterior sampling
for linear inverse problems. Because it preserves the input/output
structure of standard denoising pretraining, an existing pretrained
prior can be repurposed into an uncertainty-aware posterior sampler
with a lightweight fine-tune rather than a from-scratch retraining.
This makes the method straightforward to adapt across scientific
imaging applications where reliable reconstructions and quantified
posterior uncertainty are valuable, and the closed-form pivot and
covariance offer a transparent handle for analyzing the sampler in any
downstream pipeline. Beyond these benefits, our work does not
introduce societal impacts that go meaningfully beyond those of the
existing generative diffusion priors it builds on; the standard
considerations around dual-use of high-fidelity image generation and
the demographic or domain biases of the underlying training data
continue to apply.

\clearpage
\section{Additional Experiments}
\label{app:additional-experiments}
We collect ablations and extended tables that support the main claims. 

\subsection{Input Configuration Ablation}
\label{app:additional-experiments:inputs}
The central input ablation compares raw-state conditioning to shifted-pivot conditioning while keeping the backbone, compute budget, and EDM warm start fixed. We evaluate four input streams to the denoiser: (a) $[x_t,y,t]$ (Palette-style), the standard conditional baseline that feeds the noised latent alongside the observation and exposes no closed-form posterior structure; (b) $[\mu_\star,t]$, which replaces $x_t$ with the closed-form posterior mean $\mu_\star(x_t,y,\sigma_t)$ obtained by Gaussian-merging $p(x_t\mid x_0)$ and $p(y\mid x_0)$ inside the integral, dropping $y$ from the input; (c) $[\mu_\star,\Sigma_\star,t]$, which additionally passes the per-component posterior covariance $\Sigma_\star$ as a side channel, giving the network explicit access to the local anisotropic uncertainty; and (d) $[\mu_\star,y,t]$ (EPS, ours), the full EPS input where the posterior mean is concatenated with the raw observation, allowing the network to use $y$ both directly and through the analytical pivot.

Table~\ref{tab:input-config} reports this ablation on FFHQ-64 (DDPM++/EDM backbone) and ImageNet-64 (EDM-ADM backbone) at NFE$=$100 with the EDM Euler sampler. Both backbones are warm-started from the same pretrained checkpoint and fine-tuned under matched protocols. The progression Palette$\,\to\,\mu_\star\to\,\mu_\star{+}\Sigma_\star\to$EPS is monotone on every distortion and distributional metric in the average and on most tasks individually, on both datasets. Replacing $x_t$ by $\mu_\star$ explains most of the gain, and the auxiliary observation channel provides an additional anchor.
\begin{table*}[t]
    \centering
    \scriptsize
    \setlength{\tabcolsep}{4pt}
    \caption{\textbf{The shifted pivot $\mu_\star$ is the right input.} Input-configuration ablation on FFHQ-64 (top) and ImageNet-64 (bottom) at NFE$=$100 with the EDM Euler sampler. The Palette$\,\to\,\mu_\star\to\,\mu_\star{+}\Sigma_\star\to$EPS progression is monotone on every metric in the average and on most tasks individually, on both datasets. Best in \textbf{bold}, second-best \underline{underlined}; the EPS row is highlighted in light pink.}
    \label{tab:input-config}
    \resizebox{\textwidth}{!}{%
        \begin{tabular}{llcccccccc}
            \toprule
            Task & Input & PSNR $\uparrow$ & SSIM $\uparrow$ & LPIPS $\downarrow$ & FID $\downarrow$ & MMD-pix $\downarrow$ & MMD-Inc $\downarrow$ & CRPS-pix $\downarrow$ & CRPS-Inc $\downarrow$ \\
            \midrule
            \multicolumn{10}{c}{\textit{FFHQ $64{\times}64$}} \\
            \midrule
            \multirow{4}{*}{Average}
            & $[x_t, y, t]$ (Palette) & 26.03 & 0.8590 & 0.0626 & 31.50 & -6.6e-03 & -4.7e-03 & 3.43 & 3.86 \\
            & $[\mu_\star, t]$ & 26.39 & 0.8583 & 0.0632 & 31.34 & \underline{-6.7e-03} & -4.6e-03 & 3.30 & 3.81 \\
            & $[\mu_\star, \Sigma_\star, t]$ & \underline{26.62} & \underline{0.8636} & \underline{0.0603} & \underline{30.20} & -6.7e-03 & \underline{-4.8e-03} & \underline{3.27} & \underline{3.76} \\
            \epsrow \cellcolor{white} & $[\mu_\star, y, t]$ (EPS) & \textbf{26.69} & \textbf{0.8661} & \textbf{0.0590} & \textbf{29.94} & \textbf{-6.7e-03} & \textbf{-4.9e-03} & \textbf{3.24} & \textbf{3.73} \\
            \midrule
            \multirow{4}{*}{Random inpaint}
            & $[x_t, y, t]$ (Palette) & 25.76 & 0.8809 & 0.0593 & 33.30 & \underline{-6.7e-03} & -4.8e-03 & 3.31 & 3.87 \\
            & $[\mu_\star, t]$ & 24.87 & 0.8564 & 0.0698 & 37.92 & -6.7e-03 & -3.9e-03 & 3.39 & 4.08 \\
            & $[\mu_\star, \Sigma_\star, t]$ & \underline{26.00} & \underline{0.8845} & \underline{0.0546} & \underline{32.10} & -6.7e-03 & \underline{-5.0e-03} & \underline{3.22} & \underline{3.76} \\
            \epsrow \cellcolor{white} & $[\mu_\star, y, t]$ (EPS) & \textbf{26.16} & \textbf{0.8879} & \textbf{0.0533} & \textbf{31.87} & \textbf{-6.7e-03} & \textbf{-5.0e-03} & \textbf{3.16} & \textbf{3.75} \\
            \midrule
            \multirow{4}{*}{Box inpaint}
            & $[x_t, y, t]$ (Palette) & \underline{24.18} & 0.8426 & 0.0577 & 25.09 & -6.6e-03 & -5.3e-03 & \underline{4.02} & 3.47 \\
            & $[\mu_\star, t]$ & 24.14 & \underline{0.8431} & 0.0575 & 24.94 & \textbf{-6.6e-03} & \underline{-5.4e-03} & 4.02 & \underline{3.45} \\
            & $[\mu_\star, \Sigma_\star, t]$ & 24.17 & 0.8430 & \underline{0.0574} & \underline{24.88} & -6.6e-03 & \textbf{-5.4e-03} & 4.03 & 3.45 \\
            \epsrow \cellcolor{white} & $[\mu_\star, y, t]$ (EPS) & \textbf{24.23} & \textbf{0.8448} & \textbf{0.0567} & \textbf{24.74} & \underline{-6.6e-03} & -5.4e-03 & \textbf{4.01} & \textbf{3.45} \\
            \midrule
            \multirow{4}{*}{Super-res ($4{\times}$)}
            & $[x_t, y, t]$ (Palette) & \underline{21.95} & \underline{0.7220} & \underline{0.1273} & \textbf{49.28} & \underline{-6.3e-03} & \underline{-3.0e-03} & \textbf{5.22} & \textbf{5.29} \\
            & $[\mu_\star, t]$ & 21.89 & 0.7179 & 0.1300 & 49.85 & -6.3e-03 & -2.8e-03 & 5.29 & 5.36 \\
            & $[\mu_\star, \Sigma_\star, t]$ & 21.86 & 0.7162 & 0.1308 & 50.14 & -6.2e-03 & -2.7e-03 & 5.30 & 5.39 \\
            \epsrow \cellcolor{white} & $[\mu_\star, y, t]$ (EPS) & \textbf{21.96} & \textbf{0.7232} & \textbf{0.1262} & \underline{49.29} & \textbf{-6.3e-03} & \textbf{-3.0e-03} & \underline{5.23} & \underline{5.30} \\
            \midrule
            \multirow{4}{*}{Gaussian deblur}
            & $[x_t, y, t]$ (Palette) & 30.47 & 0.9397 & 0.0286 & 21.63 & -6.9e-03 & -5.6e-03 & 1.91 & 3.06 \\
            & $[\mu_\star, t]$ & \underline{30.82} & \underline{0.9408} & \textbf{0.0273} & 20.72 & \textbf{-6.9e-03} & -5.7e-03 & \underline{1.84} & 2.99 \\
            & $[\mu_\star, \Sigma_\star, t]$ & 30.82 & 0.9408 & 0.0273 & \underline{20.70} & -6.9e-03 & \textbf{-5.7e-03} & 1.84 & \underline{2.99} \\
            \epsrow \cellcolor{white} & $[\mu_\star, y, t]$ (EPS) & \textbf{30.82} & \textbf{0.9408} & \underline{0.0273} & \textbf{20.68} & \underline{-6.9e-03} & \underline{-5.7e-03} & \textbf{1.84} & \textbf{2.99} \\
            \midrule
            \multirow{4}{*}{Motion deblur}
            & $[x_t, y, t]$ (Palette) & 27.79 & 0.9099 & 0.0404 & 28.23 & -6.8e-03 & -5.0e-03 & 2.67 & 3.59 \\
            & $[\mu_\star, t]$ & 30.23 & 0.9334 & 0.0314 & 23.24 & \underline{-6.9e-03} & -5.4e-03 & 1.94 & \underline{3.18} \\
            & $[\mu_\star, \Sigma_\star, t]$ & \underline{30.25} & \underline{0.9337} & \underline{0.0312} & \underline{23.16} & -6.9e-03 & \underline{-5.4e-03} & \underline{1.94} & 3.18 \\
            \epsrow \cellcolor{white} & $[\mu_\star, y, t]$ (EPS) & \textbf{30.27} & \textbf{0.9339} & \textbf{0.0311} & \textbf{23.10} & \textbf{-6.9e-03} & \textbf{-5.4e-03} & \textbf{1.94} & \textbf{3.18} \\
            \midrule
            \multicolumn{10}{c}{\textit{ImageNet $64{\times}64$}} \\
            \midrule
            \multirow{4}{*}{Average}
            & $[x_t, y, t]$ (Palette) & 24.32 & 0.7673 & 0.1124 & 82.57 & -6.4e-03 & -4.4e-03 & 4.35 & 5.54 \\
            & $[\mu_\star, t]$ & 24.36 & 0.7627 & 0.1151 & 83.64 & -6.4e-03 & -4.3e-03 & 4.31 & 5.55 \\
            & $[\mu_\star, \Sigma_\star, t]$ & \underline{24.52} & \underline{0.7699} & \underline{0.1115} & \underline{82.06} & \underline{-6.4e-03} & \underline{-4.4e-03} & \underline{4.29} & \underline{5.50} \\
            \epsrow \cellcolor{white} & $[\mu_\star, y, t]$ (EPS) & \textbf{24.53} & \textbf{0.7712} & \textbf{0.1103} & \textbf{81.46} & \textbf{-6.4e-03} & \textbf{-4.4e-03} & \textbf{4.27} & \textbf{5.48} \\
            \midrule
            \multirow{4}{*}{Random inpaint}
            & $[x_t, y, t]$ (Palette) & 24.09 & 0.7869 & 0.1011 & 81.88 & -6.5e-03 & -4.4e-03 & 4.16 & 5.52 \\
            & $[\mu_\star, t]$ & 23.53 & 0.7588 & 0.1173 & 88.24 & -6.5e-03 & -4.2e-03 & 4.18 & 5.66 \\
            & $[\mu_\star, \Sigma_\star, t]$ & \textbf{24.35} & \underline{0.7944} & \underline{0.0986} & \underline{80.19} & \underline{-6.5e-03} & \underline{-4.4e-03} & \underline{4.05} & \underline{5.43} \\
            \epsrow \cellcolor{white} & $[\mu_\star, y, t]$ (EPS) & \underline{24.34} & \textbf{0.7948} & \textbf{0.0979} & \textbf{79.60} & \textbf{-6.5e-03} & \textbf{-4.5e-03} & \textbf{4.04} & \textbf{5.41} \\
            \midrule
            \multirow{4}{*}{Box inpaint}
            & $[x_t, y, t]$ (Palette) & 21.12 & 0.7541 & 0.1218 & 92.73 & \underline{-6.1e-03} & -4.1e-03 & 5.92 & 5.93 \\
            & $[\mu_\star, t]$ & \underline{21.19} & \underline{0.7552} & \underline{0.1209} & 91.74 & -6.1e-03 & \underline{-4.2e-03} & \underline{5.88} & 5.86 \\
            & $[\mu_\star, \Sigma_\star, t]$ & 21.18 & 0.7544 & 0.1212 & \textbf{91.03} & -6.1e-03 & \textbf{-4.2e-03} & 5.92 & \textbf{5.82} \\
            \epsrow \cellcolor{white} & $[\mu_\star, y, t]$ (EPS) & \textbf{21.24} & \textbf{0.7569} & \textbf{0.1196} & \underline{91.07} & \textbf{-6.1e-03} & -4.2e-03 & \textbf{5.87} & \underline{5.84} \\
            \midrule
            \multirow{4}{*}{Super-res ($4{\times}$)}
            & $[x_t, y, t]$ (Palette) & \underline{20.24} & \underline{0.5364} & \underline{0.2220} & \textbf{128.76} & \textbf{-5.9e-03} & \underline{-2.8e-03} & \textbf{6.50} & \textbf{7.33} \\
            & $[\mu_\star, t]$ & 20.20 & 0.5308 & 0.2246 & 130.53 & -5.8e-03 & -2.6e-03 & 6.55 & 7.39 \\
            & $[\mu_\star, \Sigma_\star, t]$ & 20.20 & 0.5319 & 0.2250 & 131.17 & -5.8e-03 & -2.7e-03 & 6.54 & 7.40 \\
            \epsrow \cellcolor{white} & $[\mu_\star, y, t]$ (EPS) & \textbf{20.25} & \textbf{0.5369} & \textbf{0.2207} & \underline{128.80} & \underline{-5.9e-03} & \textbf{-2.8e-03} & \underline{6.52} & \underline{7.35} \\
            \midrule
            \multirow{4}{*}{Gaussian deblur}
            & $[x_t, y, t]$ (Palette) & 29.15 & 0.9010 & 0.0491 & \underline{46.62} & \textbf{-6.8e-03} & \textbf{-5.6e-03} & 2.26 & \textbf{4.11} \\
            & $[\mu_\star, t]$ & 29.18 & \underline{0.9014} & \textbf{0.0486} & 46.76 & -6.8e-03 & -5.5e-03 & 2.25 & 4.12 \\
            & $[\mu_\star, \Sigma_\star, t]$ & \underline{29.18} & 0.9014 & 0.0487 & 46.73 & -6.8e-03 & -5.5e-03 & \underline{2.25} & 4.13 \\
            \epsrow \cellcolor{white} & $[\mu_\star, y, t]$ (EPS) & \textbf{29.18} & \textbf{0.9015} & \underline{0.0486} & \textbf{46.55} & \underline{-6.8e-03} & \underline{-5.6e-03} & \textbf{2.25} & \underline{4.11} \\
            \midrule
            \multirow{4}{*}{Motion deblur}
            & $[x_t, y, t]$ (Palette) & 27.02 & 0.8582 & 0.0680 & 62.86 & -6.7e-03 & -5.0e-03 & 2.93 & 4.81 \\
            & $[\mu_\star, t]$ & \textbf{27.68} & \textbf{0.8674} & \textbf{0.0641} & \textbf{60.94} & \textbf{-6.8e-03} & \textbf{-5.1e-03} & \textbf{2.68} & 4.70 \\
            & $[\mu_\star, \Sigma_\star, t]$ & \underline{27.67} & \underline{0.8673} & \underline{0.0642} & \underline{61.19} & \underline{-6.8e-03} & -5.1e-03 & \underline{2.68} & \textbf{4.70} \\
            \epsrow \cellcolor{white} & $[\mu_\star, y, t]$ (EPS) & 27.62 & 0.8661 & 0.0647 & 61.29 & -6.8e-03 & \underline{-5.1e-03} & 2.69 & \underline{4.70} \\
            \bottomrule
        \end{tabular}%
    }
\end{table*}

\clearpage
\subsection{Zero-Shot Pivoting}
\label{app:additional-experiments:zeroshot}
Before fine-tuning, we can feed $\mu_\star$ directly to the pretrained denoiser (with the closest available EDM noise level, or the exact equivalent time in isotropic cases). This is not exact for general $A$ because the pretrained model has not learned anisotropic denoising, but it tests whether the pivot already carries useful posterior information.

Table~\ref{tab:zeroshot} compares zero-shot pivoting to fine-tuned EPS and to Palette under the same 100-step Euler sampler on FFHQ-64 and ImageNet-64. Zero-shot pivoting feeds $\mu_\star$ directly to the pretrained EDM denoiser at the current $\sigma_t$ (no fine-tuning) and runs the standard 100-step Euler loop; fine-tuned EPS uses the same sampler but with the denoiser adapted to take $[\mu_\star,y]$ on each task. Zero-shot pivoting underperforms both Palette and fine-tuned EPS on every task, confirming that the pretrained denoiser does not natively handle the anisotropic geometry of $\Sigma_\star(t)$, and that the EPS fine-tuning step is what unlocks the benefit of the pivot.

\begin{table*}[ht!]
    \centering
    \scriptsize
    \setlength{\tabcolsep}{4pt}
    \caption{\textbf{The pivot needs the fine-tuning step.} Zero-shot pivoting vs.\ fine-tuned EPS at NFE$=$100 (Euler sampler) on FFHQ-64 (top) and ImageNet-64 (bottom). Feeding $\mu_\star$ to the pretrained denoiser without adaptation underperforms Palette and fine-tuned EPS on every task and every metric, on both datasets. Best in \textbf{bold}, second-best \underline{underlined}; the EPS row is highlighted in light pink.}
    \label{tab:zeroshot}
    \resizebox{\textwidth}{!}{%
        \begin{tabular}{llcccccccc}
            \toprule
            Task & Method & PSNR $\uparrow$ & SSIM $\uparrow$ & LPIPS $\downarrow$ & FID $\downarrow$ & MMD-pix $\downarrow$ & MMD-Inc $\downarrow$ & CRPS-pix $\downarrow$ & CRPS-Inc $\downarrow$ \\
            \midrule
            \multicolumn{10}{c}{\textit{FFHQ $64{\times}64$}} \\
            \midrule
            \multirow{3}{*}{Average}
            & Palette & \underline{26.03} & \underline{0.8590} & \underline{0.0626} & \underline{31.50} & \underline{-6.65e-03} & \underline{-4.74e-03} & \underline{3.43} & \underline{3.86} \\
            & Zero-shot EPS pivot & 22.97 & 0.7386 & 0.1503 & 70.88 & -3.14e-03 & 3.56e-02 & 5.61 & 6.68 \\
            \epsrow \cellcolor{white} \cellcolor{white} & EPS fine-tuned & \textbf{26.69} & \textbf{0.8661} & \textbf{0.0590} & \textbf{29.94} & \textbf{-6.69e-03} & \textbf{-4.89e-03} & \textbf{3.24} & \textbf{3.73} \\
            \midrule
            \multirow{3}{*}{Random inpaint}
            & Palette & \underline{25.76} & \underline{0.8809} & \underline{0.0593} & \underline{33.30} & \underline{-6.71e-03} & \underline{-4.76e-03} & \underline{3.31} & \underline{3.87} \\
            & Zero-shot EPS pivot & 16.36 & 0.5302 & 0.2985 & 121.98 & 6.58e-03 & 1.02e-01 & 9.08 & 8.66 \\
            \epsrow \cellcolor{white} & EPS fine-tuned & \textbf{26.16} & \textbf{0.8879} & \textbf{0.0533} & \textbf{31.87} & \textbf{-6.74e-03} & \textbf{-5.00e-03} & \textbf{3.16} & \textbf{3.75} \\
            \midrule
            \multirow{3}{*}{Box inpaint}
            & Palette & \underline{24.18} & \underline{0.8426} & \underline{0.0577} & \underline{25.09} & \underline{-6.58e-03} & \underline{-5.30e-03} & \underline{4.02} & \underline{3.47} \\
            & Zero-shot EPS pivot & 18.75 & 0.7065 & 0.1699 & 84.24 & -5.19e-03 & 4.32e-02 & 6.71 & 6.69 \\
            \epsrow \cellcolor{white} & EPS fine-tuned & \textbf{24.23} & \textbf{0.8448} & \textbf{0.0567} & \textbf{24.74} & \textbf{-6.59e-03} & \textbf{-5.35e-03} & \textbf{4.01} & \textbf{3.45} \\
            \midrule
            \multirow{3}{*}{Super-res ($4{\times}$)}
            & Palette & \underline{21.95} & \underline{0.7220} & \underline{0.1273} & \textbf{49.28} & \underline{-6.29e-03} & \underline{-2.98e-03} & \textbf{5.22} & \textbf{5.29} \\
            & Zero-shot EPS pivot & 20.50 & 0.6234 & 0.1713 & 57.45 & -3.68e-03 & 4.23e-03 & 6.62 & 6.20 \\
            \epsrow \cellcolor{white} & EPS fine-tuned & \textbf{21.96} & \textbf{0.7232} & \textbf{0.1262} & \underline{49.29} & \textbf{-6.30e-03} & \textbf{-3.00e-03} & \underline{5.23} & \underline{5.30} \\
            \midrule
            \multirow{3}{*}{Gaussian deblur}
            & Palette & \underline{30.47} & \underline{0.9397} & \underline{0.0286} & \underline{21.63} & \underline{-6.90e-03} & \underline{-5.65e-03} & \underline{1.91} & \underline{3.06} \\
            & Zero-shot EPS pivot & 29.79 & 0.9197 & 0.0540 & 43.77 & -6.73e-03 & 1.31e-02 & 2.76 & 5.79 \\
            \epsrow \cellcolor{white} & EPS fine-tuned & \textbf{30.82} & \textbf{0.9408} & \textbf{0.0273} & \textbf{20.68} & \textbf{-6.91e-03} & \textbf{-5.73e-03} & \textbf{1.84} & \textbf{2.99} \\
            \midrule
            \multirow{3}{*}{Motion deblur}
            & Palette & 27.79 & 0.9099 & \underline{0.0404} & \underline{28.23} & \underline{-6.77e-03} & \underline{-4.98e-03} & \underline{2.67} & \underline{3.59} \\
            & Zero-shot EPS pivot & \underline{29.42} & \underline{0.9135} & 0.0579 & 46.95 & -6.69e-03 & 1.54e-02 & 2.89 & 6.05 \\
            \epsrow \cellcolor{white} & EPS fine-tuned & \textbf{30.27} & \textbf{0.9339} & \textbf{0.0311} & \textbf{23.10} & \textbf{-6.90e-03} & \textbf{-5.39e-03} & \textbf{1.94} & \textbf{3.18} \\
            \midrule
            \multicolumn{10}{c}{\textit{ImageNet $64{\times}64$}} \\
            \midrule
            \multirow{3}{*}{Average}
            & Palette & \underline{24.32} & \underline{0.7673} & \underline{0.1124} & \underline{82.57} & \underline{-6.40e-03} & \underline{-4.38e-03} & \underline{4.35} & \underline{5.54} \\
            & Zero-shot EPS pivot & 22.02 & 0.6398 & 0.2461 & 146.88 & 2.48e-03 & 1.06e-02 & 6.96 & 8.89 \\
            \epsrow \cellcolor{white} & EPS fine-tuned & \textbf{24.53} & \textbf{0.7712} & \textbf{0.1103} & \textbf{81.46} & \textbf{-6.42e-03} & \textbf{-4.45e-03} & \textbf{4.27} & \textbf{5.48} \\
            \midrule
            \multirow{3}{*}{Random inpaint}
            & Palette & \underline{24.09} & \underline{0.7869} & \underline{0.1011} & \underline{81.88} & \underline{-6.50e-03} & \underline{-4.41e-03} & \underline{4.16} & \underline{5.52} \\
            & Zero-shot EPS pivot & 16.92 & 0.4168 & 0.4350 & 223.68 & 2.61e-02 & 4.15e-02 & 10.14 & 11.23 \\
            \epsrow \cellcolor{white} & EPS fine-tuned & \textbf{24.34} & \textbf{0.7948} & \textbf{0.0979} & \textbf{79.60} & \textbf{-6.52e-03} & \textbf{-4.51e-03} & \textbf{4.04} & \textbf{5.41} \\
            \midrule
            \multirow{3}{*}{Box inpaint}
            & Palette & \underline{21.12} & \underline{0.7541} & \underline{0.1218} & \underline{92.73} & \underline{-6.10e-03} & \underline{-4.12e-03} & \underline{5.92} & \underline{5.93} \\
            & Zero-shot EPS pivot & 18.70 & 0.6442 & 0.2333 & 139.38 & -1.38e-03 & 2.45e-03 & 8.56 & 8.10 \\
            \epsrow \cellcolor{white} & EPS fine-tuned & \textbf{21.24} & \textbf{0.7569} & \textbf{0.1196} & \textbf{91.07} & \textbf{-6.11e-03} & \textbf{-4.23e-03} & \textbf{5.87} & \textbf{5.84} \\
            \midrule
            \multirow{3}{*}{Super-res ($4{\times}$)}
            & Palette & \underline{20.24} & \underline{0.5364} & \underline{0.2220} & \textbf{128.76} & \textbf{-5.86e-03} & \underline{-2.79e-03} & \textbf{6.50} & \textbf{7.33} \\
            & Zero-shot EPS pivot & 19.85 & 0.4663 & 0.3035 & 160.80 & -9.25e-04 & 6.29e-03 & 7.91 & 8.72 \\
            \epsrow \cellcolor{white} & EPS fine-tuned & \textbf{20.25} & \textbf{0.5369} & \textbf{0.2207} & \underline{128.80} & \underline{-5.86e-03} & \textbf{-2.84e-03} & \underline{6.52} & \underline{7.35} \\
            \midrule
            \multirow{3}{*}{Gaussian deblur}
            & Palette & \underline{29.15} & \underline{0.9010} & \underline{0.0491} & \underline{46.62} & \textbf{-6.83e-03} & \textbf{-5.56e-03} & \underline{2.26} & \textbf{4.11} \\
            & Zero-shot EPS pivot & 27.83 & 0.8507 & 0.1211 & 97.87 & -5.85e-03 & 7.94e-04 & 3.87 & 7.97 \\
            \epsrow \cellcolor{white} & EPS fine-tuned & \textbf{29.18} & \textbf{0.9015} & \textbf{0.0486} & \textbf{46.55} & \underline{-6.83e-03} & \underline{-5.55e-03} & \textbf{2.25} & \underline{4.11} \\
            \midrule
            \multirow{3}{*}{Motion deblur}
            & Palette & \underline{27.02} & \underline{0.8582} & \underline{0.0680} & \underline{62.86} & \underline{-6.73e-03} & \underline{-5.02e-03} & \underline{2.93} & \underline{4.81} \\
            & Zero-shot EPS pivot & 26.82 & 0.8212 & 0.1374 & 112.64 & -5.59e-03 & 2.19e-03 & 4.31 & 8.42 \\
            \epsrow \cellcolor{white} & EPS fine-tuned & \textbf{27.62} & \textbf{0.8661} & \textbf{0.0647} & \textbf{61.29} & \textbf{-6.77e-03} & \textbf{-5.11e-03} & \textbf{2.69} & \textbf{4.70} \\
            \bottomrule
        \end{tabular}%
    }
\end{table*}

\clearpage
\subsection{Palette vs EPS}
\label{app:additional-experiments:warmstart}

Figure~\ref{fig:finetune} reports fine-tuning convergence curves for EPS and Palette warm-started from the same pretrained EDM checkpoint. Rows index five (dataset, task) pairs (ImageNet-64 random inpainting, ImageNet-64 motion deblurring, FFHQ-64 random inpainting, FFHQ-64 motion deblurring, FFHQ-64 Gaussian deblurring); columns track training loss, PSNR, SSIM, LPIPS, and FID against the number of fine-tuning iterations under the matched 100-step Euler sampler. Three patterns hold across all rows. First, EPS starts from a much better initialization on every metric: at iteration zero, the shifted pivot $\mu_\star$ already encodes enough of the measurement geometry that PSNR and SSIM are within a few units of their converged values, while Palette has to climb from the unconditional prior. Second, EPS reaches its asymptotic LPIPS and FID in a small fraction of the iterations Palette needs and stays at least as low for the rest of training. Third, the gap is largest on the deblurring tasks, where the pivot provides the strongest measurement signal at initialization, and on the perceptual and distributional metrics (LPIPS, FID), where the conditioning structure matters most. This is consistent with the structural-locality argument: EPS preserves the input/output type and Gaussian-corrupted-target geometry of the pretrained denoising task, and only adapts to the operator-induced anisotropic covariance $\Sigma_\star(t)$.

\begin{figure*}[ht!]
\centering
\includegraphics[width=\textwidth]{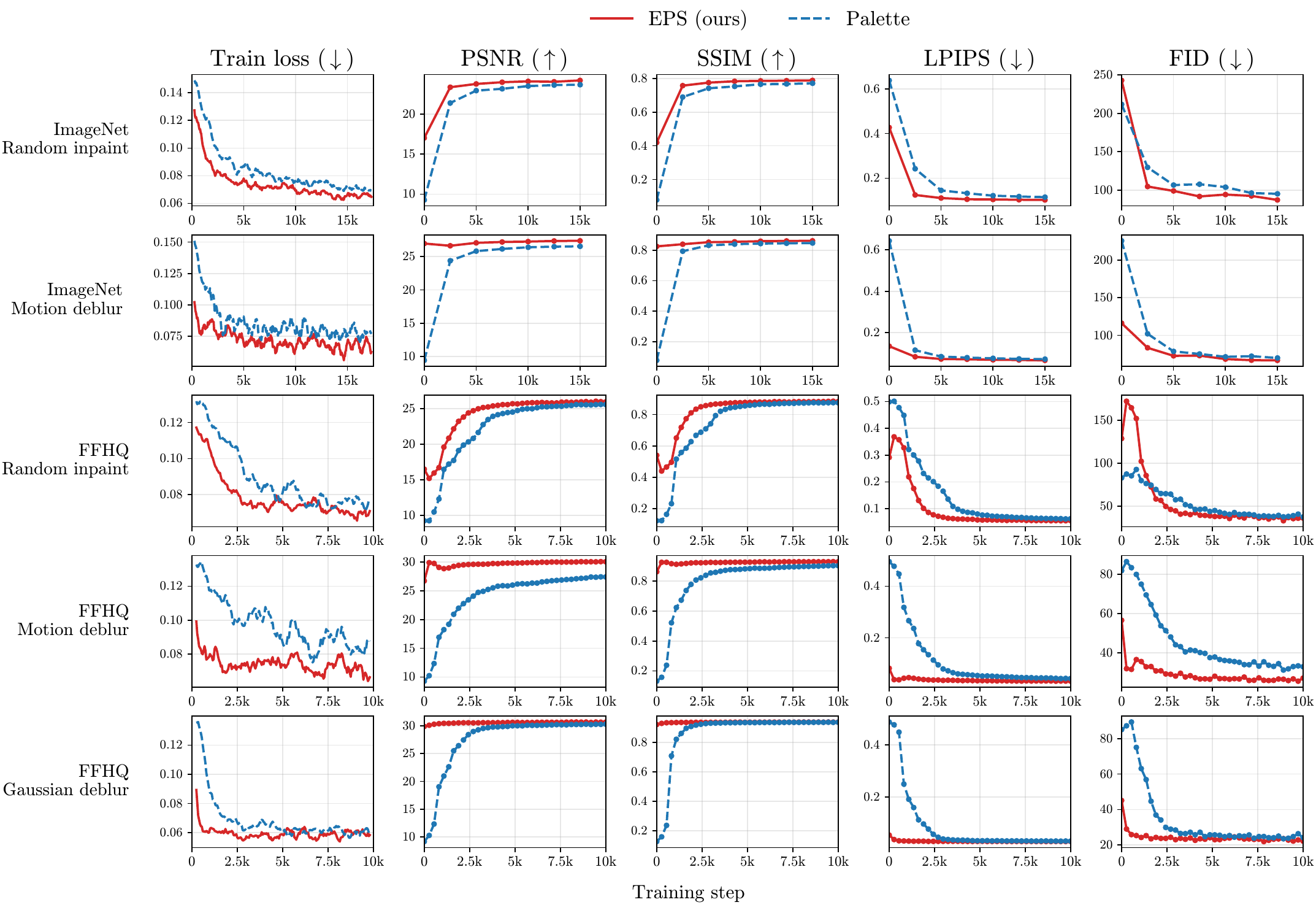}
\caption{\textbf{EPS converges faster than Palette from the same warm start.} Fine-tuning curves for EPS and Palette, both initialized from the same pretrained EDM checkpoint, on five (dataset, task) pairs (rows) and five metrics (columns: training loss, PSNR, SSIM, LPIPS, FID) under the matched 100-step Euler sampler. EPS starts from a markedly better initialization on every metric and reaches its asymptote in a small fraction of the iterations Palette needs.}
\label{fig:finetune}
\end{figure*}

\clearpage
\subsection{Sampling Efficiency}
\label{app:additional-experiments:sampling}
We sweep NFE at inference from 5 to 100 across all five tasks, with all methods using a 1-NFE-per-step Euler sampler under matched conditions.

Figures~\ref{fig:sampling-steps:ffhq} and~\ref{fig:sampling-steps:imagenet} report PSNR, FID, and Inception-feature CRPS as a function of sampler iterations on FFHQ-64 and ImageNet-64 respectively. Rows index the five tasks (random inpaint, box inpaint, $4{\times}$ super-resolution, Gaussian deblur, motion deblur); columns track the three reported metrics. EPS reaches its asymptotic FID and CRPS within roughly 15-20 steps on every task and stays flat thereafter, while sampling-based baselines either fail to reach the same level (DPS, MPGD) or are slower to converge (DDNM, $\Pi$GDM). The flat right tail of the EPS curves is the practical justification for the 20-NFE setting reported in the main results (Tables~\ref{tab:imagenet} and~\ref{tab:ffhq}), and the gap to the strongest sampling-based baseline ($\Pi$GDM) widens on the deblurring tasks under the more diverse ImageNet distribution.

\begin{figure}[p]
\centering
\includegraphics[width=\textwidth]{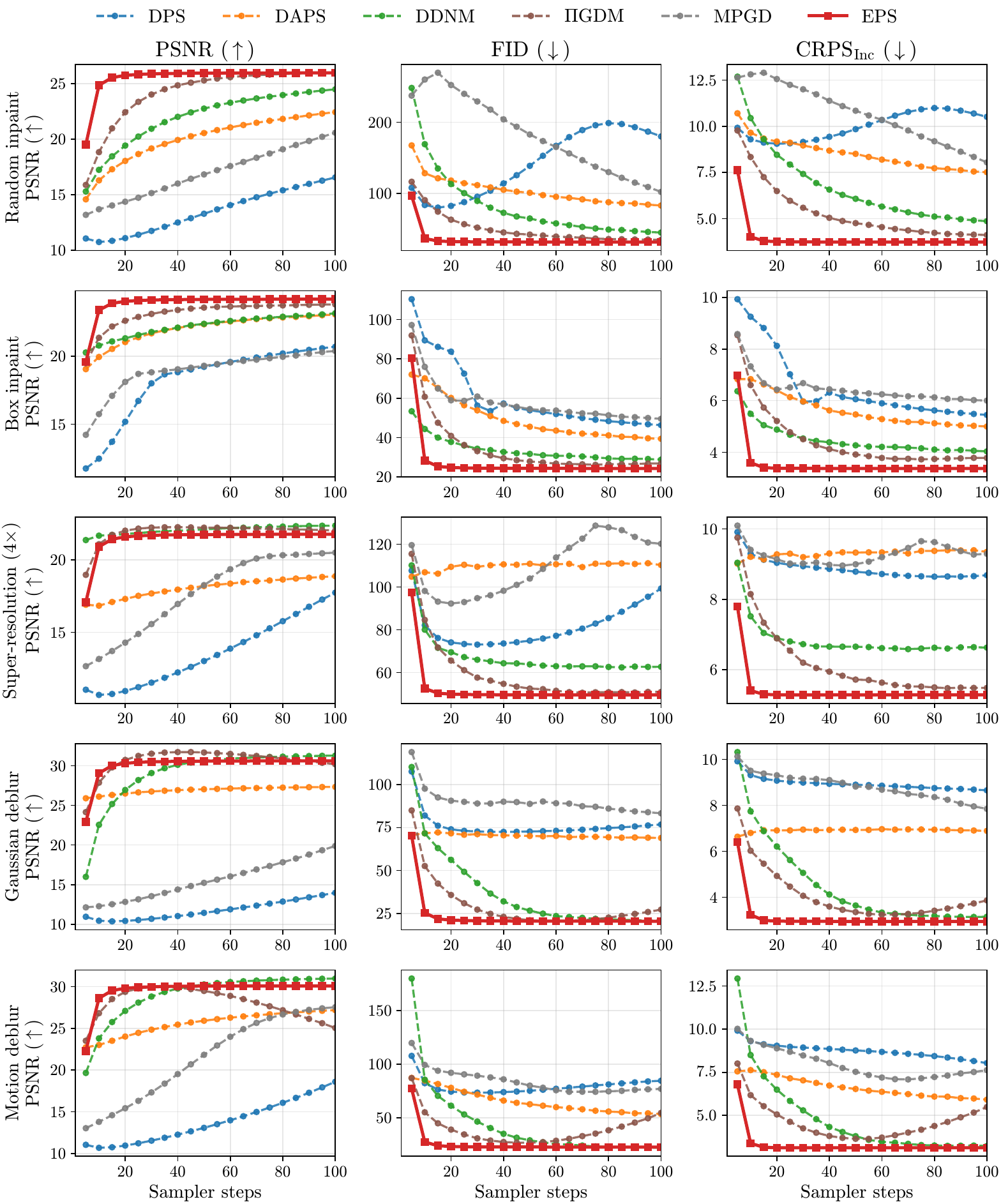}
\caption{\textbf{EPS plateaus by 15-20 steps on FFHQ-64.} Sampling-step sensitivity on FFHQ-64 across the five tasks (rows). Columns report PSNR ($\uparrow$), FID ($\downarrow$), and Inception-feature CRPS ($\downarrow$) versus sampler iterations under a 1-NFE-per-step Euler sampler. EPS reaches its asymptote within roughly 15-20 steps on every task and remains best or tied-best on FID and CRPS thereafter.}
\label{fig:sampling-steps:ffhq}
\end{figure}

\begin{figure}[p]
\centering
\includegraphics[width=\textwidth]{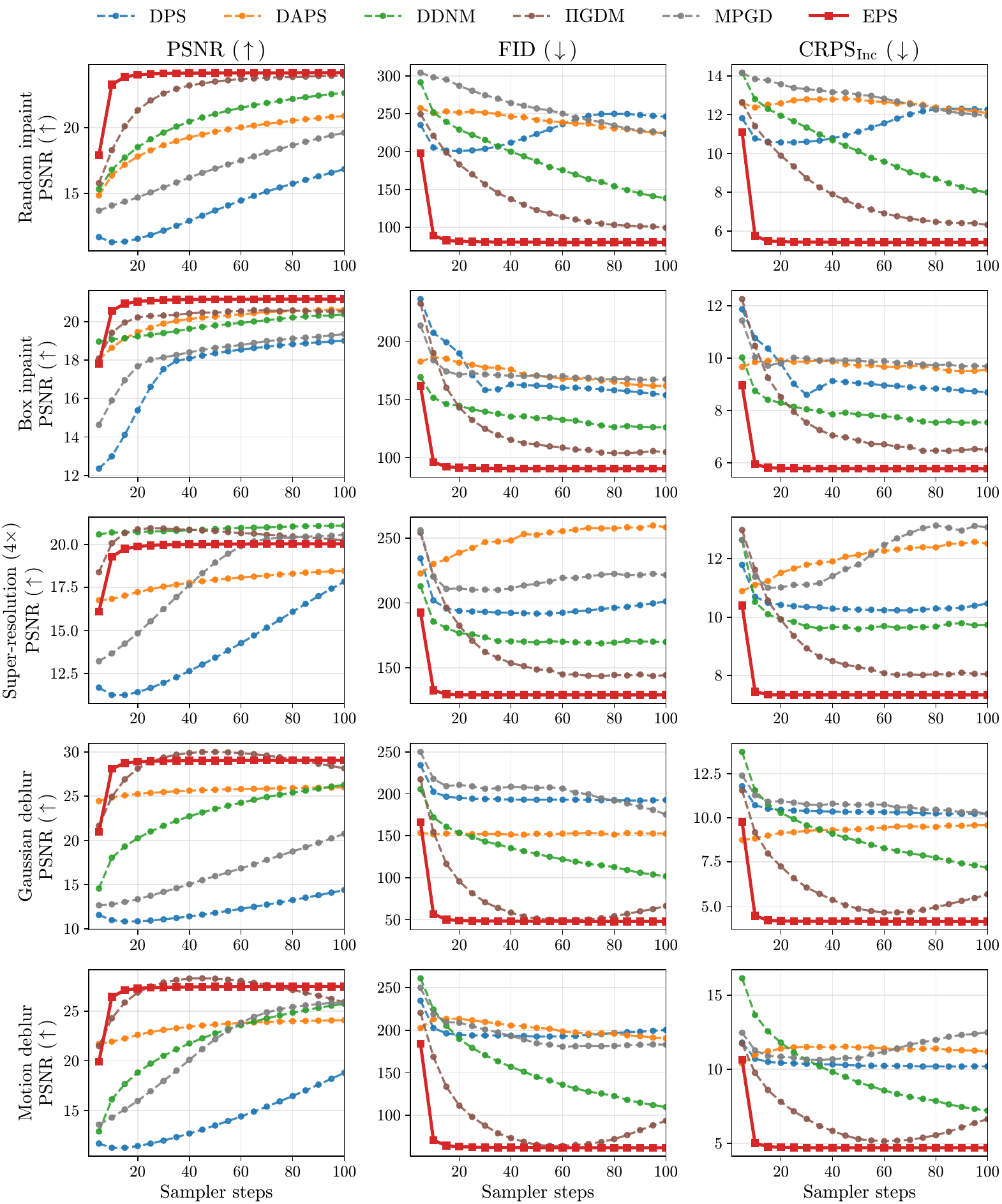}
\caption{\textbf{The plateau transfers to ImageNet-64.} Sampling-step sensitivity on ImageNet-64; same layout as Fig.~\ref{fig:sampling-steps:ffhq}. EPS plateaus at the same step count on a more diverse class-conditional distribution, and the gap between EPS and the strongest sampling-based baseline ($\Pi$GDM) widens on the deblurring tasks.}
\label{fig:sampling-steps:imagenet}
\end{figure}

\clearpage
\subsection{Amortized Variant Across All Five Tasks}
\label{app:additional-experiments:amortized}
As a deployment-friendly alternative to per-task fine-tuning, we train a single EPS checkpoint across all five tasks using uniformly sampled operators per training step and no task indicator at the input.

Table~\ref{tab:amortized} compares per-task EPS, amortized EPS, and Palette under matched compute on FFHQ-64 (top) and ImageNet-64 (bottom). On FFHQ-64 we report two amortized snapshots, at 55k and 160k training steps; the 160k checkpoint surpasses per-task EPS on every task and every metric, indicating that a single network can absorb all five operators without loss. On ImageNet-64, the amortized model is within $0.1$-$0.2$ dB PSNR of per-task EPS and matches Palette or better on the distributional metrics, while requiring 5$\times$ less storage and a single set of weights at deployment.
\begin{table*}[ht!]
    \centering
    \scriptsize
    \setlength{\tabcolsep}{4pt}
    \caption{\textbf{Amortization works.} Amortized EPS vs.\ per-task EPS on FFHQ-64 (top) and ImageNet-64 (bottom) at NFE$=$100 with the EDM Euler sampler. The FFHQ amortized model is reported at two training-step snapshots (55k, 160k); the 160k snapshot surpasses per-task EPS across every task and metric. The ImageNet amortized model (single 296M-param ADM checkpoint, $\sim$60 epochs) matches per-task EPS within $0.1$--$0.2$ dB PSNR and matches Palette or better on the distributional metrics. Best in \textbf{bold}, second-best \underline{underlined}; amortized rows are highlighted in light pink.}
    \label{tab:amortized}
    \resizebox{\textwidth}{!}{%
        \begin{tabular}{llcccccccc}
            \toprule
            Task & Method & PSNR $\uparrow$ & SSIM $\uparrow$ & LPIPS $\downarrow$ & FID $\downarrow$ & MMD-pix $\downarrow$ & MMD-Inc $\downarrow$ & CRPS-pix $\downarrow$ & CRPS-Inc $\downarrow$ \\
            \midrule
            \multicolumn{10}{c}{\textit{FFHQ $64{\times}64$}} \\
            \midrule
            \multirow{4}{*}{Average}
            & Palette & 26.03 & 0.8590 & 0.0626 & 31.50 & -6.65e-03 & -4.74e-03 & 3.43 & 3.86 \\
            & Per-task EPS & 26.69 & 0.8661 & 0.0590 & 29.94 & -6.69e-03 & -4.89e-03 & 3.24 & 3.73 \\
            & \epscell{Amortized EPS (55k steps)} & \epscell{\underline{26.95}} & \epscell{\underline{0.8745}} & \epscell{\underline{0.0552}} & \epscell{\underline{27.86}} & \epscell{\underline{-6.71e-03}} & \epscell{\underline{-5.20e-03}} & \epscell{\underline{3.10}} & \epscell{\underline{3.51}} \\
            & \epscell{Amortized EPS (160k steps)} & \epscell{\textbf{27.31}} & \epscell{\textbf{0.8830}} & \epscell{\textbf{0.0513}} & \epscell{\textbf{26.16}} & \epscell{\textbf{-6.74e-03}} & \epscell{\textbf{-5.25e-03}} & \epscell{\textbf{2.98}} & \epscell{\textbf{3.38}} \\
            \midrule
            \multirow{4}{*}{Random inpaint}
            & Palette & 25.76 & 0.8809 & 0.0593 & 33.30 & -6.71e-03 & -4.76e-03 & 3.31 & 3.87 \\
            & Per-task EPS & 26.16 & 0.8879 & 0.0533 & 31.87 & \underline{-6.74e-03} & -5.00e-03 & 3.16 & 3.75 \\
            & \epscell{Amortized EPS (55k steps)} & \epscell{\underline{26.27}} & \epscell{\underline{0.8920}} & \epscell{\underline{0.0519}} & \epscell{\underline{29.76}} & \epscell{\underline{-6.74e-03}} & \epscell{\underline{-5.20e-03}} & \epscell{\underline{3.10}} & \epscell{\underline{3.56}} \\
            & \epscell{Amortized EPS (160k steps)} & \epscell{\textbf{26.73}} & \epscell{\textbf{0.9001}} & \epscell{\textbf{0.0478}} & \epscell{\textbf{27.57}} & \epscell{\textbf{-6.77e-03}} & \epscell{\textbf{-5.34e-03}} & \epscell{\textbf{2.96}} & \epscell{\textbf{3.40}} \\
            \midrule
            \multirow{4}{*}{Box inpaint}
            & Palette & 24.18 & 0.8426 & 0.0577 & 25.09 & -6.58e-03 & -5.30e-03 & 4.02 & 3.47 \\
            & Per-task EPS & 24.23 & 0.8448 & 0.0567 & 24.74 & -6.59e-03 & -5.35e-03 & 4.01 & 3.45 \\
            & \epscell{Amortized EPS (55k steps)} & \epscell{\underline{24.75}} & \epscell{\underline{0.8580}} & \epscell{\underline{0.0517}} & \epscell{\underline{22.25}} & \epscell{\underline{-6.64e-03}} & \epscell{\underline{-5.60e-03}} & \epscell{\underline{3.74}} & \epscell{\underline{3.18}} \\
            & \epscell{Amortized EPS (160k steps)} & \epscell{\textbf{25.14}} & \epscell{\textbf{0.8663}} & \epscell{\textbf{0.0486}} & \epscell{\textbf{21.07}} & \epscell{\textbf{-6.67e-03}} & \epscell{\textbf{-5.68e-03}} & \epscell{\textbf{3.61}} & \epscell{\textbf{3.07}} \\
            \midrule
            \multirow{4}{*}{Super-res ($4{\times}$)}
            & Palette & 21.95 & 0.7220 & 0.1273 & 49.28 & -6.29e-03 & -2.98e-03 & 5.22 & 5.29 \\
            & Per-task EPS & 21.96 & 0.7232 & 0.1262 & 49.29 & -6.30e-03 & -3.00e-03 & 5.23 & 5.30 \\
            & \epscell{Amortized EPS (55k steps)} & \epscell{\underline{22.33}} & \epscell{\underline{0.7433}} & \epscell{\underline{0.1158}} & \epscell{\underline{45.53}} & \epscell{\underline{-6.36e-03}} & \epscell{\textbf{-3.80e-03}} & \epscell{\underline{4.96}} & \epscell{\underline{4.89}} \\
            & \epscell{Amortized EPS (160k steps)} & \epscell{\textbf{22.74}} & \epscell{\textbf{0.7633}} & \epscell{\textbf{0.1072}} & \epscell{\textbf{43.42}} & \epscell{\textbf{-6.42e-03}} & \epscell{\underline{-3.66e-03}} & \epscell{\textbf{4.74}} & \epscell{\textbf{4.74}} \\
            \midrule
            \multirow{4}{*}{Gaussian deblur}
            & Palette & 30.47 & 0.9397 & 0.0286 & 21.63 & -6.90e-03 & -5.65e-03 & 1.91 & 3.06 \\
            & Per-task EPS & 30.82 & 0.9408 & 0.0273 & 20.68 & \textbf{-6.91e-03} & -5.73e-03 & 1.84 & 2.99 \\
            & \epscell{Amortized EPS (55k steps)} & \epscell{\underline{31.01}} & \epscell{\underline{0.9433}} & \epscell{\underline{0.0260}} & \epscell{\underline{19.54}} & \epscell{\textbf{-6.92e-03}} & \epscell{\underline{-5.82e-03}} & \epscell{\underline{1.79}} & \epscell{\underline{2.87}} \\
            & \epscell{Amortized EPS (160k steps)} & \epscell{\textbf{31.19}} & \epscell{\textbf{0.9451}} & \epscell{\textbf{0.0250}} & \epscell{\textbf{18.57}} & \epscell{\textbf{-6.92e-03}} & \epscell{\textbf{-5.87e-03}} & \epscell{\textbf{1.76}} & \epscell{\textbf{2.80}} \\
            \midrule
            \multirow{4}{*}{Motion deblur}
            & Palette & 27.79 & 0.9099 & 0.0404 & 28.23 & -6.77e-03 & -4.98e-03 & 2.67 & 3.59 \\
            & Per-task EPS & 30.27 & 0.9339 & 0.0311 & 23.10 & -6.90e-03 & -5.39e-03 & 1.94 & 3.18 \\
            & \epscell{Amortized EPS (55k steps)} & \epscell{\underline{30.37}} & \epscell{\underline{0.9358}} & \epscell{\underline{0.0307}} & \epscell{\underline{22.21}} & \epscell{\textbf{-6.91e-03}} & \epscell{\underline{-5.59e-03}} & \epscell{\underline{1.90}} & \epscell{\underline{3.03}} \\
            & \epscell{Amortized EPS (160k steps)} & \epscell{\textbf{30.74}} & \epscell{\textbf{0.9402}} & \epscell{\textbf{0.0281}} & \epscell{\textbf{20.17}} & \epscell{\textbf{-6.91e-03}} & \epscell{\textbf{-5.71e-03}} & \epscell{\textbf{1.83}} & \epscell{\textbf{2.88}} \\
            \midrule
            \multicolumn{10}{c}{\textit{ImageNet $64{\times}64$}} \\
            \midrule
            \multirow{3}{*}{Average}
            & Palette & 24.32 & 0.7673 & 0.1124 & 82.57 & -6.40e-03 & -4.38e-03 & 4.35 & 5.54 \\
            & Per-task EPS & \textbf{24.53} & \textbf{0.7712} & \textbf{0.1103} & \textbf{81.46} & \textbf{-6.42e-03} & \textbf{-4.45e-03} & \textbf{4.27} & \textbf{5.48} \\
            & \epscell{Amortized EPS} & \epscell{\underline{24.41}} & \epscell{\underline{0.7677}} & \epscell{\underline{0.1116}} & \epscell{\underline{81.82}} & \epscell{\underline{-6.41e-03}} & \epscell{\underline{-4.39e-03}} & \epscell{\underline{4.29}} & \epscell{\underline{5.50}} \\
            \midrule
            \multirow{3}{*}{Random inpaint}
            & Palette & 24.09 & 0.7869 & 0.1011 & 81.88 & -6.50e-03 & -4.41e-03 & 4.16 & 5.52 \\
            & Per-task EPS & \textbf{24.34} & \textbf{0.7948} & \textbf{0.0979} & \textbf{79.60} & \textbf{-6.52e-03} & \textbf{-4.51e-03} & \textbf{4.04} & \textbf{5.41} \\
            & \epscell{Amortized EPS} & \epscell{\underline{24.14}} & \epscell{\underline{0.7890}} & \epscell{\underline{0.1007}} & \epscell{\underline{81.27}} & \epscell{\textbf{-6.52e-03}} & \epscell{\underline{-4.42e-03}} & \epscell{\underline{4.08}} & \epscell{\underline{5.48}} \\
            \midrule
            \multirow{3}{*}{Box inpaint}
            & Palette & \underline{21.12} & 0.7541 & 0.1218 & 92.73 & \underline{-6.10e-03} & -4.12e-03 & 5.92 & 5.93 \\
            & Per-task EPS & \textbf{21.24} & \textbf{0.7569} & \textbf{0.1196} & \underline{91.07} & \textbf{-6.11e-03} & \underline{-4.23e-03} & \textbf{5.87} & \underline{5.84} \\
            & \epscell{Amortized EPS} & \epscell{\underline{21.12}} & \epscell{\underline{0.7549}} & \epscell{\underline{0.1199}} & \epscell{\textbf{89.59}} & \epscell{\underline{-6.10e-03}} & \epscell{\textbf{-4.34e-03}} & \epscell{\underline{5.88}} & \epscell{\textbf{5.81}} \\
            \midrule
            \multirow{3}{*}{Super-res ($4{\times}$)}
            & Palette & \underline{20.24} & \underline{0.5364} & \underline{0.2220} & \textbf{128.76} & \textbf{-5.86e-03} & \underline{-2.79e-03} & \textbf{6.50} & \textbf{7.33} \\
            & Per-task EPS & \textbf{20.25} & \textbf{0.5369} & \textbf{0.2207} & \underline{128.80} & \textbf{-5.86e-03} & \textbf{-2.84e-03} & \underline{6.52} & \underline{7.35} \\
            & \epscell{Amortized EPS} & \epscell{20.15} & \epscell{0.5308} & \epscell{0.2238} & \epscell{130.55} & \epscell{-5.85e-03} & \epscell{-2.54e-03} & \epscell{6.56} & \epscell{7.38} \\
            \midrule
            \multirow{3}{*}{Gaussian deblur}
            & Palette & 29.15 & 0.9010 & 0.0491 & 46.62 & \textbf{-6.83e-03} & \underline{-5.56e-03} & 2.26 & \underline{4.11} \\
            & Per-task EPS & \underline{29.18} & \underline{0.9015} & \underline{0.0486} & \underline{46.55} & \textbf{-6.83e-03} & -5.55e-03 & \underline{2.25} & \underline{4.11} \\
            & \epscell{Amortized EPS} & \epscell{\textbf{29.19}} & \epscell{\textbf{0.9017}} & \epscell{\textbf{0.0478}} & \epscell{\textbf{45.33}} & \epscell{\textbf{-6.83e-03}} & \epscell{\textbf{-5.61e-03}} & \epscell{\textbf{2.24}} & \epscell{\textbf{4.06}} \\
            \midrule
            \multirow{3}{*}{Motion deblur}
            & Palette & 27.02 & 0.8582 & 0.0680 & 62.86 & -6.73e-03 & -5.02e-03 & 2.93 & 4.81 \\
            & Per-task EPS & \textbf{27.62} & \textbf{0.8661} & \textbf{0.0647} & \textbf{61.29} & \textbf{-6.77e-03} & \textbf{-5.11e-03} & \textbf{2.69} & \textbf{4.70} \\
            & \epscell{Amortized EPS} & \epscell{\underline{27.44}} & \epscell{\underline{0.8619}} & \epscell{\underline{0.0657}} & \epscell{\underline{62.35}} & \epscell{\underline{-6.76e-03}} & \epscell{\underline{-5.07e-03}} & \epscell{\underline{2.72}} & \epscell{\underline{4.74}} \\
            \bottomrule
        \end{tabular}%
    }
\end{table*}

\clearpage
\subsection{One-Step Posterior Mean Check}
\label{app:additional-experiments:onestep}
Section~\ref{sec:method:one-step} predicts that a single high-noise evaluation returns a posterior-mean estimator. We compare EPS at 1 NFE (a single direct Tweedie call $D_\theta(\mu_\star,\sigma_{\max})$ at $\sigma_{\max}{=}80$, no sampler loop) to the empirical mean of $J{=}10$ multi-step posterior samples drawn with the standard 100-step Euler sampler (1000 NFE per image), and to a single posterior sample from the same multi-step sampler (the EPS NFE$=$100 row used in our main results).

Table~\ref{tab:onestep} reports this comparison on FFHQ-64 (top) and ImageNet-64 (bottom) across all five tasks. The 1-NFE Tweedie call recovers most of the PSNR/SSIM gain that the multi-step empirical mean attains while using $1000\times$ fewer denoiser evaluations, confirming the high-noise posterior-mean prediction. As expected, both rows that target the conditional mean (the 1-NFE row and the empirical-mean row) score better on distortion (PSNR, SSIM) than the single-sample row, while the single-sample row scores better on perceptual and distributional metrics (LPIPS, FID, CRPS, MMD). Note that MMD/CRPS for the empirical-mean row degenerate to deterministic distances since the per-image ensemble has size $J{=}1$ after averaging.
\begin{table*}[ht!]
    \centering
    \scriptsize
    \setlength{\tabcolsep}{4pt}
    \caption{\textbf{One Tweedie call recovers most of the multi-step gain.} One-step EPS vs.\ empirical posterior mean from multi-step EPS samples on FFHQ-64 (top) and ImageNet-64 (bottom). The 1-NFE row is a single direct Tweedie call at $\sigma_{\max}{=}80$ (no sampler loop). The posterior-mean row averages 10 independent 100-step Euler samples (1000 NFE per image). The single-sample row reports per-seed metrics from the same 100-step sampler (matching the main-text NFE$=$100 row). The 1-NFE row matches or trails the empirical-mean row by a small margin on distortion metrics while using $1000\times$ fewer denoiser calls. Best in \textbf{bold}, second-best \underline{underlined}.}
    \label{tab:onestep}
    \resizebox{\textwidth}{!}{%
        \begin{tabular}{llcccccccc}
            \toprule
            Task & Method & PSNR $\uparrow$ & SSIM $\uparrow$ & LPIPS $\downarrow$ & FID $\downarrow$ & MMD-pix $\downarrow$ & MMD-Inc $\downarrow$ & CRPS-pix $\downarrow$ & CRPS-Inc $\downarrow$ \\
            \midrule
            \multicolumn{10}{c}{\textit{FFHQ $64{\times}64$}} \\
            \midrule
            \multirow{3}{*}{Average}
            & EPS, 1 NFE & \underline{27.34} & \underline{0.8887} & 0.0699 & 46.30 & -6.31e-03 & 1.13e-02 & 4.67 & 6.91 \\
            & EPS, posterior mean from samples & \textbf{29.10} & \textbf{0.9095} & \textbf{0.0480} & \underline{34.17} & \textbf{-1.22e-02} & \underline{-2.77e-03} & \underline{4.35} & \underline{6.46} \\
            & EPS, single posterior sample & 26.69 & 0.8661 & \underline{0.0590} & \textbf{29.94} & \underline{-6.69e-03} & \textbf{-4.89e-03} & \textbf{3.24} & \textbf{3.73} \\
            \midrule
            \multirow{3}{*}{Random inpaint}
            & EPS, 1 NFE & \underline{27.90} & \underline{0.9148} & 0.0549 & 39.71 & -6.63e-03 & 2.18e-03 & \underline{3.94} & \underline{5.75} \\
            & EPS, posterior mean from samples & \textbf{28.66} & \textbf{0.9293} & \textbf{0.0399} & \underline{34.17} & \textbf{-1.23e-02} & \textbf{-5.15e-03} & 4.26 & 6.48 \\
            & EPS, single posterior sample & 26.16 & 0.8879 & \underline{0.0533} & \textbf{31.87} & \underline{-6.74e-03} & \underline{-5.00e-03} & \textbf{3.16} & \textbf{3.75} \\
            \midrule
            \multirow{3}{*}{Box inpaint}
            & EPS, 1 NFE & \underline{26.01} & \underline{0.8706} & 0.0644 & 32.90 & -6.36e-03 & -8.49e-04 & \underline{5.05} & \underline{5.62} \\
            & EPS, posterior mean from samples & \textbf{26.72} & \textbf{0.8906} & \textbf{0.0464} & \underline{26.75} & \textbf{-1.20e-02} & \textbf{-8.08e-03} & 5.38 & 5.93 \\
            & EPS, single posterior sample & 24.23 & 0.8448 & \underline{0.0567} & \textbf{24.74} & \underline{-6.59e-03} & \underline{-5.35e-03} & \textbf{4.01} & \textbf{3.45} \\
            \midrule
            \multirow{3}{*}{Super-res ($4{\times}$)}
            & EPS, 1 NFE & \underline{24.21} & \underline{0.7999} & 0.1265 & 77.97 & -5.63e-03 & 4.09e-02 & \underline{6.57} & \underline{9.27} \\
            & EPS, posterior mean from samples & \textbf{24.21} & \textbf{0.8042} & \textbf{0.1087} & \underline{64.71} & \textbf{-1.14e-02} & \underline{1.35e-02} & 7.03 & 9.36 \\
            & EPS, single posterior sample & 21.96 & 0.7232 & \underline{0.1262} & \textbf{49.29} & \underline{-6.30e-03} & \textbf{-3.00e-03} & \textbf{5.23} & \textbf{5.30} \\
            \midrule
            \multirow{3}{*}{Gaussian deblur}
            & EPS, 1 NFE & \underline{30.86} & \underline{0.9503} & 0.0369 & 31.39 & -6.62e-03 & 2.35e-03 & 3.21 & 6.15 \\
            & EPS, posterior mean from samples & \textbf{33.18} & \textbf{0.9637} & \textbf{0.0211} & \underline{21.15} & \textbf{-1.26e-02} & \textbf{-7.75e-03} & \underline{2.48} & \underline{5.05} \\
            & EPS, single posterior sample & 30.82 & 0.9408 & \underline{0.0273} & \textbf{20.68} & \underline{-6.91e-03} & \underline{-5.73e-03} & \textbf{1.84} & \textbf{2.99} \\
            \midrule
            \multirow{3}{*}{Motion deblur}
            & EPS, 1 NFE & 27.73 & 0.9078 & 0.0669 & 49.55 & -6.31e-03 & 1.21e-02 & 4.60 & 7.76 \\
            & EPS, posterior mean from samples & \textbf{32.71} & \textbf{0.9597} & \textbf{0.0239} & \underline{24.06} & \textbf{-1.25e-02} & \textbf{-6.39e-03} & \underline{2.61} & \underline{5.46} \\
            & EPS, single posterior sample & \underline{30.27} & \underline{0.9339} & \underline{0.0311} & \textbf{23.10} & \underline{-6.90e-03} & \underline{-5.39e-03} & \textbf{1.94} & \textbf{3.18} \\
            \midrule
            \multicolumn{10}{c}{\textit{ImageNet $64{\times}64$}} \\
            \midrule
            \multirow{3}{*}{Average}
            & EPS, 1 NFE & \underline{25.67} & \underline{0.8165} & 0.1318 & 110.95 & -5.58e-03 & 4.49e-03 & 5.91 & 9.96 \\
            & EPS, posterior mean from samples & \textbf{26.97} & \textbf{0.8330} & \textbf{0.0999} & \underline{90.93} & \textbf{-1.16e-02} & \textbf{-5.77e-03} & \underline{5.74} & \underline{9.76} \\
            & EPS, single posterior sample & 24.53 & 0.7712 & \underline{0.1103} & \textbf{81.46} & \underline{-6.42e-03} & \underline{-4.45e-03} & \textbf{4.27} & \textbf{5.48} \\
            \midrule
            \multirow{3}{*}{Random inpaint}
            & EPS, 1 NFE & \underline{26.60} & \underline{0.8580} & \underline{0.0933} & 88.59 & -6.15e-03 & -4.84e-04 & \underline{4.98} & \underline{8.32} \\
            & EPS, posterior mean from samples & \textbf{26.81} & \textbf{0.8651} & \textbf{0.0790} & \underline{81.97} & \textbf{-1.19e-02} & \textbf{-7.41e-03} & 5.44 & 9.35 \\
            & EPS, single posterior sample & 24.34 & 0.7948 & 0.0979 & \textbf{79.60} & \underline{-6.52e-03} & \underline{-4.51e-03} & \textbf{4.04} & \textbf{5.41} \\
            \midrule
            \multirow{3}{*}{Box inpaint}
            & EPS, 1 NFE & \underline{23.60} & \underline{0.7908} & 0.1514 & 129.11 & -5.60e-03 & 7.34e-03 & \underline{7.16} & \underline{10.38} \\
            & EPS, posterior mean from samples & \textbf{23.65} & \textbf{0.7985} & \textbf{0.1149} & \underline{111.46} & \textbf{-1.10e-02} & \underline{-3.94e-03} & 7.86 & 11.12 \\
            & EPS, single posterior sample & 21.24 & 0.7569 & \underline{0.1196} & \textbf{91.07} & \underline{-6.11e-03} & \textbf{-4.23e-03} & \textbf{5.87} & \textbf{5.84} \\
            \midrule
            \multirow{3}{*}{Super-res ($4{\times}$)}
            & EPS, 1 NFE & \textbf{22.78} & \textbf{0.6530} & 0.2455 & 182.92 & -4.97e-03 & 1.98e-02 & \underline{8.06} & \underline{13.47} \\
            & EPS, posterior mean from samples & \underline{22.57} & \underline{0.6447} & \textbf{0.2078} & \underline{158.62} & \textbf{-1.03e-02} & \underline{1.40e-03} & 8.75 & 13.68 \\
            & EPS, single posterior sample & 20.25 & 0.5369 & \underline{0.2207} & \textbf{128.80} & \underline{-5.86e-03} & \textbf{-2.84e-03} & \textbf{6.52} & \textbf{7.35} \\
            \midrule
            \multirow{3}{*}{Gaussian deblur}
            & EPS, 1 NFE & 28.82 & \underline{0.9194} & 0.0606 & 56.53 & -5.14e-03 & -3.46e-03 & 4.09 & 7.73 \\
            & EPS, posterior mean from samples & \textbf{31.66} & \textbf{0.9399} & \textbf{0.0404} & \textbf{41.79} & \textbf{-1.24e-02} & \textbf{-1.02e-02} & \underline{3.03} & \underline{6.63} \\
            & EPS, single posterior sample & \underline{29.18} & 0.9015 & \underline{0.0486} & \underline{46.55} & \underline{-6.83e-03} & \underline{-5.55e-03} & \textbf{2.25} & \textbf{4.11} \\
            \midrule
            \multirow{3}{*}{Motion deblur}
            & EPS, 1 NFE & 26.56 & 0.8613 & 0.1079 & 97.59 & -6.03e-03 & -7.30e-04 & 5.25 & 9.91 \\
            & EPS, posterior mean from samples & \textbf{30.17} & \textbf{0.9167} & \textbf{0.0573} & \textbf{60.82} & \textbf{-1.23e-02} & \textbf{-8.67e-03} & \underline{3.63} & \underline{8.03} \\
            & EPS, single posterior sample & \underline{27.62} & \underline{0.8661} & \underline{0.0647} & \underline{61.29} & \underline{-6.77e-03} & \underline{-5.11e-03} & \textbf{2.69} & \textbf{4.70} \\
            \bottomrule
        \end{tabular}%
    }
\end{table*}

\clearpage
\subsection{Additional $64{\times}64$ Results}
\label{app:additional-experiments:64-extra}
For completeness, Table~\ref{tab:ffhq} reproduces the FFHQ-64 main-table comparison from the body of the paper, broken out by task with all metrics. EPS at NFE$=$20 is the strongest configuration on perceptual and distributional metrics across all five tasks, while the NFE$=$1 Tweedie variant trades distributional fidelity for distortion (PSNR, SSIM), as predicted by the high-noise posterior-mean limit of Section~\ref{sec:method:one-step}.
\begin{table*}[ht!]
    \centering
    \scriptsize
    \setlength{\tabcolsep}{4pt}
    \caption{\textbf{Detailed FFHQ-64 results.} Quantitative comparison across the five inverse problems on FFHQ $64{\times}64$. All methods use a 1-NFE-per-step Euler sampler; reported NFE equals the number of sampler iterations. Best in \textbf{bold}, second-best \underline{underlined}; EPS rows highlighted in light pink. \dag\ The NFE$=$1 row evaluates the deterministic high-noise posterior-mean limit (one direct Tweedie call $D_\theta(\mu_\star,\sigma_{\max})$); MMSE-optimal in pixel space but does not produce posterior samples, hence its strong PSNR/SSIM but weaker distributional metrics.}
    \label{tab:ffhq}
    \resizebox{\textwidth}{!}{%
        \begin{tabular}{llccccccccc}
            \toprule
            Task & Method & NFE & PSNR $\uparrow$ & SSIM $\uparrow$ & LPIPS $\downarrow$ & FID $\downarrow$ & MMD-pix $\downarrow$ & MMD-Inc $\downarrow$ & CRPS-pix $\downarrow$ & CRPS-Inc $\downarrow$ \\
            \midrule
            \multirow{9}{*}{Random inpaint}
            & DPS & 250 & 23.09 & 0.8003 & 0.1080 & 70.27 & -6.04e-03 & 3.00e-02 & 4.56 & 6.33 \\
            & DAPS & 100 & 22.45 & 0.7576 & 0.1519 & 83.31 & -4.74e-03 & 4.95e-02 & 5.47 & 7.52 \\
            & DDNM & 100 & 24.50 & 0.8566 & 0.0766 & 45.40 & -6.44e-03 & 4.88e-04 & 4.03 & 4.87 \\
            & $\Pi$GDM & 100 & 26.00 & 0.8898 & 0.0556 & 35.17 & -6.69e-03 & -3.83e-03 & 3.44 & 4.12 \\
            & MPGD & 100 & 20.60 & 0.7097 & 0.1755 & 102.37 & -1.67e-03 & 7.62e-02 & 6.29 & 8.05 \\
            & Palette & 100 & 25.76 & 0.8809 & 0.0593 & 33.30 & -6.71e-03 & \underline{-4.76e-03} & 3.31 & \underline{3.87} \\
            & \epscell{EPS (ours)} & \epscell{100} & \epscell{26.16} & \epscell{0.8879} & \epscell{\underline{0.0533}} & \epscell{\underline{31.87}} & \epscell{\textbf{-6.74e-03}} & \epscell{\textbf{-5.00e-03}} & \epscell{\underline{3.16}} & \epscell{\textbf{3.75}} \\
            & \epscell{EPS (ours)} & \epscell{20} & \epscell{\underline{26.75}} & \epscell{\underline{0.9006}} & \epscell{\textbf{0.0489}} & \epscell{\textbf{31.56}} & \epscell{\textbf{-6.74e-03}} & \epscell{-3.94e-03} & \epscell{\textbf{3.15}} & \epscell{3.89} \\
            & \epscell{EPS (ours)\dag} & \epscell{1} & \epscell{\textbf{27.90}} & \epscell{\textbf{0.9148}} & \epscell{0.0549} & \epscell{39.71} & \epscell{-6.63e-03} & \epscell{2.18e-03} & \epscell{3.94} & \epscell{5.75} \\
            \midrule
            \multirow{9}{*}{Box inpaint}
            & DPS & 250 & 22.33 & 0.7692 & 0.0946 & 41.58 & -6.30e-03 & 5.29e-03 & 5.09 & 5.08 \\
            & DAPS & 100 & 23.06 & 0.7968 & 0.0900 & 39.35 & -6.01e-03 & 6.47e-03 & 5.09 & 5.00 \\
            & DDNM & 100 & 23.15 & 0.8231 & 0.0737 & 28.75 & -6.20e-03 & -4.28e-03 & 4.82 & 4.03 \\
            & $\Pi$GDM & 100 & 23.80 & 0.8359 & 0.0623 & 26.81 & -6.51e-03 & -4.33e-03 & 4.36 & 3.78 \\
            & MPGD & 100 & 20.39 & 0.7211 & 0.1229 & 49.51 & -4.55e-03 & 9.16e-03 & 6.98 & 6.00 \\
            & Palette & 100 & 24.18 & 0.8426 & 0.0577 & 25.09 & -6.58e-03 & \underline{-5.30e-03} & 4.02 & \underline{3.47} \\
            & \epscell{EPS (ours)} & \epscell{100} & \epscell{24.23} & \epscell{0.8448} & \epscell{\underline{0.0567}} & \epscell{\underline{24.74}} & \epscell{\textbf{-6.59e-03}} & \epscell{\textbf{-5.35e-03}} & \epscell{\underline{4.01}} & \epscell{\textbf{3.45}} \\
            & \epscell{EPS (ours)} & \epscell{20} & \epscell{\underline{24.70}} & \epscell{\underline{0.8560}} & \epscell{\textbf{0.0536}} & \epscell{\textbf{24.45}} & \epscell{\textbf{-6.59e-03}} & \epscell{-4.77e-03} & \epscell{\textbf{3.99}} & \epscell{3.55} \\
            & \epscell{EPS (ours)\dag} & \epscell{1} & \epscell{\textbf{26.01}} & \epscell{\textbf{0.8706}} & \epscell{0.0644} & \epscell{32.90} & \epscell{-6.36e-03} & \epscell{-8.49e-04} & \epscell{5.05} & \epscell{5.62} \\
            \midrule
            \multirow{9}{*}{Super-res ($4{\times}$)}
            & DPS & 250 & 20.43 & 0.5968 & 0.2162 & 90.79 & -5.71e-03 & 4.85e-02 & 6.62 & 7.69 \\
            & DAPS & 100 & 18.87 & 0.5112 & 0.2658 & 110.31 & -1.81e-03 & 1.15e-01 & 8.28 & 9.35 \\
            & DDNM & 100 & 22.35 & 0.7201 & 0.1592 & 62.67 & -6.06e-03 & 1.90e-02 & 5.79 & 6.63 \\
            & $\Pi$GDM & 100 & 22.02 & 0.7242 & 0.1281 & 50.92 & -6.28e-03 & -1.23e-03 & 5.39 & 5.48 \\
            & MPGD & 100 & 20.49 & 0.5809 & 0.2496 & 120.27 & -4.69e-03 & 1.05e-01 & 7.59 & 9.29 \\
            & Palette & 100 & 21.95 & 0.7220 & 0.1273 & \textbf{49.28} & \underline{-6.29e-03} & \underline{-2.98e-03} & \textbf{5.22} & \textbf{5.29} \\
            & \epscell{EPS (ours)} & \epscell{100} & \epscell{21.96} & \epscell{0.7232} & \epscell{\underline{0.1262}} & \epscell{\underline{49.29}} & \epscell{\textbf{-6.30e-03}} & \epscell{\textbf{-3.00e-03}} & \epscell{\underline{5.23}} & \epscell{\underline{5.30}} \\
            & \epscell{EPS (ours)} & \epscell{20} & \epscell{\underline{22.56}} & \epscell{\underline{0.7480}} & \epscell{\textbf{0.1188}} & \epscell{50.11} & \epscell{-6.24e-03} & \epscell{1.01e-03} & \epscell{5.24} & \epscell{5.50} \\
            & \epscell{EPS (ours)\dag} & \epscell{1} & \epscell{\textbf{24.21}} & \epscell{\textbf{0.7999}} & \epscell{0.1265} & \epscell{77.97} & \epscell{-5.63e-03} & \epscell{4.09e-02} & \epscell{6.57} & \epscell{9.27} \\
            \midrule
            \multirow{9}{*}{Gaussian deblur}
            & DPS & 250 & 25.89 & 0.8530 & 0.0901 & 68.78 & -5.78e-03 & 2.85e-02 & 3.75 & 6.40 \\
            & DAPS & 100 & 27.32 & 0.8494 & 0.0771 & 68.96 & -6.80e-03 & 5.00e-02 & 2.75 & 6.89 \\
            & DDNM & 100 & \underline{31.27} & 0.9454 & \underline{0.0273} & 20.96 & \textbf{-6.91e-03} & -5.45e-03 & 1.87 & 3.15 \\
            & $\Pi$GDM & 100 & 30.18 & 0.9332 & 0.0339 & 27.30 & -6.85e-03 & -8.48e-04 & 2.26 & 3.86 \\
            & MPGD & 100 & 19.88 & 0.6716 & 0.1770 & 83.31 & 2.72e-02 & 5.93e-02 & 7.59 & 7.84 \\
            & Palette & 100 & 30.47 & 0.9397 & 0.0286 & 21.63 & -6.90e-03 & \underline{-5.65e-03} & 1.91 & \underline{3.06} \\
            & \epscell{EPS (ours)} & \epscell{100} & \epscell{30.82} & \epscell{0.9408} & \epscell{\underline{0.0273}} & \epscell{\underline{20.68}} & \epscell{\textbf{-6.91e-03}} & \epscell{\textbf{-5.73e-03}} & \epscell{\textbf{1.84}} & \epscell{\textbf{2.99}} \\
            & \epscell{EPS (ours)} & \epscell{20} & \epscell{\textbf{31.47}} & \epscell{\underline{0.9488}} & \epscell{\textbf{0.0249}} & \epscell{\textbf{20.36}} & \epscell{\textbf{-6.91e-03}} & \epscell{-4.97e-03} & \epscell{\textbf{1.84}} & \epscell{3.13} \\
            & \epscell{EPS (ours)\dag} & \epscell{1} & \epscell{30.86} & \epscell{\textbf{0.9503}} & \epscell{0.0369} & \epscell{31.39} & \epscell{-6.62e-03} & \epscell{2.35e-03} & \epscell{3.21} & \epscell{6.15} \\
            \midrule
            \multirow{9}{*}{Motion deblur}
            & DPS & 250 & 27.85 & 0.8694 & 0.0752 & 70.24 & -6.76e-03 & 4.68e-02 & 3.29 & 7.20 \\
            & DAPS & 100 & 27.18 & 0.8703 & 0.0732 & 53.53 & -6.70e-03 & 1.85e-02 & 3.12 & 5.92 \\
            & DDNM & 100 & \textbf{30.96} & \underline{0.9411} & \underline{0.0297} & \textbf{22.06} & \textbf{-6.91e-03} & \underline{-5.32e-03} & \textbf{1.93} & \underline{3.23} \\
            & $\Pi$GDM & 100 & 25.04 & 0.8529 & 0.0940 & 54.68 & -6.23e-03 & 1.24e-02 & 4.10 & 5.49 \\
            & MPGD & 100 & 27.53 & 0.8764 & 0.0746 & 77.02 & -6.63e-03 & 5.17e-02 & 3.62 & 7.63 \\
            & Palette & 100 & 27.79 & 0.9099 & 0.0404 & 28.23 & -6.77e-03 & -4.98e-03 & 2.67 & 3.59 \\
            & \epscell{EPS (ours)} & \epscell{100} & \epscell{30.27} & \epscell{0.9339} & \epscell{0.0311} & \epscell{23.10} & \epscell{-6.90e-03} & \epscell{\textbf{-5.39e-03}} & \epscell{\underline{1.94}} & \epscell{\textbf{3.18}} \\
            & \epscell{EPS (ours)} & \epscell{20} & \epscell{\underline{30.88}} & \epscell{\textbf{0.9422}} & \epscell{\textbf{0.0285}} & \epscell{\underline{22.79}} & \epscell{\textbf{-6.91e-03}} & \epscell{-4.44e-03} & \epscell{\underline{1.94}} & \epscell{3.32} \\
            & \epscell{EPS (ours)\dag} & \epscell{1} & \epscell{27.73} & \epscell{0.9078} & \epscell{0.0669} & \epscell{49.55} & \epscell{-6.31e-03} & \epscell{1.21e-02} & \epscell{4.60} & \epscell{7.76} \\
            \bottomrule
        \end{tabular}%
    }
\end{table*}

Figures~\ref{fig:qual:ffhq} and~\ref{fig:qual:imagenet} show qualitative reconstructions on FFHQ-64 and ImageNet-64 across the five inverse problems, comparing EPS against DPS, DAPS, DDNM, $\Pi$GDM, MPGD, and Palette. Two example observations per task; numbers in the bottom-right corner of each panel are per-image PSNR. On FFHQ-64, EPS recovers facial structure (eyes, mouth, hairline) under aggressive random inpainting and box inpainting, and produces sharper texture and edge geometry on super-resolution and deblurring than the sampling-based and Palette baselines. On ImageNet-64 the same pattern holds on a more diverse class-conditional distribution, with EPS preserving operator-consistent structure where DPS, DAPS, and MPGD oversmooth or hallucinate texture inconsistent with the measurement.

\begin{figure*}[ht!]
\centering
\includegraphics[width=\textwidth]{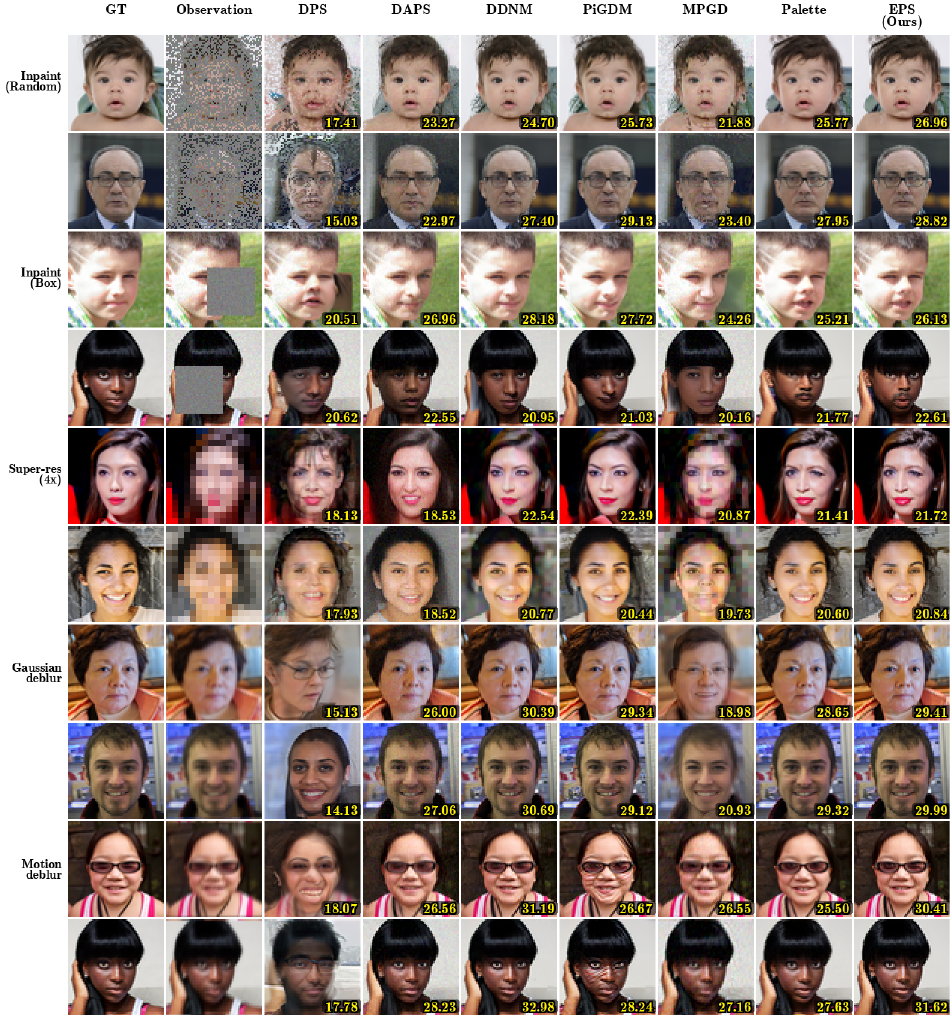}
\caption{\textbf{Qualitative reconstructions on FFHQ-64.} Two example observations per task across the five inverse problems; numbers in the bottom-right corner of each panel are per-image PSNR. EPS recovers facial structure under aggressive random inpainting and box inpainting, and produces sharper texture and edge geometry on super-resolution and deblurring than the sampling-based and Palette baselines.}
\label{fig:qual:ffhq}
\end{figure*}

\begin{figure*}[ht!]
\centering
\includegraphics[width=\textwidth]{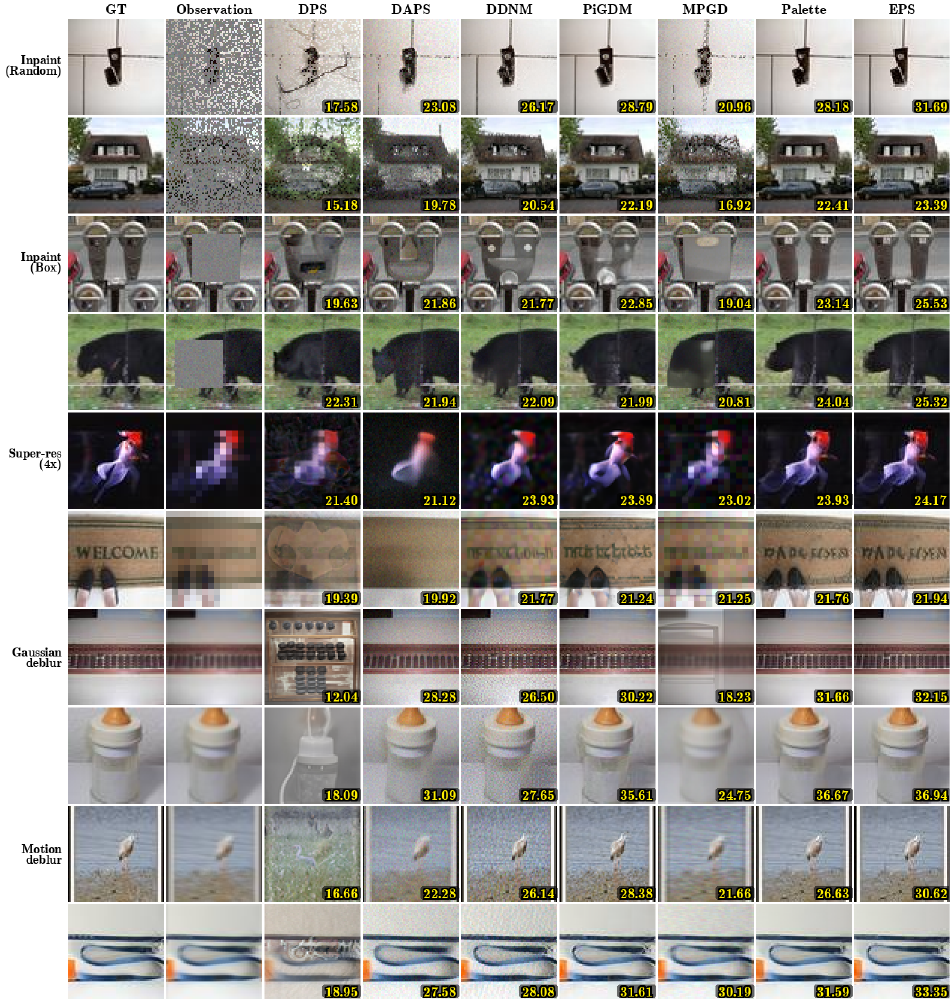}
\caption{\textbf{Qualitative reconstructions on ImageNet-64.} Same layout as Fig.~\ref{fig:qual:ffhq}. EPS preserves operator-consistent structure where DPS, DAPS, and MPGD oversmooth or hallucinate texture inconsistent with the measurement.}
\label{fig:qual:imagenet}
\end{figure*}

\clearpage
\subsection{Extreme Tasks $64{\times}64$}
\label{app:additional-experiments:extreme}
We test EPS on two extreme regimes that fall outside the main-text protocol: random inpainting with 95\% of pixels missing (only 5\% observed) and $16{\times}$ super-resolution (a $4{\times}4$ low-resolution observation upsampled to $64{\times}64$). Both push the operator nullspace to occupy almost the entire signal space, so the prior must do most of the reconstruction work and the measurement-matching score is correspondingly noisier.

Table~\ref{tab:extreme} reports this comparison on ImageNet-64 against DPS, DAPS, DDNM, $\Pi$GDM, and MPGD; Palette is omitted because no Palette checkpoint was trained for these regimes. EPS at NFE$=$20 is the strongest method on perceptual and distributional metrics across both tasks (LPIPS, FID, MMD, CRPS), and is competitive on PSNR/SSIM with the strongest sampling-based baselines despite their having access to a full sampler trajectory. On 95\% inpainting, EPS-20 reduces FID by roughly 25\% over the best sampling-based baseline ($\Pi$GDM at 195) and roughly 30\% over DPS. On $16{\times}$ super-resolution, the perceptual gap is smaller because the operator preserves only a single anchor pixel per $4{\times}4$ block, leaving little measurement signal in the pivot to exploit.

\begin{table*}[ht!]
    \centering
    \scriptsize
    \setlength{\tabcolsep}{4pt}
    \caption{\textbf{EPS holds up under extreme operator nullspace.} Quantitative comparison on ImageNet-64 in two extreme inverse-problem regimes: 95\% random inpainting (only 5\% of pixels observed) and $16{\times}$ super-resolution (a $4{\times}4$ low-resolution observation upsampled to $64{\times}64$). All baselines use a 1-NFE-per-step Euler/DDIM sampler; EPS uses the EDM Euler sampler. Reported NFE equals the number of sampler iterations. Best in \textbf{bold}, second-best \underline{underlined}; EPS rows highlighted in light pink. Palette is omitted because no Palette checkpoint was trained for these regimes.}
    \label{tab:extreme}
    \resizebox{\textwidth}{!}{%
        \begin{tabular}{llccccccccc}
            \toprule
            Task & Method & NFE & PSNR $\uparrow$ & SSIM $\uparrow$ & LPIPS $\downarrow$ & FID $\downarrow$ & MMD-pix $\downarrow$ & MMD-Inc $\downarrow$ & CRPS-pix $\downarrow$ & CRPS-Inc $\downarrow$ \\
            \midrule
            \multirow{7}{*}{Inpaint (95\% masked)}
            & DPS & 250 & 14.14 & 0.2101 & 0.5066 & 210.05 & 1.63e-02 & 2.48e-02 & 12.78 & 10.68 \\
            & DAPS & 100 & 14.20 & 0.1753 & 0.5962 & 329.19 & 1.08e-01 & 2.14e-01 & 15.96 & 14.77 \\
            & DDNM & 100 & 14.47 & 0.2316 & 0.5480 & 251.66 & 6.91e-02 & 7.34e-02 & 14.37 & 11.91 \\
            & $\Pi$GDM & 100 & 16.09 & 0.3015 & 0.4481 & 195.18 & 9.08e-03 & 1.50e-02 & 10.96 & 10.10 \\
            & MPGD & 100 & 13.87 & 0.2073 & 0.5632 & 267.41 & 7.01e-02 & 9.36e-02 & 15.15 & 12.34 \\
            & \epscell{EPS (ours)} & \epscell{100} & \epscell{\underline{17.85}} & \epscell{\underline{0.4468}} & \epscell{\underline{0.2813}} & \epscell{\underline{145.17}} & \epscell{\textbf{-5.27e-03}} & \epscell{\textbf{-2.11e-03}} & \epscell{\underline{8.38}} & \epscell{\textbf{7.93}} \\
            & \epscell{EPS (ours)} & \epscell{20} & \epscell{\textbf{18.41}} & \epscell{\textbf{0.4755}} & \epscell{\textbf{0.2712}} & \epscell{\textbf{144.18}} & \epscell{\underline{-5.08e-03}} & \epscell{\underline{-1.68e-03}} & \epscell{\textbf{8.34}} & \epscell{\underline{8.00}} \\
            \midrule
            \multirow{7}{*}{Super-res ($16{\times}$)}
            & DPS & 250 & 13.30 & 0.1171 & 0.5550 & 191.98 & 2.63e-03 & 7.74e-03 & 13.88 & 10.23 \\
            & DAPS & 100 & 12.70 & 0.1437 & 0.6439 & 267.84 & 7.64e-02 & 9.77e-02 & 17.51 & 12.47 \\
            & DDNM & 100 & \textbf{15.84} & \textbf{0.2295} & 0.5787 & 224.62 & 7.58e-03 & 5.73e-02 & 13.30 & 11.62 \\
            & $\Pi$GDM & 100 & 14.43 & 0.1703 & 0.5073 & 181.06 & \underline{-2.85e-03} & 8.31e-03 & 12.71 & \underline{9.72} \\
            & MPGD & 100 & 13.84 & \underline{0.1875} & 0.6176 & 233.46 & 8.89e-02 & 5.67e-02 & 15.85 & 11.61 \\
            & \epscell{EPS (ours)} & \epscell{100} & \epscell{14.17} & \epscell{0.1585} & \epscell{\underline{0.5007}} & \epscell{\textbf{175.76}} & \epscell{\textbf{-3.10e-03}} & \epscell{\textbf{3.34e-04}} & \epscell{\underline{12.71}} & \epscell{\textbf{9.60}} \\
            & \epscell{EPS (ours)} & \epscell{20} & \epscell{\underline{14.86}} & \epscell{0.1822} & \epscell{\textbf{0.4948}} & \epscell{\underline{180.14}} & \epscell{-6.49e-04} & \epscell{\underline{3.63e-03}} & \epscell{\textbf{12.67}} & \epscell{9.81} \\
            \bottomrule
        \end{tabular}%
    }
\end{table*}

Figure~\ref{fig:extreme-qualitative} shows EPS reconstructions across six independent latent seeds on three 95\%-inpainting and three $16{\times}$ super-resolution observations. Samples agree on the broad spatial layout dictated by the few observed pixels but diverge sharply in fine structure and unobserved content (foreground identity, background texture, occluded geometry), which is the qualitative signature of a calibrated posterior in a regime where the operator nullspace dominates.

\begin{figure*}[ht!]
\centering
\includegraphics[width=\textwidth]{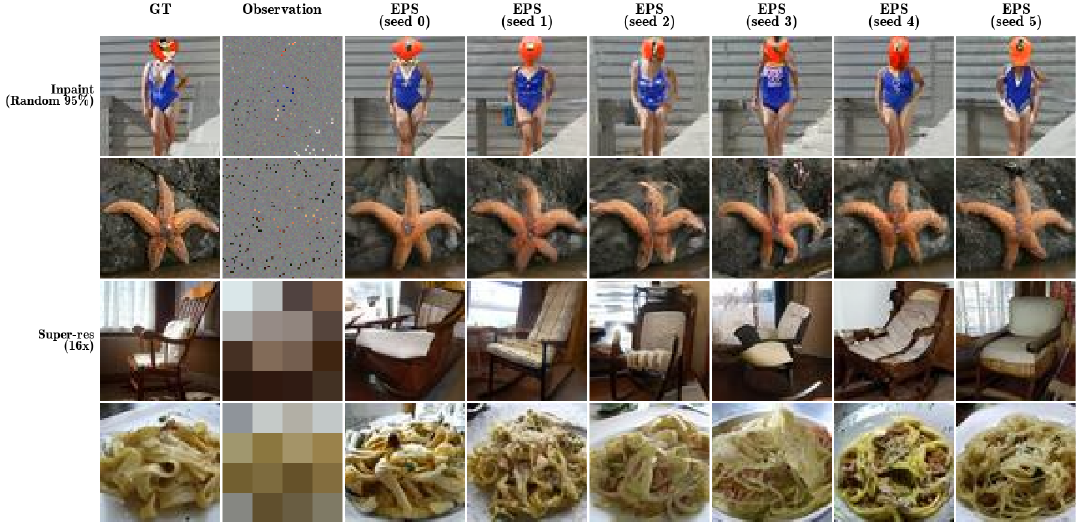}
\caption{\textbf{Posterior diversity under extreme operator nullspace.} EPS reconstructions on ImageNet-64 across six independent latent seeds. Top three rows: 95\% random inpainting (only 5\% of pixels observed). Bottom three rows: $16{\times}$ super-resolution (a $4{\times}4$ low-resolution observation upsampled to $64{\times}64$). Samples agree on the broad spatial layout consistent with the measurement but vary substantially in unobserved directions, illustrating that EPS produces genuinely distinct posterior samples rather than near-duplicates of a single conditional mean.}
\label{fig:extreme-qualitative}
\end{figure*}

\clearpage
\subsection{OOD Mask-Density Experiments}
\label{app:additional-experiments:ood}
We test how EPS and Palette generalize when the test-time mask density differs from training. Both checkpoints are trained on random inpainting at 70\% masking and frozen; at evaluation we re-sample masks at five densities, ranging from 50\% (easier than training) through 70\% (in-distribution) to 90\% (much harder than training). All results use NFE$=$100 with the EDM Euler sampler over 100 images $\times$ 10 seeds, with $\sigma_y{=}0.05$.

Table~\ref{tab:ood-mask} reports this comparison on ImageNet-64 (top) and FFHQ-64 (bottom). On ImageNet, EPS wins on every metric at every density except 80\% (near-tie). On FFHQ, EPS dominates in- and near-distribution (50\%-70\%) but degrades faster than Palette on PSNR/SSIM at heavily-OOD densities (80\%-90\%); EPS still wins on the distributional metrics (MMD-pix, CRPS) at every density. The PSNR/SSIM crossover at high mask fraction is consistent with the closed-form pivot $\mu_\star$ extrapolating poorly when the operator shifts far from training at test time, since the precision weighting of the pivot is calibrated to a 70\% mask under $\sigma_y{=}0.05$ and is mismatched when the actual mask density changes.
\begin{table*}[ht!]
    \centering
    \scriptsize
    \setlength{\tabcolsep}{4pt}
    \caption{\textbf{OOD generalization across mask density.} Both checkpoints are trained on random inpainting at 70\% masking and frozen; evaluation re-samples masks at five densities (50\%, 60\%, 70\% in-distribution, 80\%, 90\%) on the same eval $x_0$ at $\sigma_y{=}0.05$. ImageNet-64 (top): EPS wins on every metric at every density except 80\% (near-tie). FFHQ-64 (bottom): EPS dominates in- and near-distribution but degrades faster than Palette on PSNR/SSIM at heavily-OOD densities, while still winning on MMD and CRPS at every density. All numbers at NFE$=$100 (EDM Euler). Best per row in \textbf{bold}; EPS rows highlighted in light pink.}
    \label{tab:ood-mask}
    \resizebox{\textwidth}{!}{%
        \begin{tabular}{llcccccccc}
            \toprule
            Mask \% & Method & PSNR $\uparrow$ & SSIM $\uparrow$ & LPIPS $\downarrow$ & FID $\downarrow$ & MMD-pix $\downarrow$ & MMD-Inc $\downarrow$ & CRPS-pix $\downarrow$ & CRPS-Inc $\downarrow$ \\
            \midrule
            \multicolumn{10}{c}{\textit{ImageNet $64{\times}64$}} \\
            \midrule
            \multirow{2}{*}{50\%}
            & Palette & 27.10 & 0.8820 & 0.0608 & 55.12 & -6.71e-03 & -5.26e-03 & 2.87 & 4.41 \\
            & \epscell{EPS (ours)} & \epscell{\textbf{27.88}} & \epscell{\textbf{0.8969}} & \epscell{\textbf{0.0545}} & \epscell{\textbf{51.79}} & \epscell{\textbf{-6.76e-03}} & \epscell{\textbf{-5.32e-03}} & \epscell{\textbf{2.69}} & \epscell{\textbf{4.31}} \\
            \midrule
            \multirow{2}{*}{60\%}
            & Palette & 25.62 & 0.8415 & 0.0778 & 65.85 & -6.63e-03 & -5.01e-03 & 3.44 & 4.83 \\
            & \epscell{EPS (ours)} & \epscell{\textbf{26.07}} & \epscell{\textbf{0.8526}} & \epscell{\textbf{0.0729}} & \epscell{\textbf{62.98}} & \epscell{\textbf{-6.66e-03}} & \epscell{\textbf{-5.10e-03}} & \epscell{\textbf{3.31}} & \epscell{\textbf{4.73}} \\
            \midrule
            \multirow{2}{*}{70\% (in-dist.)}
            & Palette & 24.09 & 0.7869 & 0.1011 & 81.88 & -6.50e-03 & -4.41e-03 & 4.16 & 5.52 \\
            & \epscell{EPS (ours)} & \epscell{\textbf{24.34}} & \epscell{\textbf{0.7948}} & \epscell{\textbf{0.0979}} & \epscell{\textbf{79.60}} & \epscell{\textbf{-6.52e-03}} & \epscell{\textbf{-4.51e-03}} & \epscell{\textbf{4.04}} & \epscell{\textbf{5.41}} \\
            \midrule
            \multirow{2}{*}{80\%}
            & Palette & 22.29 & \textbf{0.7084} & \textbf{0.1399} & \textbf{98.49} & -6.27e-03 & \textbf{-3.69e-03} & 5.14 & 6.06 \\
            & \epscell{EPS (ours)} & \epscell{\textbf{22.30}} & \epscell{0.7080} & \epscell{0.1403} & \epscell{98.91} & \epscell{\textbf{-6.28e-03}} & \epscell{-3.61e-03} & \epscell{\textbf{5.08}} & \epscell{\textbf{6.04}} \\
            \midrule
            \multirow{2}{*}{90\%}
            & Palette & 18.60 & 0.5135 & 0.2635 & 152.54 & -3.54e-03 & 3.38e-03 & 8.04 & 8.33 \\
            & \epscell{EPS (ours)} & \epscell{\textbf{19.06}} & \epscell{\textbf{0.5258}} & \epscell{\textbf{0.2456}} & \epscell{\textbf{144.18}} & \epscell{\textbf{-5.01e-03}} & \epscell{\textbf{6.27e-04}} & \epscell{\textbf{7.19}} & \epscell{\textbf{7.78}} \\
            \midrule
            \multicolumn{10}{c}{\textit{FFHQ $64{\times}64$}} \\
            \midrule
            \multirow{2}{*}{50\%}
            & Palette & 27.95 & 0.9171 & 0.0414 & 25.07 & -6.47e-03 & \textbf{-5.32e-03} & 2.50 & \textbf{3.32} \\
            & \epscell{EPS (ours)} & \epscell{\textbf{29.64}} & \epscell{\textbf{0.9409}} & \epscell{\textbf{0.0307}} & \epscell{\textbf{22.92}} & \epscell{\textbf{-6.86e-03}} & \epscell{-3.85e-03} & \epscell{\textbf{2.19}} & \epscell{3.34} \\
            \midrule
            \multirow{2}{*}{60\%}
            & Palette & 27.11 & 0.9061 & 0.0470 & 28.16 & -6.70e-03 & \textbf{-5.11e-03} & 2.77 & 3.55 \\
            & \epscell{EPS (ours)} & \epscell{\textbf{28.05}} & \epscell{\textbf{0.9206}} & \epscell{\textbf{0.0393}} & \epscell{\textbf{26.19}} & \epscell{\textbf{-6.82e-03}} & \epscell{-4.48e-03} & \epscell{\textbf{2.58}} & \epscell{\textbf{3.49}} \\
            \midrule
            \multirow{2}{*}{70\% (in-dist.)}
            & Palette & 25.76 & 0.8809 & 0.0593 & 33.30 & -6.71e-03 & -4.76e-03 & 3.31 & 3.87 \\
            & \epscell{EPS (ours)} & \epscell{\textbf{26.16}} & \epscell{\textbf{0.8879}} & \epscell{\textbf{0.0533}} & \epscell{\textbf{31.87}} & \epscell{\textbf{-6.74e-03}} & \epscell{\textbf{-5.00e-03}} & \epscell{\textbf{3.16}} & \epscell{\textbf{3.75}} \\
            \midrule
            \multirow{2}{*}{80\%}
            & Palette & \textbf{23.78} & \textbf{0.8321} & \textbf{0.0826} & \textbf{40.23} & -6.42e-03 & -3.44e-03 & 4.28 & 4.51 \\
            & \epscell{EPS (ours)} & \epscell{23.57} & \epscell{0.8250} & \epscell{0.0838} & \epscell{40.83} & \epscell{\textbf{-6.53e-03}} & \epscell{\textbf{-3.58e-03}} & \epscell{\textbf{4.19}} & \epscell{\textbf{4.45}} \\
            \midrule
            \multirow{2}{*}{90\%}
            & Palette & \textbf{18.34} & \textbf{0.6023} & 0.2286 & \textbf{83.35} & 4.20e-03 & \textbf{2.17e-02} & 8.68 & 7.09 \\
            & \epscell{EPS (ours)} & \epscell{17.82} & \epscell{0.5683} & \epscell{\textbf{0.2284}} & \epscell{90.78} & \epscell{\textbf{-1.54e-03}} & \epscell{3.45e-02} & \epscell{\textbf{7.71}} & \epscell{\textbf{7.08}} \\
            \bottomrule
        \end{tabular}%
    }
\end{table*}

Figure~\ref{fig:ood-qualitative} shows qualitative reconstructions from EPS and Palette across the five mask densities on both datasets. The visual gap between the two methods is largest in the in-distribution regime and narrows at the extremes: at 50\% masking both methods recover most of the image structure, while at 90\% both methods struggle and the reconstructions diverge sharply from the ground truth.

\begin{figure*}[ht!]
    \centering
    \includegraphics[width=\textwidth]{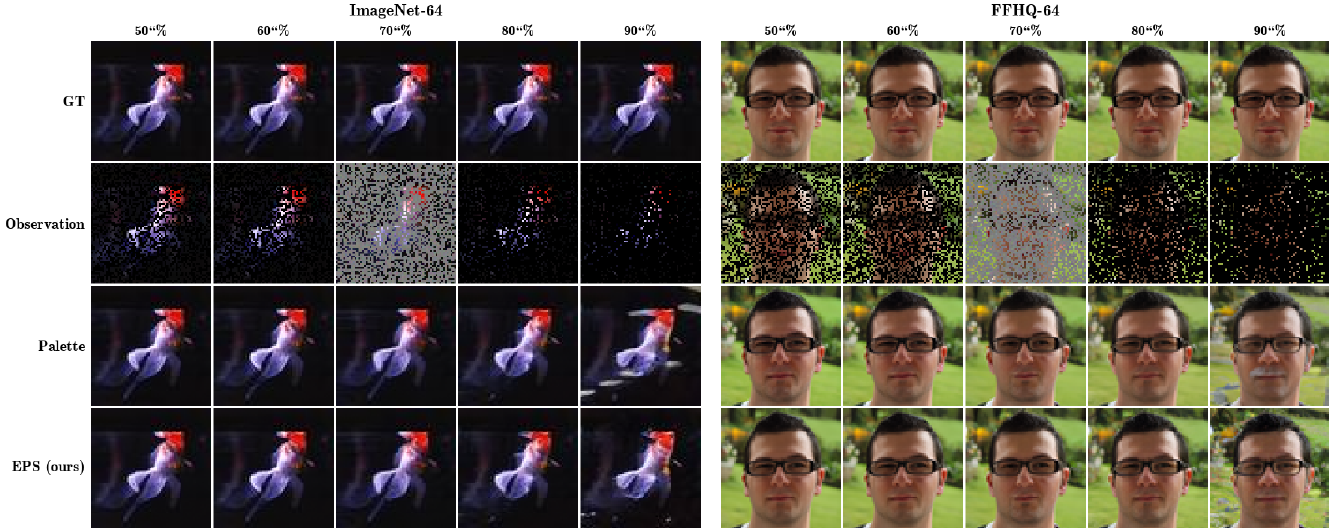}
    \caption{\textbf{Qualitative OOD generalization across mask density.} EPS and Palette reconstructions on ImageNet-64 and FFHQ-64 across five mask densities (50\%, 60\%, 70\% in-distribution, 80\%, 90\%). Both checkpoints were trained on 70\% masking and frozen at evaluation. EPS preserves operator-consistent structure best in- and near-distribution; degradation at heavily-OOD densities (80\%, 90\%) is consistent with the pivot $\mu_\star$ being calibrated to a 70\% mask.}
    \label{fig:ood-qualitative}
\end{figure*}

\clearpage
\subsection{Additional $256{\times}256$ Results}
\label{app:additional-experiments:256}
Table~\ref{tab:imagenet256} reports EPS at NFE$\in\{1, 20, 100\}$ against sampling-based and training-based baselines on ImageNet $256{\times}256$ across five inverse problems. EDM-DDPM++ architecture is used, and trained from scratch on each task (no pretrained backbone). Baseline numbers are taken directly from the DAPS paper~\cite{zhang2025improving}; we did not re-run them under our protocol, and distributional metrics (CRPS, MMD) are not reported because they are not available in the source paper. EPS leads on both inpainting tasks and on Gaussian deblurring, particularly on perceptual metrics (LPIPS, FID), and is competitive with DAPS on motion deblurring (taking second place on SSIM, LPIPS, and FID at NFE$=$20). DAPS retains an edge on $4{\times}$ super-resolution. As at $64{\times}64$, the 1-NFE EPS row is consistently the strongest EPS variant on PSNR/SSIM, mirroring the high-noise posterior-mean check of Section~\ref{sec:method:one-step}.
\begin{table*}[ht!]
    \centering
    \scriptsize
    \setlength{\tabcolsep}{3pt}
    \caption{\textbf{ImageNet $256{\times}256$ results.} EPS vs.\ sampling-based and training-based baselines on five linear inverse problems at $256{\times}256$ resolution. Baseline numbers are taken from the DAPS paper~\citep{zhang2025improving}. Best in \textbf{bold}, second-best \underline{underlined}; EPS rows highlighted in light pink. \dag\ The NFE$=$1 row applies a single direct Tweedie evaluation $D_\theta(\mu_\star,\sigma_{\max})$, returning the conditional posterior mean rather than a posterior sample.}
    \label{tab:imagenet256}
    \begin{minipage}[t]{0.5\textwidth}
    \centering
    \resizebox{\linewidth}{!}{%
        \begin{tabular}{llcccc}
            \toprule
            Task & Method & PSNR $\uparrow$ & SSIM $\uparrow$ & LPIPS $\downarrow$ & FID $\downarrow$ \\
            \midrule
            \multirow{10}{*}{Super-res ($4{\times}$)}
            & DAPS & \textbf{25.89} & \underline{0.694} & \underline{0.276} & \textbf{83.57} \\
            & DPS & 21.13 & 0.489 & 0.361 & 106.32 \\
            & DDRM & 22.62 & 0.521 & 0.324 & 103.85 \\
            & DDNM & 23.96 & 0.604 & 0.475 & 98.62 \\
            & DCDP & -- & -- & -- & -- \\
            & FPS-SMC & \underline{24.82} & \textbf{0.703} & 0.313 & \underline{97.51} \\
            & DiffPIR & 23.18 & -- & 0.371 & 106.32 \\
            & \epscell{EPS, NFE=100} & \epscell{19.60} & \epscell{0.520} & \epscell{0.294} & \epscell{138.55} \\
            & \epscell{EPS, NFE=20} & \epscell{20.32} & \epscell{0.561} & \epscell{0.277} & \epscell{130.66} \\
            & \epscell{EPS, NFE=1\dag} & \epscell{22.65} & \epscell{0.671} & \epscell{\textbf{0.254}} & \epscell{116.82} \\
            \midrule
            \multirow{10}{*}{Gaussian deblur}
            & DAPS & \underline{26.15} & 0.684 & 0.253 & 75.68 \\
            & DPS & 20.31 & 0.598 & 0.397 & 116.42 \\
            & DDRM & 21.26 & 0.564 & 0.443 & 146.89 \\
            & DDNM & \textbf{28.06} & \underline{0.703} & 0.278 & 81.43 \\
            & DCDP & -- & -- & -- & -- \\
            & FPS-SMC & 23.91 & 0.601 & 0.387 & 91.72 \\
            & DiffPIR & 22.80 & -- & 0.355 & 93.36 \\
            & \epscell{EPS, NFE=100} & \epscell{24.34} & \epscell{0.640} & \epscell{\underline{0.224}} & \epscell{\underline{72.38}} \\
            & \epscell{EPS, NFE=20} & \epscell{24.92} & \epscell{0.666} & \epscell{\textbf{0.222}} & \epscell{\textbf{71.31}} \\
            & \epscell{EPS, NFE=1\dag} & \epscell{25.90} & \epscell{\textbf{0.704}} & \epscell{0.249} & \epscell{100.10} \\
            \midrule
            \multirow{8}{*}{Motion deblur}
            & DAPS & \textbf{27.86} & \textbf{0.766} & \textbf{0.196} & \textbf{61.83} \\
            & DPS & 18.96 & 0.629 & 0.423 & 137.81 \\
            & DCDP & -- & -- & -- & -- \\
            & FPS-SMC & \underline{24.52} & 0.647 & 0.326 & 87.43 \\
            & DiffPIR & 24.01 & -- & 0.366 & 94.63 \\
            & \epscell{EPS, NFE=100} & \epscell{23.65} & \epscell{0.616} & \epscell{0.254} & \epscell{79.79} \\
            & \epscell{EPS, NFE=20} & \epscell{24.39} & \epscell{\underline{0.655}} & \epscell{\underline{0.240}} & \epscell{\underline{75.52}} \\
            & \epscell{EPS, NFE=1\dag} & \epscell{24.12} & \epscell{0.647} & \epscell{0.318} & \epscell{137.59} \\
            \bottomrule
        \end{tabular}%
    }
    \end{minipage}
    \hfill
    \begin{minipage}[t]{0.49\textwidth}
    \centering
    \resizebox{\linewidth}{!}{%
        \begin{tabular}{llcccc}
            \toprule
            Task & Method & PSNR $\uparrow$ & SSIM $\uparrow$ & LPIPS $\downarrow$ & FID $\downarrow$ \\
            \midrule
            \multirow{9}{*}{Inpaint (box)}
            & DAPS & 21.43 & 0.725 & 0.214 & 109.85 \\
            & DPS & 18.94 & 0.722 & 0.257 & 126.52 \\
            & DDRM & 18.63 & 0.733 & 0.254 & 116.37 \\
            & DDNM & 21.64 & 0.748 & 0.319 & 103.97 \\
            & DCDP & -- & -- & -- & -- \\
            & FPS-SMC & \textbf{22.16} & 0.726 & 0.208 & 111.58 \\
            & \epscell{EPS, NFE=100} & \epscell{19.57} & \epscell{0.785} & \epscell{\underline{0.154}} & \epscell{\underline{100.30}} \\
            & \epscell{EPS, NFE=20} & \epscell{20.12} & \epscell{\underline{0.795}} & \epscell{\textbf{0.149}} & \epscell{\textbf{94.22}} \\
            & \epscell{EPS, NFE=1\dag} & \epscell{\underline{22.13}} & \epscell{\textbf{0.809}} & \epscell{0.177} & \epscell{113.50} \\
            \midrule
            \multirow{8}{*}{Inpaint (random)}
            & DAPS & 28.44 & 0.775 & 0.135 & 54.25 \\
            & DPS & 23.52 & 0.745 & 0.297 & 87.53 \\
            & DDNM & \textbf{31.16} & \underline{0.841} & 0.191 & 63.84 \\
            & DCDP & -- & -- & -- & -- \\
            & FPS-SMC & 24.52 & 0.701 & 0.316 & 79.12 \\
            & \epscell{EPS, NFE=100} & \epscell{27.26} & \epscell{0.817} & \epscell{0.113} & \epscell{\underline{21.39}} \\
            & \epscell{EPS, NFE=20} & \epscell{27.90} & \epscell{0.836} & \epscell{\underline{0.104}} & \epscell{\textbf{20.17}} \\
            & \epscell{EPS, NFE=1\dag} & \epscell{\underline{29.73}} & \epscell{\textbf{0.876}} & \epscell{\textbf{0.103}} & \epscell{24.03} \\
            \bottomrule
        \end{tabular}%
    }
    \end{minipage}
\end{table*}

Figure~\ref{fig:qual:imagenet256} shows qualitative reconstructions of EPS on ImageNet $256{\times}256$ across the five inverse problems, with one example observation per task. EPS recovers sharp object structure under aggressive random inpainting and box inpainting, and produces texture and edge geometry consistent with the measurement on super-resolution and deblurring at this higher resolution.

\begin{figure*}[ht!]
\centering
\includegraphics[width=0.9\textwidth]{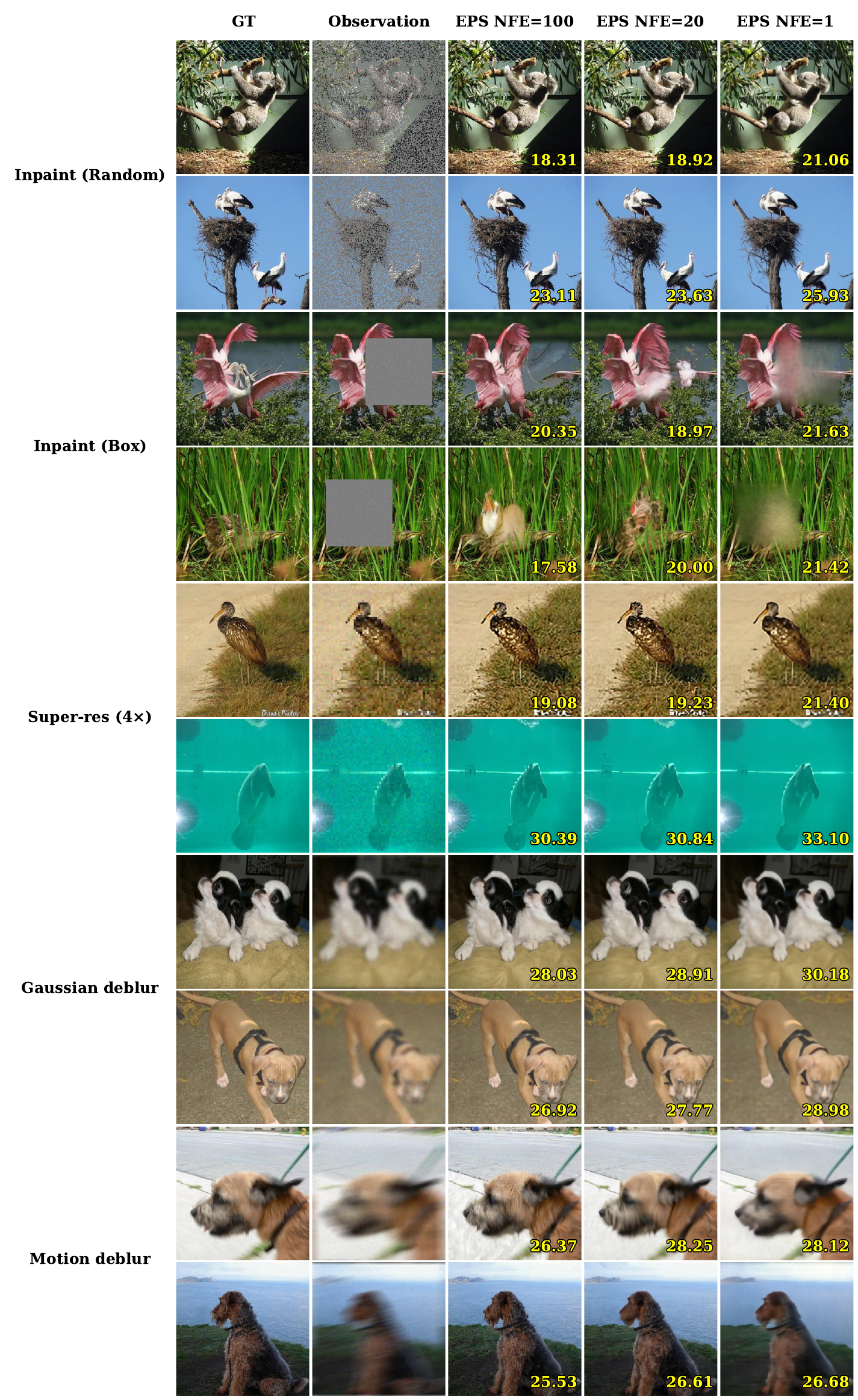}
\caption{\textbf{Qualitative reconstructions on ImageNet-256.} EPS reconstructions across the five inverse problems on ImageNet $256{\times}256$. Numbers in the bottom-right corner of each panel are per-image PSNR. EPS preserves operator-consistent structure under aggressive inpainting and produces sharp texture and edges on super-resolution and deblurring at this higher resolution.}
\label{fig:qual:imagenet256}
\end{figure*}

\clearpage
\subsection{Palette vs.\ EPS at Matched NFE}
\label{app:additional-experiments:palette-1step}
The one-step posterior-mean check in Section~\ref{app:additional-experiments:onestep} showed that a single Tweedie call recovers most of the multi-step PSNR/SSIM gain. Here we ask the matched question for Palette: does the same one-step shortcut close the gap to EPS? Concretely, we evaluate Palette $[x_t,y,t]$ and EPS $[\mu_\star,y,t]$ at NFE$=$1 (a single direct denoiser call at $\sigma_{\max}{=}80$, no sampler loop) and at NFE$=$100 (EDM Euler sampler), on both datasets and all five tasks.

Table~\ref{tab:palette-1step} reports this comparison on FFHQ-64 (top) and ImageNet-64 (bottom). Two patterns dominate. First, at NFE$=$1 the two methods coincide, as predicted by Observation~\ref{obs:one-step-limit}: at $\sigma_{\max}$ the noisy state carries vanishing information about $x_0$, so both the Palette input $(x_t,y)$ and the EPS input $(\mu_\star,y)$ are informationally equivalent to $y$ alone, and both networks estimate $\mathbb{E}[x_0|y]$; the small differences here are training noise rather than methodological difference. Second, at NFE$=$100 the EPS pivot becomes informative and EPS dominates Palette across distortion and distributional metrics on every task. Both 1-NFE rows trade perceptual quality for distortion: PSNR/SSIM jump sharply because the conditional mean is the MMSE-optimal estimator under squared error, but FID, LPIPS, CRPS, and MMD all degrade because no actual sample is produced.
\begin{table*}[ht!]
    \centering
    \scriptsize
    \setlength{\tabcolsep}{4pt}
    \caption{\textbf{Palette vs.\ EPS at NFE$=$1 and NFE$=$100.} One-step (single Tweedie call at $\sigma_{\max}{=}80$) vs.\ 100-step (EDM Euler) generation for Palette $[x_t,y,t]$ and EPS $[\mu_\star,y,t]$ on FFHQ-64 (top) and ImageNet-64 (bottom). At NFE$=$1, the two methods receive nearly identical inputs since $\mu_\star \approx x_t$ at $\sigma_{\max}$; at NFE$=$100 the pivot becomes informative and EPS dominates across distortion and distributional metrics. Best in \textbf{bold}, second-best \underline{underlined}; EPS rows highlighted in light pink.}
    \label{tab:palette-1step}
    \resizebox{\textwidth}{!}{%
        \begin{tabular}{llcccccccc}
            \toprule
            Task & Method & PSNR $\uparrow$ & SSIM $\uparrow$ & LPIPS $\downarrow$ & FID $\downarrow$ & MMD-pix $\downarrow$ & MMD-Inc $\downarrow$ & CRPS-pix $\downarrow$ & CRPS-Inc $\downarrow$ \\
            \midrule
            \multicolumn{10}{c}{\textit{FFHQ $64{\times}64$}} \\
            \midrule
            \multirow{4}{*}{Average}
            & Palette (NFE=100) & 26.03 & 0.8590 & \underline{0.0626} & \underline{31.50} & \underline{-6.6e-03} & \underline{-4.7e-03} & \underline{3.43} & \underline{3.86} \\
            & Palette (NFE=1) & \textbf{27.77} & \textbf{0.8920} & 0.0638 & 41.93 & -6.4e-03 & 8.0e-03 & 4.36 & 6.21 \\
            & \epscell{EPS (NFE=1)} & \epscell{\underline{27.34}} & \epscell{\underline{0.8887}} & \epscell{0.0699} & \epscell{46.30} & \epscell{-6.3e-03} & \epscell{1.1e-02} & \epscell{4.67} & \epscell{6.91} \\
            & \epscell{EPS (NFE=100)} & \epscell{26.69} & \epscell{0.8661} & \epscell{\textbf{0.0590}} & \epscell{\textbf{29.94}} & \epscell{\textbf{-6.7e-03}} & \epscell{\textbf{-4.9e-03}} & \epscell{\textbf{3.24}} & \epscell{\textbf{3.73}} \\
            \midrule
            \multirow{4}{*}{Random inpaint}
            & Palette (NFE=100) & 25.76 & 0.8809 & 0.0593 & \underline{33.30} & \underline{-6.7e-03} & \underline{-4.8e-03} & \underline{3.31} & \underline{3.87} \\
            & Palette (NFE=1) & \underline{27.58} & \underline{0.9095} & 0.0551 & 39.20 & -6.5e-03 & 1.1e-03 & 4.24 & 5.96 \\
            & \epscell{EPS (NFE=1)} & \epscell{\textbf{27.90}} & \epscell{\textbf{0.9148}} & \epscell{\underline{0.0549}} & \epscell{39.71} & \epscell{-6.6e-03} & \epscell{2.2e-03} & \epscell{3.94} & \epscell{5.75} \\
            & \epscell{EPS (NFE=100)} & \epscell{26.16} & \epscell{0.8879} & \epscell{\textbf{0.0533}} & \epscell{\textbf{31.87}} & \epscell{\textbf{-6.7e-03}} & \epscell{\textbf{-5.0e-03}} & \epscell{\textbf{3.16}} & \epscell{\textbf{3.75}} \\
            \midrule
            \multirow{4}{*}{Box inpaint}
            & Palette (NFE=100) & 24.18 & 0.8426 & \underline{0.0577} & \underline{25.09} & \underline{-6.6e-03} & \underline{-5.3e-03} & \underline{4.02} & \underline{3.47} \\
            & Palette (NFE=1) & \textbf{26.14} & \textbf{0.8727} & 0.0639 & 32.60 & -6.3e-03 & -1.2e-03 & 5.01 & 5.51 \\
            & \epscell{EPS (NFE=1)} & \epscell{\underline{26.01}} & \epscell{\underline{0.8706}} & \epscell{0.0644} & \epscell{32.90} & \epscell{-6.4e-03} & \epscell{-8.5e-04} & \epscell{5.05} & \epscell{5.62} \\
            & \epscell{EPS (NFE=100)} & \epscell{24.23} & \epscell{0.8448} & \epscell{\textbf{0.0567}} & \epscell{\textbf{24.74}} & \epscell{\textbf{-6.6e-03}} & \epscell{\textbf{-5.4e-03}} & \epscell{\textbf{4.01}} & \epscell{\textbf{3.45}} \\
            \midrule
            \multirow{4}{*}{Super-res ($4{\times}$)}
            & Palette (NFE=100) & 21.95 & 0.7220 & 0.1273 & \textbf{49.28} & \underline{-6.3e-03} & \underline{-3.0e-03} & \textbf{5.22} & \textbf{5.29} \\
            & Palette (NFE=1) & \textbf{24.23} & \textbf{0.8009} & \textbf{0.1255} & 77.61 & -5.6e-03 & 4.1e-02 & 6.56 & 9.31 \\
            & \epscell{EPS (NFE=1)} & \epscell{\underline{24.21}} & \epscell{\underline{0.7999}} & \epscell{0.1265} & \epscell{77.97} & \epscell{-5.6e-03} & \epscell{4.1e-02} & \epscell{6.57} & \epscell{9.27} \\
            & \epscell{EPS (NFE=100)} & \epscell{21.96} & \epscell{0.7232} & \epscell{\underline{0.1262}} & \epscell{\underline{49.29}} & \epscell{\textbf{-6.3e-03}} & \epscell{\textbf{-3.0e-03}} & \epscell{\underline{5.23}} & \epscell{\underline{5.30}} \\
            \midrule
            \multirow{4}{*}{Gaussian deblur}
            & Palette (NFE=100) & 30.47 & 0.9397 & 0.0286 & \underline{21.63} & \underline{-6.9e-03} & \underline{-5.6e-03} & \underline{1.91} & \underline{3.06} \\
            & Palette (NFE=1) & \textbf{32.04} & \textbf{0.9529} & \underline{0.0285} & 24.49 & -6.8e-03 & -1.9e-03 & 2.40 & 4.61 \\
            & \epscell{EPS (NFE=1)} & \epscell{\underline{30.86}} & \epscell{\underline{0.9503}} & \epscell{0.0369} & \epscell{31.39} & \epscell{-6.6e-03} & \epscell{2.4e-03} & \epscell{3.21} & \epscell{6.15} \\
            & \epscell{EPS (NFE=100)} & \epscell{30.82} & \epscell{0.9408} & \epscell{\textbf{0.0273}} & \epscell{\textbf{20.68}} & \epscell{\textbf{-6.9e-03}} & \epscell{\textbf{-5.7e-03}} & \epscell{\textbf{1.84}} & \epscell{\textbf{2.99}} \\
            \midrule
            \multirow{4}{*}{Motion deblur}
            & Palette (NFE=100) & 27.79 & 0.9099 & \underline{0.0404} & \underline{28.23} & \underline{-6.8e-03} & \underline{-5.0e-03} & \underline{2.67} & \underline{3.59} \\
            & Palette (NFE=1) & \underline{28.87} & \underline{0.9240} & 0.0461 & 35.76 & -6.6e-03 & 1.4e-03 & 3.59 & 5.68 \\
            & \epscell{EPS (NFE=1)} & \epscell{27.73} & \epscell{0.9078} & \epscell{0.0669} & \epscell{49.55} & \epscell{-6.3e-03} & \epscell{1.2e-02} & \epscell{4.60} & \epscell{7.76} \\
            & \epscell{EPS (NFE=100)} & \epscell{\textbf{30.27}} & \epscell{\textbf{0.9339}} & \epscell{\textbf{0.0311}} & \epscell{\textbf{23.10}} & \epscell{\textbf{-6.9e-03}} & \epscell{\textbf{-5.4e-03}} & \epscell{\textbf{1.94}} & \epscell{\textbf{3.18}} \\
            \midrule
            \multicolumn{10}{c}{\textit{ImageNet $64{\times}64$}} \\
            \midrule
            \multirow{4}{*}{Average}
            & Palette (NFE=100) & 24.32 & 0.7673 & \underline{0.1124} & \underline{82.57} & \underline{-6.4e-03} & \underline{-4.4e-03} & \underline{4.35} & \underline{5.54} \\
            & Palette (NFE=1) & \textbf{26.73} & \textbf{0.8289} & 0.1215 & 104.38 & -6.1e-03 & 4.3e-03 & 5.35 & 9.37 \\
            & \epscell{EPS (NFE=1)} & \epscell{\underline{25.67}} & \epscell{\underline{0.8165}} & \epscell{0.1318} & \epscell{110.95} & \epscell{-5.6e-03} & \epscell{4.5e-03} & \epscell{5.91} & \epscell{9.96} \\
            & \epscell{EPS (NFE=100)} & \epscell{24.53} & \epscell{0.7712} & \epscell{\textbf{0.1103}} & \epscell{\textbf{81.46}} & \epscell{\textbf{-6.4e-03}} & \epscell{\textbf{-4.4e-03}} & \epscell{\textbf{4.27}} & \epscell{\textbf{5.48}} \\
            \midrule
            \multirow{4}{*}{Random inpaint}
            & Palette (NFE=100) & 24.09 & 0.7869 & 0.1011 & \underline{81.88} & \underline{-6.5e-03} & \underline{-4.4e-03} & \underline{4.16} & \underline{5.52} \\
            & Palette (NFE=1) & \underline{26.50} & \underline{0.8550} & \textbf{0.0926} & 91.79 & -6.4e-03 & 4.3e-04 & 5.16 & 8.90 \\
            & \epscell{EPS (NFE=1)} & \epscell{\textbf{26.60}} & \epscell{\textbf{0.8580}} & \epscell{\underline{0.0933}} & \epscell{88.59} & \epscell{-6.1e-03} & \epscell{-4.8e-04} & \epscell{4.98} & \epscell{8.32} \\
            & \epscell{EPS (NFE=100)} & \epscell{24.34} & \epscell{0.7948} & \epscell{0.0979} & \epscell{\textbf{79.60}} & \epscell{\textbf{-6.5e-03}} & \epscell{\textbf{-4.5e-03}} & \epscell{\textbf{4.04}} & \epscell{\textbf{5.41}} \\
            \midrule
            \multirow{4}{*}{Box inpaint}
            & Palette (NFE=100) & 21.12 & 0.7541 & \underline{0.1218} & \underline{92.73} & \underline{-6.1e-03} & \underline{-4.1e-03} & \underline{5.92} & \underline{5.93} \\
            & Palette (NFE=1) & \textbf{23.60} & \textbf{0.7926} & 0.1539 & 131.54 & -5.5e-03 & 8.7e-03 & 7.19 & 10.67 \\
            & \epscell{EPS (NFE=1)} & \epscell{\underline{23.60}} & \epscell{\underline{0.7908}} & \epscell{0.1514} & \epscell{129.11} & \epscell{-5.6e-03} & \epscell{7.3e-03} & \epscell{7.16} & \epscell{10.38} \\
            & \epscell{EPS (NFE=100)} & \epscell{21.24} & \epscell{0.7569} & \epscell{\textbf{0.1196}} & \epscell{\textbf{91.07}} & \epscell{\textbf{-6.1e-03}} & \epscell{\textbf{-4.2e-03}} & \epscell{\textbf{5.87}} & \epscell{\textbf{5.84}} \\
            \midrule
            \multirow{4}{*}{Super-res ($4{\times}$)}
            & Palette (NFE=100) & 20.24 & 0.5364 & \underline{0.2220} & \textbf{128.76} & \textbf{-5.9e-03} & \underline{-2.8e-03} & \textbf{6.50} & \textbf{7.33} \\
            & Palette (NFE=1) & \textbf{22.79} & \textbf{0.6538} & 0.2452 & 185.15 & -4.9e-03 & 1.9e-02 & 8.05 & 13.63 \\
            & \epscell{EPS (NFE=1)} & \epscell{\underline{22.78}} & \epscell{\underline{0.6530}} & \epscell{0.2455} & \epscell{182.92} & \epscell{-5.0e-03} & \epscell{2.0e-02} & \epscell{8.06} & \epscell{13.47} \\
            & \epscell{EPS (NFE=100)} & \epscell{20.25} & \epscell{0.5369} & \epscell{\textbf{0.2207}} & \epscell{\underline{128.80}} & \epscell{\underline{-5.9e-03}} & \epscell{\textbf{-2.8e-03}} & \epscell{\underline{6.52}} & \epscell{\underline{7.35}} \\
            \midrule
            \multirow{4}{*}{Gaussian deblur}
            & Palette (NFE=100) & 29.15 & 0.9010 & 0.0491 & 46.62 & \underline{-6.8e-03} & \textbf{-5.6e-03} & \underline{2.26} & \textbf{4.11} \\
            & Palette (NFE=1) & \textbf{31.73} & \textbf{0.9407} & \textbf{0.0441} & \textbf{43.40} & \textbf{-6.8e-03} & -4.2e-03 & 2.74 & 6.15 \\
            & \epscell{EPS (NFE=1)} & \epscell{28.82} & \epscell{\underline{0.9194}} & \epscell{0.0606} & \epscell{56.53} & \epscell{-5.1e-03} & \epscell{-3.5e-03} & \epscell{4.09} & \epscell{7.73} \\
            & \epscell{EPS (NFE=100)} & \epscell{\underline{29.18}} & \epscell{0.9015} & \epscell{\underline{0.0486}} & \epscell{\underline{46.55}} & \epscell{-6.8e-03} & \epscell{\underline{-5.6e-03}} & \epscell{\textbf{2.25}} & \epscell{\underline{4.11}} \\
            \midrule
            \multirow{4}{*}{Motion deblur}
            & Palette (NFE=100) & 27.02 & 0.8582 & \underline{0.0680} & \underline{62.86} & \underline{-6.7e-03} & \underline{-5.0e-03} & \underline{2.93} & \underline{4.81} \\
            & Palette (NFE=1) & \textbf{29.04} & \textbf{0.9023} & 0.0715 & 70.01 & -6.7e-03 & -2.3e-03 & 3.63 & 7.51 \\
            & \epscell{EPS (NFE=1)} & \epscell{26.56} & \epscell{0.8613} & \epscell{0.1079} & \epscell{97.59} & \epscell{-6.0e-03} & \epscell{-7.3e-04} & \epscell{5.25} & \epscell{9.91} \\
            & \epscell{EPS (NFE=100)} & \epscell{\underline{27.62}} & \epscell{\underline{0.8661}} & \epscell{\textbf{0.0647}} & \epscell{\textbf{61.29}} & \epscell{\textbf{-6.8e-03}} & \epscell{\textbf{-5.1e-03}} & \epscell{\textbf{2.69}} & \epscell{\textbf{4.70}} \\
            \bottomrule
        \end{tabular}%
    }
\end{table*}

\clearpage
\subsection{Runtime Analysis}
\label{app:additional-experiments:runtime}
We measure single-image wall-clock sampling latency on ImageNet $64{\times}64$ at NFE$=$100 with batch size 1. All methods use the same EDM-ADM~\cite{karras2022elucidating} denoiser checkpoint (\texttt{edm-imagenet-64x64-cond-adm.pkl}, $\sim$296M parameters) on a single NVIDIA B200 GPU, with class labels set to the evaluation-set ground-truth ImageNet-1k classes. Each cell in Table~\ref{tab:runtime} is the mean of five independent sampling runs after two warm-up runs that amortise CUDA kernel JIT and cuDNN auto-tuning. All methods use the Euler ODE schedule (\texttt{second\_order=False}) so NFE equals the number of sampler steps; for DAPS we set $\text{annealing\_steps}{=}20$ and $\text{ode\_steps}{=}5$ so total NFE matches.

The dominant cost in all methods is the U-Net forward, and for DPS and $\Pi$GDM the U-Net backward as well. EDM (uncond.) runs the bare pretrained denoiser with no measurement-aware updates and is task-independent. DPS and $\Pi$GDM require a backward pass per step (likelihood gradient / Jacobian-vector product), $\sim 2.2\times$ the EDM unconditional cost. DAPS runs a nested ODE rollout plus Langevin correction at each annealing step, $\sim 2.0\times$ unconditional cost. DDNM and MPGD ($\sim 1.1\times$) add only a closed-form nullspace or manifold projection on top of a single denoiser forward. EPS is the fastest sampler in the comparison: at batch 1 it runs $\sim 0.8\times$ the wall-clock of EDM unconditional. EPS's freshly constructed preconditioning wrapper avoids deserialisation overhead present in the pretrained EDMPrecond pickle, while the structured solve for $\mu_\star$ (FFT for deblurring; element-wise for inpainting and super-resolution) is sub-millisecond. EPS runtime is essentially task-independent: all five tasks land within $\pm 0.04$~s of the average. Palette shares EPS's architecture and per-step forward cost, so its runtime is well approximated by the EPS row. At larger batch sizes the per-image gap closes (at batch 8, EPS is only $\sim 3.5\%$ slower than EDM unconditional), so EPS is the right pick for low-latency single-image inference and is essentially free vs.\ EDM unconditional in throughput terms.

\begin{table}[ht!]
\centering
\scriptsize
\setlength{\tabcolsep}{4pt}
\caption{\textbf{EPS matches the bare denoiser in wall-clock cost.} Per-image sampling latency (s, lower is better) on ImageNet-64 at NFE$=$100, batch size 1, on a single B200 GPU. Among methods that solve the inverse problem, EPS is fastest on every task and on average, at essentially the same cost as the bare EDM unconditional sampler ($1.006\times$). Sampling-based baselines that require a backward pass (DPS, $\Pi$GDM) are $\sim 2.3\times$ slower; nested-rollout methods (DAPS) are $\sim 2.2\times$ slower. \dag\ EDM (uncond.) is shown as a reference: it runs the bare pretrained denoiser without any measurement-aware update, so it does not actually solve the inverse problem.}
\label{tab:runtime}
\resizebox{\textwidth}{!}{%
\begin{tabular}{lccccccc}
\toprule
Method & Inpaint (Random) & Inpaint (Box) & Super-res ($4{\times}$) & Gaussian deblur & Motion deblur & Avg & $\times$ EDM \\
\midrule
\textit{EDM (uncond.)\dag} & \textit{1.87} & \textit{1.87} & \textit{1.87} & \textit{1.87} & \textit{1.87} & \textit{1.87} & \textit{$1.00\times$} \\
\midrule
DPS & 4.61 & 4.17 & 4.05 & 4.14 & 4.42 & 4.28 & $2.29\times$ \\
DAPS & 3.98 & 4.02 & 3.99 & 4.19 & 4.20 & 4.07 & $2.18\times$ \\
DDNM & 2.52 & 2.50 & 2.49 & 2.51 & 2.51 & 2.51 & $1.34\times$ \\
$\Pi$GDM & 4.55 & 4.55 & 4.53 & 4.54 & 4.55 & 4.54 & $2.43\times$ \\
MPGD & 2.51 & 2.51 & 2.52 & 2.53 & 2.54 & 2.52 & $1.35\times$ \\
EPS (ours) & \textbf{1.87} & \textbf{1.87} & \textbf{1.88} & \textbf{1.89} & \textbf{1.89} & \textbf{1.88} & $\mathbf{1.006\times}$ \\
\bottomrule
\end{tabular}%
}
\end{table}

\clearpage
\subsection{Posterior diversity.} Figures~\ref{fig:diversity:ffhq} and~\ref{fig:diversity:imagenet} show four EPS reconstructions of the same observation drawn with independent latent seeds, alongside the ground truth, on box inpainting and $4{\times}$ super-resolution. Samples agree on observed structure while differing in the unobserved directions (skin texture, hair detail, background, occluded foreground content) - the qualitative signature of a calibrated posterior under a non-trivial operator nullspace.

\begin{figure*}[ht!]
\centering
\includegraphics[width=\textwidth]{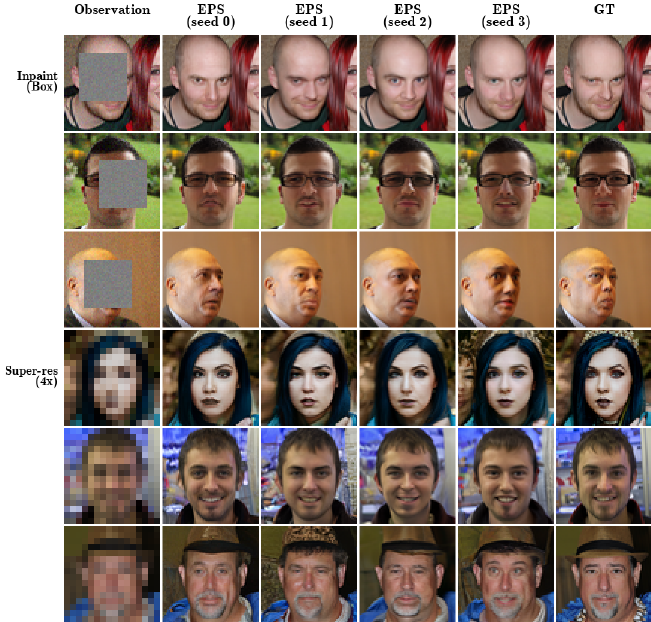}
\caption{\textbf{Posterior diversity from EPS on FFHQ-64.} Four reconstructions per observation drawn with independent latent seeds, alongside the ground truth, on box inpainting and $4{\times}$ super-resolution. Samples agree on observed structure while differing in unobserved directions (skin texture, hair detail, background) — the qualitative signature of a calibrated posterior under a non-trivial operator nullspace.}
\label{fig:diversity:ffhq}
\end{figure*}

\begin{figure*}[ht!]
\centering
\includegraphics[width=\textwidth]{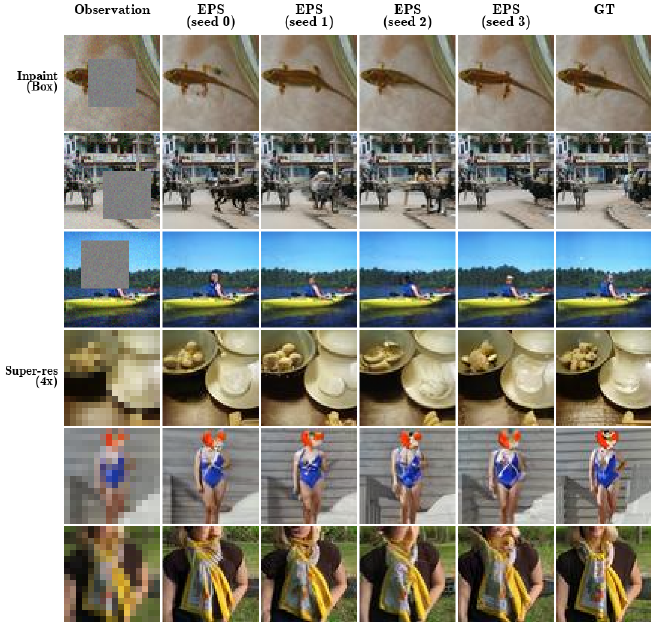}
\caption{\textbf{Posterior diversity from EPS on ImageNet-64.} Same layout as Fig.~\ref{fig:diversity:ffhq}. Diversity is concentrated in the operator nullspace: occluded foreground content varies under box inpainting, while sharp high-frequency detail varies under $4{\times}$ super-resolution.}
\label{fig:diversity:imagenet}
\end{figure*}

\clearpage
\subsection{Sampling Budget}
\label{app:additional-experiments:nfe-qualitative}
Figure~\ref{fig:nfe-qualitative} compares EPS reconstructions at NFE$=$1, 20, and 100 on the same observation across all five tasks, on both ImageNet-64 (left) and FFHQ-64 (right). The NFE$=$1 column is the deterministic high-noise posterior-mean limit (Section~\ref{sec:method:one-step}), which is MMSE-optimal in pixel space and yields the highest per-image PSNR. The NFE$=$20 and NFE$=$100 columns target posterior samples and trade pointwise fidelity for distributional sharpness, in line with the perception-distortion pattern visible in Table~\ref{tab:onestep}.

\begin{figure*}[ht!]
\centering
\includegraphics[width=\textwidth]{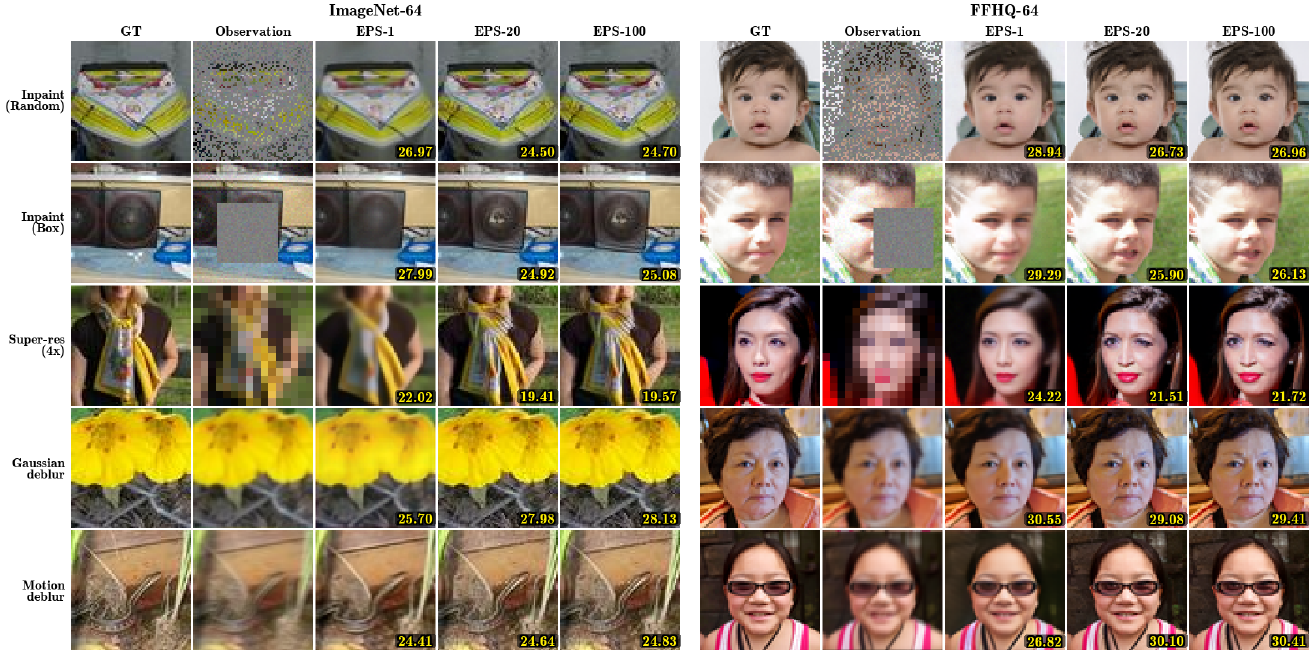}
\caption{\textbf{EPS reconstructions at varying sampling budgets.} ImageNet-64 (left) and FFHQ-64 (right) across the five inverse problems. For each observation, columns show NFE$=$1, 20, and 100. The NFE$=$1 column is the deterministic high-noise posterior-mean limit (Section~\ref{sec:method:one-step}), MMSE-optimal in pixel space; NFE$=$20 and NFE$=$100 target posterior samples and trade pointwise fidelity for distributional sharpness.}
\label{fig:nfe-qualitative}
\end{figure*}

\clearpage
\section{Baseline Configurations}
\label{app:baselines}
Table~\ref{tab:baseline-configs} reports per-task hyperparameters for the sampling-based and training-based baselines (DPS, DAPS, DDNM, $\Pi$GDM, MPGD, Palette), including NFE, step size or guidance scale, noise schedule, projection or correction rule, and additional notes. Each baseline is tuned on a disjoint validation split, and the values reported are those used in the main and appendix tables.

\begin{table}[ht!]
\centering
\scriptsize
\caption{\textbf{Baseline hyperparameter configurations.} Per-task
hyperparameters for the sampling-based (DPS, DAPS, DDNM, $\Pi$GDM,
MPGD) and training-based (Palette) baselines. NFE is the number of
denoiser forward passes per image (DAPS NFE${=}\text{annealing}{\times}
\text{ode}$ Euler steps). All samplers use the EDM VE noise
schedule with $\sigma_{\max}{=}80$, $\sigma_{\min}{=}0.002$, $\rho{=}7$
unless noted. Observation noise is $\sigma_y{=}0.05$ for every cell.
Identical settings are used on ImageNet-64 and FFHQ-64; only the
underlying denoiser changes.}
\label{tab:baseline-configs}
\resizebox{\textwidth}{!}{
\begin{tabular}{lllcccc}
\toprule
Task & Method & NFE & Step size / guidance & Noise schedule & Projection / correction & Notes \\
\midrule
Random inpaint & DPS              & 250  & $\zeta'{=}5.0$  & EDM VE                                   & likelihood grad, $\zeta'/\Vert r\Vert$ & Euler, $S_{\text{churn}}{=}0$ \\
Random inpaint & DAPS             & 500  & mcmc $5{\times}10^{-4}$ & EDM VE                          & nested ODE + Langevin                  & annealing${=}100$, ode${=}5$, mcmc${=}100$, Euler \\
Random inpaint & DDNM             & 100  & ---             & EDM VE                                   & null-space $\lambda_t{=}\sigma_t^2/(\sigma_t^2{+}\sigma_y^2)$ & DDIM-VE, $\eta{=}0.85$ \\
Random inpaint & $\Pi$GDM         & 100  & $\zeta{=}1.0$   & EDM VE                                   & VJP guidance                           & DDIM-VE, $\eta{=}1.0$ \\
Random inpaint & MPGD             & 100  & $\zeta{=}20.0$  & EDM VE                                   & manifold projection                    & DDIM-VE, $\eta{=}0.85$ \\
Random inpaint & Palette          & 100  & ---             & EDM VE                                   & ---                                    & training-based; per-task EPS backbone \\
\midrule
Box inpaint    & DPS              & 250  & $\zeta'{=}10.0$ & EDM VE                                   & likelihood grad, $\zeta'/\Vert r\Vert$ & Euler, $S_{\text{churn}}{=}0$ \\
Box inpaint    & DAPS             & 500  & mcmc $5{\times}10^{-4}$ & EDM VE                          & nested ODE + Langevin                  & annealing${=}100$, ode${=}5$, mcmc${=}100$, Euler \\
Box inpaint    & DDNM             & 100  & ---             & EDM VE                                   & null-space $\lambda_t{=}\sigma_t^2/(\sigma_t^2{+}\sigma_y^2)$ & DDIM-VE, $\eta{=}0.85$ \\
Box inpaint    & $\Pi$GDM         & 100  & $\zeta{=}1.0$   & EDM VE                                   & VJP guidance                           & DDIM-VE, $\eta{=}1.0$ \\
Box inpaint    & MPGD             & 100  & $\zeta{=}15.0$  & EDM VE                                   & manifold projection                    & DDIM-VE, $\eta{=}0.85$ \\
Box inpaint    & Palette          & 100  & ---             & EDM VE                                   & ---                                    & training-based; per-task EPS backbone \\
\midrule
Super-res ($4{\times}$) & DPS     & 250  & $\zeta'{=}10.0$ & EDM VE                                   & likelihood grad, $\zeta'/\Vert r\Vert$ & Euler, $S_{\text{churn}}{=}0$ \\
Super-res ($4{\times}$) & DAPS    & 500  & mcmc $9{\times}10^{-4}$ & EDM VE, $\sigma_{\max}{=}30$, $\sigma_{\min}{=}0.1$ & nested ODE + Langevin       & annealing${=}100$, ode${=}5$, mcmc${=}100$, Euler \\
Super-res ($4{\times}$) & DDNM    & 100  & ---             & EDM VE                                   & null-space $\lambda_t{=}\sigma_t^2/(\sigma_t^2{+}\sigma_y^2)$ & DDIM-VE, $\eta{=}0.85$, $A^{\dagger}{=}$nearest upsample \\
Super-res ($4{\times}$) & $\Pi$GDM & 100 & $\zeta{=}1.0$   & EDM VE                                   & VJP guidance                           & DDIM-VE, $\eta{=}1.0$ \\
Super-res ($4{\times}$) & MPGD    & 100  & $\zeta{=}30.0$  & EDM VE                                   & manifold projection                    & DDIM-VE, $\eta{=}0.85$ \\
Super-res ($4{\times}$) & Palette & 100  & ---             & EDM VE                                   & ---                                    & training-based; per-task EPS backbone \\
\midrule
Gaussian deblur & DPS             & 250  & $\zeta'{=}1.5$  & EDM VE                                   & likelihood grad, $\zeta'/\Vert r\Vert$ & Euler, $S_{\text{churn}}{=}0$ \\
Gaussian deblur & DAPS            & 500  & mcmc $9{\times}10^{-4}$ & EDM VE, $\sigma_{\max}{=}30$, $\sigma_{\min}{=}0.1$ & nested ODE + Langevin       & annealing${=}100$, ode${=}5$, mcmc${=}100$, Euler \\
Gaussian deblur & DDNM            & 100  & ---             & EDM VE                                   & per-freq Wiener correction             & DDIM-VE, $\eta{=}0.85$, Wiener $\epsilon{=}10^{-3}$ \\
Gaussian deblur & $\Pi$GDM        & 100  & $\zeta{=}1.0$   & EDM VE                                   & VJP guidance                           & DDIM-VE, $\eta{=}1.0$ \\
Gaussian deblur & MPGD            & 100  & $\zeta{=}5.0$   & EDM VE                                   & manifold projection                    & DDIM-VE, $\eta{=}0.85$ \\
Gaussian deblur & Palette         & 100  & ---             & EDM VE                                   & ---                                    & training-based; per-task EPS backbone \\
\midrule
Motion deblur  & DPS              & 250  & $\zeta'{=}2.5$  & EDM VE                                   & likelihood grad, $\zeta'/\Vert r\Vert$ & Euler, $S_{\text{churn}}{=}0$ \\
Motion deblur  & DAPS             & 500  & mcmc $5{\times}10^{-4}$ & EDM VE                          & nested ODE + Langevin                  & annealing${=}100$, ode${=}5$, mcmc${=}100$, Euler \\
Motion deblur  & DDNM             & 100  & ---             & EDM VE                                   & per-freq Wiener correction             & DDIM-VE, $\eta{=}0.85$, Wiener $\epsilon{=}10^{-3}$ \\
Motion deblur  & $\Pi$GDM         & 100  & $\zeta{=}1.0$   & EDM VE                                   & VJP guidance                           & DDIM-VE, $\eta{=}1.0$ \\
Motion deblur  & MPGD             & 100  & $\zeta{=}9.0$   & EDM VE                                   & manifold projection                    & DDIM-VE, $\eta{=}0.85$ \\
Motion deblur  & Palette          & 100  & ---             & EDM VE                                   & ---                                    & training-based; per-task EPS backbone \\
\bottomrule
\end{tabular}
}
\end{table}

\clearpage
\section{Metric Definitions}
\label{app:metrics}
\paragraph{CRPS.} For a scalar target $z$ and a predictive distribution with CDF $F$, $\mathrm{CRPS}(F, z) = \int_\R (F(u) - \mathbf{1}\{u \geq z\})^2 \dd u$. For our multivariate settings we report the average per-coordinate CRPS, in pixel space (CRPS-pixel) and in the Inception feature space used for FID (CRPS-inception), each averaged over the evaluation images and over the posterior samples drawn for each observation~\cite{gneiting2014}.

\paragraph{MMD.} With a Gaussian kernel $k(u, v) = \exp(-\|u - v\|^2 / (2\ell^2))$, we estimate $\mathrm{MMD}^2(P, Q)$ between the EPS sample distribution $P$ and the empirical distribution $Q$ of the evaluation ground truths using the unbiased U-statistic of \citet{gretton2012kernel}. We report MMD-pixel (kernel in pixel space) and MMD-inception (kernel in Inception feature space). The bandwidth $\ell$ is chosen via the median heuristic on the combined sample.

\stopcontents[appendices]

\end{document}